\newcommand{\ip}[2]{\left\langle #1, #2 \right \rangle}
\theoremstyle{plain}
\newtheorem{theorem}{Theorem}[section]
\newtheorem*{theorem*}{Theorem}
\newtheorem{proposition}[theorem]{Proposition}
\newtheorem{lemma}[theorem]{Lemma}
\newtheorem*{lemma*}{Lemma}
\newtheorem{corollary}[theorem]{Corollary}
\theoremstyle{definition}
\newtheorem{definition}[theorem]{Definition}
\theoremstyle{remark}
\title{Langevin Unlearning: A New Perspective of Noisy Gradient Descent for Machine Unlearning}
\author{%
    Eli Chien\\
    Department of Electrical and Computer Engineering\\
    Georgia Institute of Technology\\
    Georgia, U.S.A. \\
  \texttt{ichien6@gatech.edu} \\
  % examples of more authors
  \And
  Haoyu Wang \\
  Department of Electrical and Computer Engineering\\
  Georgia Institute of Technology\\
  Georgia, U.S.A. \\
  \texttt{haoyu.wang@gatech.edu} \\
  \And
  Ziang Chen \\
  Department of Mathematics\\
  Massachusetts Institute of Technology\\
  Massachusetts, U.S.A. \\
  \texttt{ziang@mit.edu} \\
  \And
  Pan Li \\
  Department of Electrical and Computer Engineering\\
  Georgia Institute of Technology\\
  Georgia, U.S.A. \\
  \texttt{panli@gatech.edu} \\
  % \AND
  % Coauthor \\
  % Affiliation \\
  % Address \\
  % \texttt{email} \\
  % \And
  % Coauthor \\
  % Affiliation \\
  % Address \\
  % \texttt{email} \\
  % \And
  % Coauthor \\
  % Affiliation \\
  % Address \\
  % \texttt{email} \\
}
\begin{document}

\maketitle

\begin{abstract}
  Machine unlearning has raised significant interest with the adoption of laws ensuring the ``right to be forgotten''. Researchers have provided a probabilistic notion of approximate unlearning under a similar definition of Differential Privacy (DP), where privacy is defined as statistical indistinguishability to retraining from scratch. We propose Langevin unlearning, an unlearning framework based on noisy gradient descent with privacy guarantees for approximate unlearning problems. Langevin unlearning unifies the DP learning process and the privacy-certified unlearning process with many algorithmic benefits. These include approximate certified unlearning for non-convex problems, complexity saving compared to retraining, sequential and batch unlearning for multiple unlearning requests.
\end{abstract}

\section{Introduction}\label{sec:intro}

% \pan{noiseless retraining}

With recent demands for increased data privacy, owners of these machine learning models are responsible for fulfilling data removal requests from users. Certain laws are already in place guaranteeing the users' ``Right to be Forgotten'', including the European Union’s General Data Protection Regulation (GDPR), the California Consumer Privacy Act (CCPA), and the Canadian Consumer Privacy Protection Act (CPPA)~\cite{sekhari2021remember}. Merely removing user data from the training data set is insufficient, as machine learning models are known to memorize training data information~\cite{carlini2019secret}. It is critical to also remove the information of user data subject to removal requests from the machine learning models. This consideration gave rise to an important research direction, referred to as \emph{machine unlearning}~\cite{cao2015towards}.

Naively, one may retrain the model from scratch after every data removal request to ensure a ``perfect'' privacy guarantee. However, it is prohibitively expensive in practice when accommodating frequent removal requests. To avoid complete retraining, various machine unlearning methods have been proposed, including exact~\cite{bourtoule2021machine,ullah2021machine,ullah2023adaptive} as well as approximate approaches~\cite{guo2020certified,sekhari2021remember,neel2021descent,gupta2021adaptive,chien2022efficient}. Exact approaches ensure that the unlearned model would be identical to the retraining one in distribution. Approximate approaches, on the other hand, allow for slight misalignment between the unlearned model and the retraining one in distribution under a similar definition to Differential Privacy (DP)~\cite{dwork2006calibrating}. %\pan{one sentence on intuitive explanation of the unlearning guarantee needed}

\subsection{Our Contributions}
\begin{figure*}[t]
    \centering
    \includegraphics[trim={0cm 7cm 0cm 4.4cm},clip,width=0.85\linewidth]{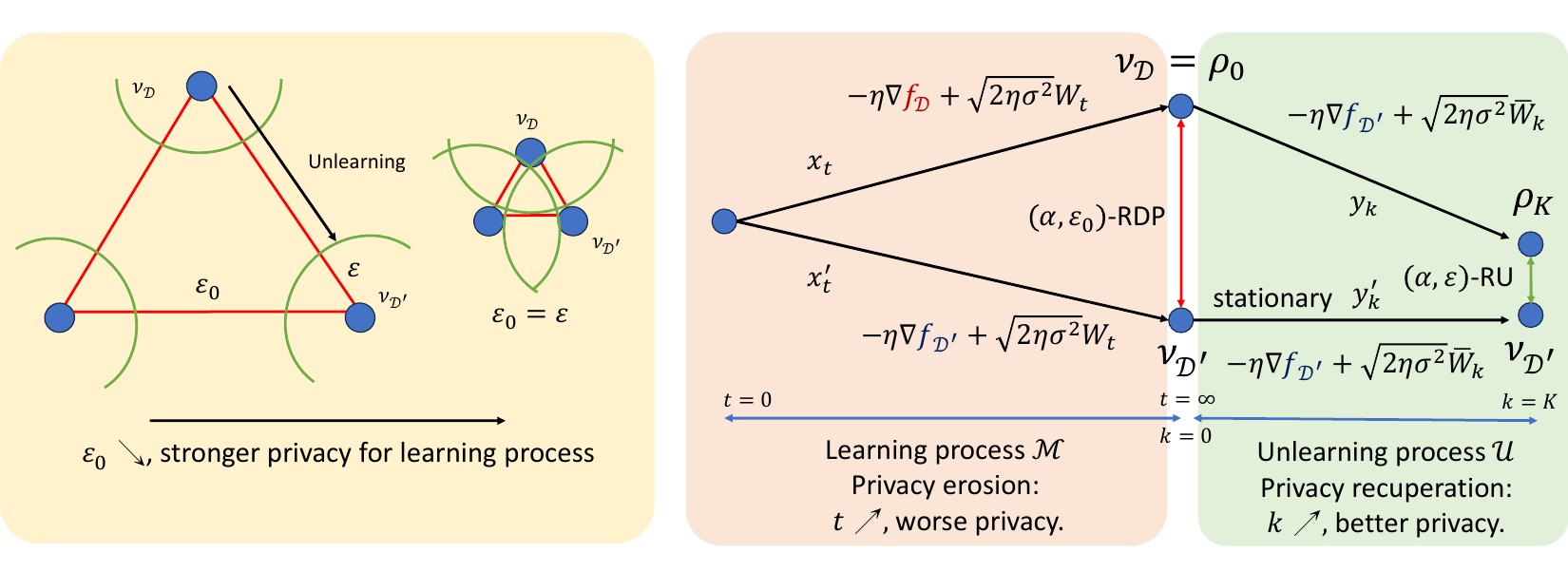}
    \caption{The geometric interpretation of relations between learning and unlearning. (Left) RDP guarantee of the learning process induces a regular polyhedron. Smaller $\varepsilon_0$ implies an ``easier'' unlearning problem. (Right) Learning and unlearning processes on adjacent datasets. It illustrates our main idea and results. More learning iteration gives worse privacy (\textbf{privacy erosion}~\cite{chourasia2021differential}) while more unlearning iteration gives better privacy, which we termed this phenomenon as \textbf{privacy recuperation}.}
    \label{fig:general_idea}
    \vspace{-0.5cm}
\end{figure*}

Learning with noisy gradient methods, such as DP-SGD~\cite{abadi2016deep}, is widely adopted for privatizing machine learning models with DP guarantee. Intuitively, a learning process with a stronger DP guarantee implies an ``easier'' unlearning problem as depicted in Figure~\ref{fig:general_idea}. However, it is unclear if fine-tuning with it on the updated dataset subject to the unlearning request provides an approximate unlearning guarantee, how the DP learning guarantee affects unlearning, and computational benefit compared to retraining. In this work, we provide an affirmative answer for the empirical risk minimization problems with smooth objectives. We propose Langevin unlearning, an approximate unlearning framework based on projected noisy gradient descent (PNGD). Our core idea can be interpreted via a novel unified geometric view of the learning and unlearning processes in Figure~\ref{fig:general_idea}, which naturally bridges DP and unlearning. Given sufficient learning iterations via the learning process $\mathcal{M}$, we first show that PNGD converges to a \emph{unique} stationary distribution $\nu_{\mathcal{D}}$ for any dataset $\mathcal{D}$ (Theorem~\ref{thm:nu_eta}). Comparing $\nu_{\mathcal{D}}$ with the stationary distribution $\nu_{\mathcal{D}^\prime}$ for any of its adjacent dataset $\mathcal{D}^\prime$, the learning process shows R\'enyi DP with privacy loss\footnote{We refer privacy loss as two-sided R\'enyi divergence of two distributions, which is defined as R\'enyi difference in Definition~\ref{def:R_diff}. Note that it is standard to define privacy loss for unlearning as the two-sided R\'enyi divergence between the weight distribution of the unlearned model and the retrained model. Such a definition directly relates to the power of the strongest possible adversary for distinguishing these two cases~\cite{kairouz2015composition}.} $\varepsilon_0$. Given a particular unlearning request $\mathcal{D} \rightarrow \mathcal{D}^\prime$, the unlearning process $\mathcal{U}$ can be interpreted as moving from $\nu_{\mathcal{D}}$ to $\nu_{\mathcal{D}^\prime}$ from $\varepsilon_0$-close to $\varepsilon$-close. In practice, due to the unlearning process, the unlearning privacy loss $\varepsilon$ can be set much smaller than  $\varepsilon_0$, while on the other hand, a stronger initial RDP guarantee, i.e., smaller $\varepsilon_0$, allows for less unlearning iterations to achieve the desired $\varepsilon$. Besides the above DP-unlearning bridge, this framework also brings many benefits including (1) a capability of dealing with non-convex problems in theory, which to the best of our knowledge, no previous approximate unlearning framework can tackle, (2) better privacy-utility trade-off in practice compared to state-of-the-art approximate unlearning approach~\cite{neel2021descent} in strongly convex settings, (3) a provably computational benefit compared with model retraining, and (4) a friendly extension to sequential and batch settings with multiple unlearning requests.

We prove the intuition in Fig.~\ref{fig:general_idea} formally in Theorem~\ref{thm:NGD_unlearning_no_sc}. We show that $K$ unlearning iterations lead to an \emph{exponentially fast} privacy loss decay $\varepsilon \leq \exp(-\frac{1}{\alpha}\sum_{k=0}^{K-1} R_k)\varepsilon_0$, where $\alpha$ is the order of R\'enyi divergence and $R_k$ is the strict privacy improving rate depends on the problem settings with an iteration independent strictly positive lower bound $\bar{R}>0$. 
Our result is based on convergence analysis of Langevin dynamics~\cite{vempala2019rapid}. The sampling essence of PNGD allows for a provable unlearning guarantee for non-convex problems~\cite{ma2019sampling,lamperski2021projected}. Our characterization of $\varepsilon_0$ allows an extension of the recent results that PNGD learning satisfies R\'enyi DP for convex problems~\cite{chourasia2021differential,ye2022differentially,altschuler2022privacy} to non-convex problems as summarized in Theorem~\ref{thm:NGD_learning_no_sc}.  
 Our key technique is to carefully track the constant of log-Sobolev inequality~\cite{gross1975logarithmic} (LSI) along the learning and unlearning processes and leverage the boundedness property of the projection step via results of~\cite{chen2021dimension}.

Regarding the computational benefit compared to model retraining, we show iteration complexity saving by comparing two R\'enyi differences, the one between initialization $\nu_0$ and $\nu_{\mathcal{D}^\prime}$, which is at least $\Omega(1)$ in the worst case versus the other one between the learning convergent distribution $\nu_{\mathcal{D}}$ and $\nu_{\mathcal{D}^\prime}$, i.e., $\varepsilon_0$ which is shown to be $O(1/n^2)$ for a dataset of size $n$. Such a gap demonstrates that Langevin unlearning is more efficient than retraining, especially for the dataset with large $n$. 

For sequential unlearning with multiple unlearning requests, we composite the privacy loss bound for single-step requests via the weak triangle inequality of R\'enyi divergence~\cite{mironov2017renyi}, which yields a sequential unlearning procedure that achieves privacy loss $\varepsilon$ for each request (Corollary~\ref{cor:SLU}). For batch unlearning, $\varepsilon_0$ is changed to incorporate the batch size (Theorem~\ref{thm:NGD_learning_no_sc}). 

Beyond theoretical contributions, we also conduct empirical evaluation. Despite the provable order-wise improvement in $n$ compared to re-training, our current theory has a limitation by relying on some constants that are undetermined or can be only loosely determined in the non-convex setting. Therefore, we focus on logistic regression tasks for empirical evaluation. Compared with the state-of-the-art gradient-based certified approximate unlearning solution~\cite{neel2021descent} that requires strong convexity, we achieve a superior privacy-utility-complexity trade-off. Although success in the convex case may not directly imply success in non-convex settings, we leave tightening these constants as a future study. For this, we discuss potential alleviation and future direction in Appendix~\ref{apx:limit} and~\ref{sec:discuss}, respectively.

% %. However, given it as the first work to provide theoretical guidance for non-convex approximate unlearning,
% \textcolor{blue}{We also conduct empirical evaluation of the proposed framework %experiments to verify the trade-off between utility, privacy, and complexity, 
% based on logistic regression tasks. The empirical study focuses on the convexity setting as our non-convex unlearning bound (e.g. Theorem~\ref{thm:NGD_unlearning_no_sc}) is currently not tight enough (constant) to be applied in practice, which is one limitation of our work that we further discussed potential alleviation and future direction in Appendix~\ref{apx:limit} and~\ref{sec:discuss} respectively. Nevertheless, we emphasize our significant theoretical implication due to the lack of a certified non-convex approximate unlearning framework in previous studies~\cite{neel2021descent}. Compared with the state-of-the-art gradient-based certified approximate unlearning solution~\cite{neel2021descent} that requires strong convexity, we achieve a superior privacy-utility trade-off under the same unlearning complexity. 
% }

\subsection{Related Works}
\textbf{Unlearning with privacy guarantees. }Prior approximate unlearning works require (strong) convexity of the objective function~\cite{guo2020certified,sekhari2021remember,neel2021descent}. Their analysis is based on the sensitivity analysis of the \emph{optimal parameter}. Since the optimal parameter is not even unique in the non-convex setting, it is unclear how their analysis can be generalized beyond convexity. In contrast, we show that the law of our PNGD learning process admits a \emph{unique} stationary distribution even for \emph{non-convex} problems.Authors of~\cite{guo2020certified,sekhari2021remember} leverage a second-order update which requires computing Hessian inverse and thus is not scalable for high-dimensional problems. While they only require one unlearning iteration, we show in our experiment that one PNGD unlearning iteration is sufficient for strongly convex loss to achieve satisfied privacy with comparable utility to retraining as well. Neel et al.~\cite{neel2021descent} leverage PGD for learning and unlearning, and achieve the privacy guarantee via publishing the final parameters with additive Gaussian noise. We show in our experiment that our Langevin unlearning strategy provides a better privacy-utility-complexity trade-off compared to this approach. Ullah et al.~\cite{ullah2021machine} focus on exact unlearning by leveraging variants of noisy (S)GD. Their analysis is based on total variation stability which is different from our analysis based on R\'enyi divergence. Also, their analysis does not directly generalize to approximate unlearning. Several works focus on extending the unlearning problems for adaptive unlearning requests~\cite{gupta2021adaptive,ullah2023adaptive,chourasia2023forget}. While we focus on the non-adaptive setting, it is possible to show that Langevin unlearning is also capable of adaptive unlearning requests as we do not keep any non-private internal state. We left a rigorous discussion on this as future work. Chourasia et al.~\cite{chourasia2023forget} also leverage Langevin dynamic analysis in their work. However, their unlearning definition is different from the standard literature as ours\footnote{Their unlearning privacy definition does not compare with retraining and they only discuss \emph{one-side} R\'enyi divergence. As a result, their unlearning guarantee is less compatible with DP and cannot control both type I and II errors simultaneously against the best possible adversary~\cite{kairouz2015composition}.}.

\textbf{Differential privacy of noisy gradient methods. }A pioneer work~\cite{ganesh2020faster} studied the DP properties of Langevin Monte Carlo methods. Yet, they do not propose using noisy GD for general machine learning problems. A recent line of work~\cite{chourasia2021differential,ye2022differentially,ryffel2022differential} shows that projected noisy (S)GD training exhibits DP guarantees based on the analysis of Langevin dynamics~\cite{vempala2019rapid,chewi2023logconcave} under the strong convexity assumption. In the meanwhile, Altschuler et al.~\cite{altschuler2022privacy} also provided the DP guarantees for projected noisy SGD training but with analysis based on Privacy Amplification by Iteration~\cite{feldman2018privacy} under the convexity assumption. None of these works study how PNGD can be leveraged for machine unlearning or DP guarantees for non-convex problems.

\textbf{Sampling literature. }Non-asymptotic convergence analysis for Langevin Monte Carlo has a long history~\cite{dalalyan2012sparse,durmus2017nonasymptotic}. The seminal works~\cite{vempala2019rapid,ganesh2020faster} proved non-asymptotic convergence analysis in R\'enyi divergence under strong convexity. Many works improve upon them by either working with weaker isoperimetric inequalities or different notions of convergence~\cite{erdogdu2022convergence,mousavi2023towards}. See~\cite{chewi2023logconcave} for a more thorough review along this direction. While these works mainly focus on convergence to the unbiased limit (i.e., the limiting distribution for an infinitesimal step size), we have biased limits (i.e., the limiting distribution for a constant step size, such as our $\nu_{\mathcal{D}}$) in machine unlearning problems. Recently Altschuler et al.~\cite{altschuler2022resolving} initiated the question of studying the properties and convergence to the biased limit. Our work provides a new important application, machine unlearning, for these astonishing theoretical results in the sampling literature.

The rest of the paper is organized as follows. In Section~\ref{sec:prelim}, we provide preliminaries and problem setup. The theoretical results of Langevin unlearning are in Section~\ref{sec:main_results}. We conclude with experiments in Section~\ref{sec:exp}. Due to the space limit, all proofs and future directions are deferred to Appendices. 

\section{Preliminaries and Problem Setup}\label{sec:prelim}

We consider the empirical risk minimization (ERM) problem. Let $\mathcal{D}=\{\mathbf{d}_i\}_{i=1}^n$ be a training dataset with $n$ data points $\mathbf{d}_i$ taken from the universe $\mathcal{X}$. Let $f_{\mathcal{D}}(x) = \frac{1}{n}\sum_{i=1}^n f(x;\mathbf{d}_i)$ be the objective function. We aim to minimize with learnable parameter $x\in \mathcal{C}_R$, where $\mathcal{C}_R=\{x\in \mathbb{R}^d\;|\;\|x\|\leq R\}$ is a closed ball of radius $R$. We denote $\Pi_{\mathcal{C}_R}:\mathbb{R}^d\mapsto \mathcal{C}_R$ to be an orthogonal projection to $\mathcal{C}_R$. The norm $\|\cdot\|$ is standard Euclidean $\ell_2$ norm if not specified. $\mathcal{P}(\mathcal{C})$ is denoted as the set of all probability measures over a closed convex set $\mathcal{C}$. Standard definitions such as convexity can be found in Appendix~\ref{apx:standard_def}. Finally, we use $x\sim \nu$ to denote that a random variable $x$ follows the probability distribution $\nu$. To control the convergence behavior of  (P)NGD, it is standard to check an isoperimetric condition known as log-Sobolev inequality~\cite{gross1975logarithmic}, described as follows.

\begin{definition}[Log-Sobolev Inequality ($C_{\text{LSI}}$-LSI)]\label{def:LSI}
    A probability measure $\nu\in \mathcal{P}(\mathbb{R}^d)$ is said to satisfy Logarithmic Sobolev Inequality with constant $C_{\text{LSI}}$ if 
    \begin{align*}
        \forall~\rho\in \mathcal{P}(\mathbb{R}^d),~~ D_{1}(\rho||\nu) \leq \frac{C_{\text{LSI}}}{2} \mathbb{E}_{x\sim \rho}\left\|\nabla\log\frac{\rho(x)}{\nu(x)}\right\|^2,
    \end{align*}
    where $D_{1}(\rho||\nu)$ is the Kullback–Leibler divergence.
\end{definition}

\subsection{Privacy Definition for Learning and Unlearning}
We say two datasets $\mathcal{D}=\{\mathbf{d}_i\}_{i=1}^n$ and $\mathcal{D}^\prime=\{\mathbf{d}_i^\prime\}_{i=1}^n$ are adjacent if they ``differ'' only in one index $i_0\in[n]$ so that $\mathbf{d}_i = \mathbf{d}_i^\prime$ for all $i\neq i_0$ unless otherwise specified. Furthermore, we say two datasets $\mathcal{D}$ and $\mathcal{D}^\prime$ are adjacent with a group size of $S\geq 1$ if they differ in at most $S$ indices. We next introduce a useful idea termed R\'enyi difference. 

\begin{definition}[R\'enyi difference]\label{def:R_diff}
    Let $\alpha>1$. For a pair of probability measures $\nu,\nu^\prime$ with the same support, the $\alpha$ R\'enyi difference $d_\alpha(\nu,\nu^\prime)$ is defined as $d_\alpha(\nu,\nu^\prime) = \max\left(D_\alpha(\nu||\nu^\prime),D_\alpha(\nu^\prime||\nu)\right),$
    % \begin{align}
    %     d_\alpha(\nu,\nu^\prime) = \max\left(D_\alpha(\nu||\nu^\prime),D_\alpha(\nu^\prime||\nu)\right),
    % \end{align}
    where $D_\alpha(\nu||\nu^\prime)$ is the $\alpha$ R\'enyi divergence $D_\alpha(\nu||\nu^\prime)$ defined as
    \begin{align*}
        D_\alpha(\nu||\nu^\prime) = \frac{1}{\alpha-1}\log\left( \mathbb{E}_{x\sim \nu^\prime} \left[\frac{\nu(x)}{\nu^\prime(x)}\right]^\alpha \right).
    \end{align*}
\end{definition}

We are ready to introduce the formal definition of differential privacy and unlearning.

\begin{definition}[R\'enyi Differential Privacy (RDP)~\cite{mironov2017renyi}]\label{def:RDP}
    Let $\alpha>1$. A randomized algorithm $\mathcal{M}:\mathcal{X}^n\mapsto \mathbb{R}^d$ satisfies $(\alpha,\varepsilon)$-RDP if for any adjacent dataset pair $\mathcal{D},\mathcal{D}^\prime \in \mathcal{X}^n$, the $\alpha$ R\'enyi difference $d_\alpha(\nu,\nu^\prime) \leq \varepsilon$, where $\mathcal{M}(\mathcal{D})\sim \nu$ and $\mathcal{M}(\mathcal{D}^\prime)~\sim\nu^\prime$.
\end{definition}

It is known to the literature that an $(\alpha,\varepsilon)$-RDP guarantee can be converted to the popular $(\epsilon,\delta)$-DP guarantee~\cite{dwork2006calibrating} relatively tight~\cite{mironov2017renyi}. As a result, we will focus on establishing results with respect to $\alpha$ R\'enyi difference (and equivalently $\alpha$ R\'enyi divergence). Next, we introduce our formal privacy definition of unlearning.

\begin{definition}[R\'enyi Unlearning (RU)]\label{def:RU}
    Consider a randomized learning algorithm $\mathcal{M}:\mathcal{X}^n\mapsto \mathbb{R}^d$ and a randomized unlearning algorithm $\mathcal{U}: \mathbb{R}^d\times \mathcal{X}^n\times \mathcal{X}^n \mapsto \mathbb{R}^d$. We say $(\mathcal{M},\mathcal{U})$ achieves $(\alpha,\varepsilon)$-RU if for any $\alpha>1$ and any adjacent datasets $\mathcal{D},\mathcal{D}^\prime$, the $\alpha$ R\'enyi difference $d_{\alpha}(\rho,\nu^\prime) \leq \varepsilon,$ where $\mathcal{U}(\mathcal{M}(\mathcal{D}),\mathcal{D},\mathcal{D}^\prime)\sim \rho$ and $\mathcal{M}(\mathcal{D}^\prime)\sim \nu^\prime$.
\end{definition}

Notably, our Definition~\ref{def:RU} can be converted to the standard $(\epsilon,\delta)$-unlearning definition~\cite{guo2020certified,sekhari2021remember,neel2021descent}, similar to RDP-DP conversion~\cite{mironov2017renyi}. Since we work with the replacement definition of dataset adjacency, to ``erase'' a data point $\mathbf{d}_i$ we can simply replace it with any data point $\mathbf{d}_i^\prime\in \mathcal{X}$ for the updated dataset $\mathcal{D}^\prime$ in practice. 

\section{Langevin Unlearning: Main Results}\label{sec:main_results}

We propose to leverage projected noisy gradient descent for our learning and unlearning algorithm $\mathcal{M}$ and $\mathcal{U}$. For $\mathcal{M}$, we propose to optimize the objective $f_{\mathcal{D}}(x)$ with PNGD:
\begin{align}\label{eq:GLD_learning}
    & x_{t+1} = \Pi_{\mathcal{C}_R}\left(x_t - \eta\nabla f_{\mathcal{D}}(x_t) + \sqrt{2\eta \sigma^2} W_t\right),
\end{align} 
where $W_t\stackrel{\text{iid}}{\sim}\mathcal{N}(0,I_d)$ and $\eta,\sigma^2>0$ are hyperparameters of step size and noise variance respectively. The initialization $x_0$ can be chosen arbitrarily in $\mathcal{C}_R$ unless specified. We assume the learning procedure will train the model until convergence $x_\infty = \mathcal{M}(\mathcal{D})$ for simplicity, where we prove in Theorem~\ref{thm:nu_eta} that the law of this learning process~\eqref{eq:GLD_learning} indeed converges to a unique stationary distribution when $\nabla f_{\mathcal{D}}$ is continuous. A similar ``well-trained'' assumption has been also used in prior unlearning literature~\cite{guo2020certified,sekhari2021remember} and we will discuss the case of insufficient training later. After we obtain a learned parameter $\mathcal{M}(\mathcal{D})$, an unlearning request arrives so that the training dataset changes from $\mathcal{D}$ to $\mathcal{D}^\prime$. For the unlearning algorithm $\mathcal{U}$, we propose to fine-tune the model parameters on the new objective $f_{\mathcal{D}^\prime}(y)$ with $K$ iterations of the same PNGD.
\begin{align}\label{eq:GLD_unlearning}
    & y_{k+1} = \Pi_{\mathcal{C}_R}\left(y_k - \eta\nabla f_{\mathcal{D}^\prime}(y_k) + \sqrt{2\eta \sigma^2} \bar{W}_k\right),
\end{align}
where $\bar{W}_k\stackrel{\text{iid}}{\sim}\mathcal{N}(0,I_d)$ and $y_0 = x_\infty$, which starts from the convergent point of the learning procedure. Throughout our work, we assume $f(x;\mathbf{d})$ is $M$-Lipschitz and $L$-smooth in $x$ for any $\mathbf{d}\in\mathcal{X}$. Nevertheless, one can apply per-sample gradient clipping in~\eqref{eq:GLD_learning} and~\eqref{eq:GLD_unlearning} so that the $M$-Lipschitz assumption can be dropped. In this case, our learning and unlearning processes admit the popular DP-SGD~\cite{abadi2016deep} without mini-batching. For the rest of the paper, we denote $\nu_t,\rho_k$ as the laws of the processes $x_t,y_k$ respectively. Recall that we also denote the limiting distribution of the learning process~\eqref{eq:GLD_learning} as $\nu_{\mathcal{D}}$ for training dataset $\mathcal{D}$. 
% Consider the auxiliary process $x_t^\prime\sim \nu_t^\prime$ that learns from the training dataset $\mathcal{D}^\prime$ is helpful. 

\subsection{Limiting Distribution and General Idea}\label{sec:general_idea}

A key component of the Langevin unlearning is the existence, uniqueness, and stationarity of the limiting distribution $\nu_{\mathcal{D}}$ of the training process. We start with proving that $\nu_{\mathcal{D}}$ exists, is unique, and is a stationary distribution. Our proof is relegated to Appendix~\ref{apx:nu_eta}, which is based on showing the ergodicity of the process~\eqref{eq:GLD_learning} by leveraging results in~\cite{meyn2012markov}. 

\begin{theorem}\label{thm:nu_eta}
    Suppose that the closed convex set $\mathcal{C}_R\subset \mathbb{R}^d$ is bounded with $\mathcal{C}_R$ having a positive Lebesgue measure and that $\nabla f_{\mathcal{D}}:\mathcal{C}_R\to\mathbb{R}^d$ is continuous. The Markov chain $\{x_t\}$ in~\eqref{eq:GLD_learning} admits a unique invariant probability measure $\nu_{\mathcal{D}}$ on the Borel $\sigma$-algebra of $\mathcal{C}_R$. Furthermore, for any $x\in\mathcal{C}_R$, the distribution of $x_t$ conditioned on $x_0=x$ converges weakly to $\nu_{\mathcal{D}}$ as $t\to \infty$.
\end{theorem}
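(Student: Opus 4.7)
The plan is to prove Theorem~\ref{thm:nu_eta} by establishing a uniform (Doeblin) minorization of the one-step transition kernel $P(x,\cdot)$ of the Markov chain~\eqref{eq:GLD_learning} on the compact state space $\mathcal{C}_R$. Such a minorization immediately gives both existence and uniqueness of an invariant probability measure and geometric convergence in total variation, from which the claimed weak convergence is automatic. This fits directly into the Meyn--Tweedie framework cited by the authors and avoids having to separately verify $\psi$-irreducibility, aperiodicity, and a drift condition.

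Concretely, for any $x\in\mathcal{C}_R$ let $z(x)=x-\eta\nabla f_{\mathcal{D}}(x)$. Since $\nabla f_{\mathcal{D}}$ is continuous on the compact set $\mathcal{C}_R$, the image $z(\mathcal{C}_R)\subset\mathbb{R}^d$ is compact, so the Gaussian density
$$\varphi_x(y) = (4\pi\eta\sigma^2)^{-d/2}\exp\bigl(-\|y-z(x)\|^2/(4\eta\sigma^2)\bigr)$$
is uniformly bounded below by a constant $c>0$ for all $(x,y)\in\mathcal{C}_R\times\mathcal{C}_R$, because the exponent is controlled by the finite diameter of $z(\mathcal{C}_R)\cup\mathcal{C}_R$. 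For any Borel $A\subset\mathcal{C}_R$, the projection step can only help land in $A$, since $\Pi_{\mathcal{C}_R}$ restricted to $A$ is the identity; hence
$$P(x,A) \;\geq\; \int_A \varphi_x(y)\,dy \;\geq\; c\,|A|$$
for every $x\in\mathcal{C}_R$, where $|A|$ denotes Lebesgue measure. Since $0<|\mathcal{C}_R|<\infty$ by the hypotheses, this is a Doeblin minorization of $P(x,\cdot)$ by a positive multiple of the uniform probability measure on $\mathcal{C}_R$, with the entire state space serving as a small set.

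The classical uniform ergodicity theorem (see, e.g., Meyn--Tweedie~\cite{meyn2012markov}) then yields a unique invariant probability measure $\nu_{\mathcal{D}}$ together with $\|P^t(x,\cdot)-\nu_{\mathcal{D}}\|_{\mathrm{TV}}\leq (1-c\,|\mathcal{C}_R|)^t$ from every initial point $x\in\mathcal{C}_R$, which certainly implies the claimed weak convergence. I expect the only real subtlety to be bookkeeping around the projection step, which puts a singular component of $P(x,\cdot)$ on $\partial\mathcal{C}_R$; this is harmless for a \emph{lower} bound, because we only need to control the absolutely continuous part of the kernel, and any Gaussian sample that already lies in $\mathcal{C}_R$ is unaffected by $\Pi_{\mathcal{C}_R}$. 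Beyond this the argument uses only continuity of $\nabla f_{\mathcal{D}}$ (and compactness of $\mathcal{C}_R$); no convexity, smoothness, or Lipschitz assumption on the gradient is needed for this particular statement.
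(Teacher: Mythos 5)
Your proposal is correct, and it takes a genuinely shorter route than the paper. The paper verifies the hypotheses of the general Meyn--Tweedie ergodic theory in stages: it first shows Lebesgue-irreducibility and recurrence, then strong aperiodicity, to get a unique invariant measure up to scaling (via Proposition 10.4.2 of \cite{meyn2012markov}); it then upgrades to Harris recurrence and shows $\mathcal{C}_R$ is a regular set, finally invoking Theorem 13.0.1 of \cite{meyn2012markov} to obtain the unique invariant \emph{probability} measure and weak convergence. Interestingly, the key inequality in your argument --- the uniform one-step minorization $P(x,A)\geq c\,\mathrm{Leb}(A)$ for all $x\in\mathcal{C}_R$ and Borel $A\subseteq\mathcal{C}_R$, obtained from the uniform positive lower bound on the Gaussian density over the compact set $z(\mathcal{C}_R)\cup\mathcal{C}_R$ together with the fact that $\Pi_{\mathcal{C}_R}$ is the identity on $A$ --- appears verbatim in the paper's proof, but only as the certificate of \emph{strong aperiodicity}. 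You observe that this single inequality is already a Doeblin condition with the whole state space as a small set, so the classical uniform ergodicity theorem (or the elementary argument that $P$ contracts total variation by the factor $1-c\,\mathrm{Leb}(\mathcal{C}_R)$, plus a fixed-point argument) delivers everything at once: existence, uniqueness, and geometric convergence in total variation, which is strictly stronger than the weak convergence claimed. Your handling of the singular boundary component introduced by the projection is also right --- it can only increase $P(x,A)$ relative to the absolutely continuous part when $A\subseteq\mathcal{C}_R$, and a lower bound is all that is needed. What your approach buys is brevity and a quantitative rate; what the paper's approach buys is essentially nothing extra here, since the state space is compact and the whole space is small, which is exactly the regime where the heavier Harris-recurrence machinery collapses to Doeblin's theorem.
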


If $\mathcal{M}$ is known to be $(\alpha,\varepsilon_0)$-RDP for a $\alpha>1$, by definition we know that for all adjacent dataset $\mathcal{D},\mathcal{D}^\prime$, $d_\alpha(\nu_{\mathcal{D}},\nu_{\mathcal{D}^\prime})\leq \varepsilon_0$. In the space of $\mathcal{P}(\mathcal{C}_R)$, this RDP guarantee gives a ``regular polyhedron'', where vertices are $\nu_{\mathcal{D}},\nu_{\mathcal{D}^\prime}$ and all adjacent vertices are of ``lengths'' $\varepsilon_0$ at most in R\'enyi difference. We caveat that R\'enyi difference is not a metric but the idea of the regular polyhedron is useful conceptually. As a result, the RDP guarantee of the learning process controls the ``distance'' between distribution induced from adjacent dataset $\mathcal{D}$ and $\mathcal{D}^\prime$. Once we finish the learning process, we receive an unlearning request so that our dataset changes from $\mathcal{D}$ to an adjacent dataset $\mathcal{D}^\prime$. We need to move from $\nu_{\mathcal{D}}$ to $\nu_{\mathcal{D}^\prime}$ at least $\varepsilon$ close for a $(\alpha,\varepsilon)$-RU guarantee. Intuitively, if the initial RDP guarantee is stronger (i.e., $\varepsilon_0$ is smaller), unlearning becomes ``easier'' at the cost of larger noise. When $\varepsilon_0=\varepsilon$, we automatically achieve $(\alpha,\varepsilon)$-RU without any unlearning update. One of our main contributions is to characterize how many PNGD unlearning iteration is needed to reduce $d_\alpha(\rho_k,\nu_{\mathcal{D}^\prime})$ from $\varepsilon_0$ to $\varepsilon$, where $\rho_0 = \nu_{\mathcal{D}}$.

For the unlearning process, note that the initial R\'enyi difference between $\rho_0,\nu_{\mathcal{D}^\prime}$ is provided by the RDP guarantees of the learning process. As a result, we are left to characterize the convergence of the process $y_k$ to its stationary distribution $\nu_{\mathcal{D}^\prime}$ in R\'enyi difference (Theorem~\ref{thm:NGD_unlearning_no_sc}). Since the privacy loss $\varepsilon$ gradually decays with respect to unlearning iterations, we refer to this phenomenon as \textbf{privacy recuperation}. This is in contrast to the learning process, where prior work~\cite{chourasia2021differential} has shown the worse privacy loss $\varepsilon_0$ with respect to learning iterations and refers to that phenomenon as \textbf{privacy erosion}.

\subsection{Unlearning Guarantees}

Our first Theorem shows that $(\mathcal{M},\mathcal{U})$ achieves $(\alpha,\varepsilon)$-RU, where $\varepsilon$ decays monotonically in $K$ unlearning iterations starting from $\varepsilon_0$, condition on $\mathcal{M}$ being $(\alpha,\varepsilon_0)$-RDP. We provide the proof sketch in Appendix~\ref{sec:sketch} and formal proofs are deferred to Appendix~\ref{apx:NGD_unlearning_no_sc}.

\begin{theorem}[RU guarantee of PNGD unlearning]\label{thm:NGD_unlearning_no_sc}
    Assume for all $\mathcal{D}\in \mathcal{X}^n$, $f_{\mathcal{D}}$ is $L$-smooth, $M$-Lipschitz and $\nu_{\mathcal{D}}$ satisfies $C_{\text{LSI}}$-LSI. Let the learning process follow the PNGD update~\eqref{eq:GLD_learning}. Given $\mathcal{M}$ is $(\alpha,\varepsilon_0)$-RDP and $y_0 = x_\infty = \mathcal{M}(\mathcal{D})$, for $\alpha>1$, the output of the $K^{th}$ unlearning iteration along~\eqref{eq:GLD_unlearning} (i.e., $y_K$) achieves $(\alpha,\varepsilon)$-RU, where $\varepsilon \leq \exp\left(-\frac{1}{\alpha}\sum_{k=0}^{K-1}R_k\right)\varepsilon_0$ and $R_k > 0$ depends on the problem settings specified as follows: %\pan{perhaps list strong convex first then others?}

    1) For a general non-convex $f_\mathcal{D}$, we have $R_k =  \ln\left[1+\frac{2\eta\sigma^2}{(1+\eta L)^2C_k}\right],$
    % \begin{align*}
    %     &R_k =  \frac{1}{2}\left(\frac{1}{((1+\eta L)^2C_{k})^2}-\frac{1}{((1+\eta L)^2C_{k}+2\eta\sigma^2)^2}\right),
    % \end{align*}
    where $C_{k+1} \leq \min((1+\eta L)^2C_k+2\eta\sigma^2, \Tilde{C})$, $\Tilde{C} = 6(4(R+\eta M)^2+2\eta\sigma^2)\exp(\frac{4(R+\eta M)^2}{2\eta\sigma^2})$, where $C_0 = C_{\text{LSI}}$ and $R$ is the radius of the projected set $\mathcal{C}_R$.

    2) Suppose $f_\mathcal{D}$ is convex. By choosing $\eta \leq \frac{2}{L}$, we have $R_k =  \ln\left[1+\frac{2\eta\sigma^2}{C_k}\right],$
    % \begin{align*}
    %     &R_k =  \frac{1}{2}\left(\frac{1}{(C_{k})^2}-\frac{1}{(C_{k}+2\eta\sigma^2)^2}\right),
    % \end{align*}
    where $C_{k+1} \leq \min(C_k+2\eta\sigma^2, \Tilde{C})$.

    3) Suppose $f_\mathcal{D}$ is $m$-strongly convex. Let $\frac{\sigma^2}{m} < C_{\text{LSI}}$ and choosing $\eta\leq \min(\frac{2}{m}(1-\frac{\sigma^2}{m C_{\text{LSI}}}),\frac{1}{L})$. Then, $R_k = \frac{2\sigma^2 \eta}{ C_{\text{LSI}}}.$
    % \begin{align*}
    %     R_k = \frac{2\sigma^2 \eta}{ C_{\text{LSI}}}. 
    % \end{align*}
\end{theorem}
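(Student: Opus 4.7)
The plan is to reduce the theorem to a one-step Rényi contraction along the unlearning trajectory and then iterate. Concretely, I want to establish that for every $k \geq 0$
\begin{equation*}
D_\alpha(\rho_{k+1}\|\nu_{\mathcal{D}^\prime}) \leq e^{-R_k/\alpha}\, D_\alpha(\rho_k\|\nu_{\mathcal{D}^\prime}),
\end{equation*}
together with the symmetric inequality with the two arguments swapped. The base case $k=0$ is immediate: $\rho_0 = \nu_{\mathcal{D}}$, and the assumption that $\mathcal{M}$ is $(\alpha,\varepsilon_0)$-RDP gives $d_\alpha(\rho_0,\nu_{\mathcal{D}^\prime})\leq \varepsilon_0$ by Definition~\ref{def:RDP}. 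Telescoping the per-step bound then yields $\varepsilon \leq \exp(-\tfrac{1}{\alpha}\sum_k R_k)\varepsilon_0$.

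For the per-step contraction, I would decompose one iteration of \eqref{eq:GLD_unlearning} into three pieces: the deterministic gradient map $y\mapsto y-\eta\nabla f_{\mathcal{D}^\prime}(y)$, convolution with $\mathcal{N}(0,2\eta\sigma^2 I)$, and projection onto $\mathcal{C}_R$. By the data-processing inequality the first and third pieces cannot increase $D_\alpha(\rho\|\nu_{\mathcal{D}^\prime})$, so the contraction must come from the Gaussian convolution step. Following the Vempala--Wibisono style argument used in~\cite{chourasia2021differential,ye2022differentially}, I would realize this noise injection as the endpoint of a heat flow of duration $\eta$, apply a differential inequality of the form $\tfrac{d}{dt}D_\alpha\leq -\tfrac{1}{\alpha C}\,(\mathrm{Fisher\text{-}type})\leq -\tfrac{1}{\alpha C}D_\alpha$ driven by an LSI on the interpolated reference with constant tied to $C_k$, and integrate over $[0,\eta]$ to obtain exactly the claimed $\exp(-R_k/\alpha)$ with $R_k=\ln\!\bigl[1+2\eta\sigma^2/((1+\eta L)^2 C_k)\bigr]$. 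In the $m$-strongly convex case the cleaner form $R_k=2\sigma^2\eta/C_{\text{LSI}}$ comes from using LSI of the stationary $\nu_{\mathcal{D}^\prime}$ itself, which under the step-size constraint is enough to close the Grönwall argument directly.

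The technical core is to track how the LSI constant $C_k$ along the trajectory evolves from $C_0=C_{\text{LSI}}$, using three structural facts iteratively: (a) pushforward of a $C$-LSI measure under a Lipschitz map of constant $\ell$ yields an $\ell^2 C$-LSI measure, giving $\ell=1+\eta L$ in the non-convex case, $\ell=1$ in the convex case with $\eta\leq 2/L$ (the gradient map is non-expansive), and $\ell=1-\eta m\leq 1$ in the $m$-strongly convex case (which under the step-size condition keeps $C_k\leq C_{\text{LSI}}$ by induction); (b) the Chafai convolution inequality, which adds $2\eta\sigma^2$ to the LSI constant after the Gaussian step; and (c) a uniform ceiling $\widetilde C$ imposed by the projection, obtained from~\cite{chen2021dimension} via a Holley--Stroock bounded-perturbation argument applied to the density produced on $\mathcal{C}_R$ by one PNGD transition with $M$-Lipschitz gradient and radius $R$. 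Combining (a)--(c) gives $C_{k+1}\leq \min((1+\eta L)^2 C_k+2\eta\sigma^2,\widetilde C)$ in the non-convex case, and the two simpler recursions in the other cases.

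The hardest step I expect is (c): orthogonal projection onto $\mathcal{C}_R$ is not a diffeomorphism and can concentrate mass on $\partial\mathcal{C}_R$, breaking naive pushforward bounds on LSI. The resolution is to note that after one full step of \eqref{eq:GLD_unlearning} the resulting density on $\mathcal{C}_R$ is uniformly bounded above and below in terms of $R$, $M$, $\eta$, $\sigma$ via Gaussian tail estimates on the ball, after which Holley--Stroock transfers LSI from a log-concave Gaussian restricted to $\mathcal{C}_R$ to this density, producing precisely the explicit ceiling $\widetilde C = 6(4(R+\eta M)^2+2\eta\sigma^2)\exp(4(R+\eta M)^2/(2\eta\sigma^2))$. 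With this ceiling and the case-specific recursion for $C_k$ in hand, the final bound follows by plugging $C_k$ into $R_k$, telescoping the per-step contraction, and running the argument symmetrically to promote one-sided Rényi divergence to the two-sided Rényi difference $d_\alpha$ required by Definition~\ref{def:RU}.
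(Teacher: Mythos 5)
Your overall architecture matches the paper's proof: an auxiliary copy of the unlearning dynamics started from $\nu_{\mathcal{D}^\prime}$ (which stays put by stationarity), data-processing for the gradient map and the projection, the Vempala--Wibisono heat-flow differential inequality combined with Lemma~\ref{lma:vem_lma5} and Gr\"onwall on $[0,\eta]$ for the Gaussian step, and the same three-part LSI recursion with the ceiling taken from~\cite{chen2021dimension}. Two details to tighten. First, the logarithmic form of $R_k$ comes from tracking the LSI constant of the \emph{time-interpolated} law, $(1+\eta L)^2C_k+2t\sigma^2$ for $t\in[0,\eta]$, and integrating $\int_0^\eta \frac{2\sigma^2}{\alpha((1+\eta L)^2C_k+2t\sigma^2)}\,dt$; your sketch is consistent with this but should make it explicit. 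Second, in the harder direction $D_\alpha(\nu_{\mathcal{D}^\prime}\|\rho_k)$ the LSI feeding Lemma~\ref{lma:vem_lma5} is that of the unlearning trajectory itself (the second argument of the divergence), not of the stationary $\nu_{\mathcal{D}^\prime}$; in the strongly convex case the clean rate $R_k=2\sigma^2\eta/C_{\text{LSI}}$ follows because the step-size condition keeps $C_k\le C_{\text{LSI}}$ by induction, not from invoking LSI of $\nu_{\mathcal{D}^\prime}$ directly.

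The genuine gap is in your step (c). You diagnose the projection as the obstacle and propose to repair it with a Holley--Stroock perturbation argument applied to the \emph{post-projection} density on $\mathcal{C}_R$, asserting that this density is bounded above and below. It is not: the orthogonal projection pushes all mass lying outside the ball onto the sphere $\partial\mathcal{C}_R$, so the post-projection law has a singular component and is not a bounded-density perturbation of a restricted Gaussian; Holley--Stroock does not apply. Moreover the projection is not actually the problem --- it is a $1$-Lipschitz map, so by your own item (a) with $\ell=1$ it can only preserve the LSI constant. The correct route (and the paper's) is to invoke Corollary~1 of~\cite{chen2021dimension} \emph{before} projecting: after the gradient map the law is supported on $\mathcal{C}_{R+\eta M}$ by $M$-Lipschitzness, its convolution with $\mathcal{N}(0,2\eta\sigma^2 I_d)$ is a compactly supported measure convolved with a Gaussian and hence satisfies LSI with constant at most $\Tilde{C}=6(4(R+\eta M)^2+2\eta\sigma^2)\exp(4(R+\eta M)^2/(2\eta\sigma^2))$, and the subsequent $1$-Lipschitz projection preserves this bound. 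The fix is local, but as written your mechanism for the iteration-independent ceiling would fail.
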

Note that $R_k$ can be interpreted as the strict privacy improving rate at step $k$ and $C_k$ is the LSI constant upper bound of distribution of $y_k$. The above theorem states that fine-tuning with PNGD can decrease the privacy loss $d_{\alpha}(\rho_K,\nu_{\mathcal{D}^\prime})$ \emph{exponentially fast} with the unlearning iteration $K$. This is because $R_k$ is lower bounded away from $0$ by a constant, thanks to the iteration independent upper bound on $C_k$. Stronger assumptions on the objective function $f_{\mathcal{D}}$ lead to a better rate, which implies fewer unlearning iterations are needed to achieve the same RU guarantee. There are several remarks for our Theorem~\ref{thm:NGD_unlearning_no_sc}. First, note that the result is \emph{dimension-free}, which is favorable for problems with many parameters to be learned. Second, note that the $M$-Lipschitzness assumption can be dropped by clipping the gradient to norm $M$ in the PNGD update~\eqref{eq:GLD_learning} and~\eqref{eq:GLD_unlearning} instead. As a result, our Theorem~\ref{thm:NGD_unlearning_no_sc} applies to neural networks with smooth activation functions in theory. Finally, our result gives an \emph{upper bound} on the LSI constants along the unlearning process (i.e., $C_k$) which may be improved with more advanced analysis. We note that the exponential dependence in $R$ for the bound of $C_k$ can be loose. It is possible to have a better constant with either more structural assumptions or working with different isoperimetric inequalities such as (weak) Poicar\'e inequality~\cite{mousavi2023towards}. A more detailed discussion is in Appendix~\ref{sec:discuss}.

\textbf{Initial RDP guarantees and LSI constant. }Since Theorem~\ref{thm:NGD_unlearning_no_sc} relies on $\mathcal{M}$ being $(\alpha,\varepsilon_0)$-RDP and the $\nu_{\mathcal{D}}$ satisfies LSI, the theorem below provides such results for the learning process, where the formal proof is relegated to Appendix~\ref{apx:NGD_learning_no_sc}.

\begin{theorem}[RDP guarantee of PNGD learning]~\label{thm:NGD_learning_no_sc}
    Assume $f(\cdot;\mathbf{d})$ be $L$-smooth and $M$-Lipschitz for all $\mathbf{d}\in \mathcal{X}$. Also assume that the initialization of PNGD~\eqref{eq:GLD_learning} satisfies $C_0$-LSI. Then the learning process~\eqref{eq:GLD_learning} is $(\alpha,\varepsilon_0^{(S)})$-RDP of group size $S\geq 1$ at $T^{th}$ iteration with 
    \begin{align*}
        & \varepsilon_0^{(S)} \leq \frac{2\alpha\eta S^2 M^2}{\sigma^2 n^2} \sum_{t=0}^{T-1}\prod_{t^\prime=t}^{T-1}(1+\frac{\eta \sigma^2}{C_{t^\prime,1}})^{-1},
        % & C_{t,1} \leq \min\left((1+\eta L)^2C_t + \eta \sigma^2,6(4(R+\eta M)^2 + \eta \sigma^2)\exp(\frac{4(R+\eta M)^2}{\eta \sigma^2})\right),\\
        % & C_{t+1}\leq \min\left(C_{t,1} + \eta \sigma^2,6(4(R+\eta M)^2 + 2\eta \sigma^2)\exp(\frac{4(R+\eta M)^2}{2\eta \sigma^2})\right).
    \end{align*}
    where $C_{t,1} \leq \min\left((1+\eta L)^2C_t + \eta \sigma^2,\bar{C}\right)$, $\bar{C} = 6(4(R+\eta M)^2 + \eta \sigma^2)\exp(\frac{4(R+\eta M)^2}{\eta \sigma^2})$ and $C_{t+1}\leq \min\left(C_{t,1} + \eta \sigma^2,\bar{C}\right)$.
    Furthermore, $\nu_t$ satisfies $C_t$-LSI.

    When we additionally assume $f(\cdot;\mathbf{d})$ is convex, by choosing $\eta \leq \frac{2}{L}$ the same result hold with $C_{t,1} \leq \min\left(C_t + \eta \sigma^2,\bar{C}\right).$
    % \begin{align}
    %     & C_{t,1} \leq \min\left(C_t + \eta \sigma^2,\bar{C}\right).
    % \end{align}

    When we additionally assume $f(\cdot;\mathbf{d})$ is $m$-strongly convex, by choosing $0<\eta\leq \min(\frac{2}{m}(1-\frac{\sigma^2}{m C_{0}}),\frac{1}{L})$ with a constant $C_{0} > \frac{\sigma^2}{m}$, we have $\varepsilon_0^{(S)} \leq \frac{4\alpha S^2 M^2}{m \sigma^2 n^2}(1-\exp(-m\eta T)).$ 
    % \begin{align}
    %     \varepsilon_0^{(S)} \leq \frac{4\alpha S^2 M^2}{m \sigma^2 n^2}(1-\exp(-m\eta T)).
    % \end{align}
    Furthermore, $\nu_t$ satisfies $C_{0}$-LSI for all $t\geq 0$.
\end{theorem}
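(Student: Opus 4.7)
The plan is to decouple the proof into two independent components: (i) tracking an upper bound $C_t$ on the log-Sobolev constant of the iterate law $\nu_t$, and (ii) using those $C_t$'s to control the per-step R\'enyi divergence between $\nu_T$ and $\nu_T'$ on adjacent datasets $\mathcal{D},\mathcal{D}'$. For (i), I view one PNGD iteration~\eqref{eq:GLD_learning} as the composition of three maps: the deterministic gradient map $G_\eta(x)=x-\eta\nabla f_{\mathcal{D}}(x)$, convolution with $\mathcal{N}(0,2\eta\sigma^2 I_d)$ which I split into two independent half-noises of variance $\eta\sigma^2 I_d$ each, and projection onto $\mathcal{C}_R$. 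Three standard LSI facts then drive the recurrence: a $K$-Lipschitz pushforward scales $C$ to $K^2 C$; convolution of a $C$-LSI measure with $\mathcal{N}(0,sI_d)$ yields a $(C+s)$-LSI measure; and, following~\cite{chen2021dimension}, any Gaussian mollification of a distribution supported in a ball of radius $R+\eta M$ satisfies LSI with the explicit cap $\bar C$ written in the theorem. Since $G_\eta$ is $(1+\eta L)$-Lipschitz under $L$-smoothness, this immediately yields $C_{t,1}\le\min\bigl((1+\eta L)^2 C_t+\eta\sigma^2,\bar C\bigr)$ after the gradient step plus the first half-noise, and $C_{t+1}\le\min(C_{t,1}+\eta\sigma^2,\bar C)$ after the second half-noise and the 1-Lipschitz projection. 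Convexity with $\eta\le 2/L$ upgrades $G_\eta$ to a 1-Lipschitz map; strong convexity with the stated $\eta$ upgrades it further to $(1-\eta m)$-Lipschitz.

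For (ii) I would follow the noise-splitting strategy of Chourasia et al.~\cite{chourasia2021differential}. Coupling the two runs so that all Gaussian increments coincide, bundle at iteration $t$ the gradient step with the first half-noise as a Gaussian mechanism applied to $\nabla f_{\mathcal{D}}(x_t)$. The $M$-Lipschitz assumption bounds the gradient sensitivity by $\|\nabla f_{\mathcal{D}}-\nabla f_{\mathcal{D}'}\|\le 2SM/n$ (triangle inequality over at most $S$ differing indices), so this Gaussian mechanism contributes at most $\alpha(2\eta SM/n)^2/(2\eta\sigma^2)=2\alpha\eta S^2 M^2/(\sigma^2 n^2)$ per step to the $\alpha$ R\'enyi divergence. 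The second half-noise is then interpreted as a short run of the Langevin/heat semigroup against the corresponding intermediate $\nu_{t+1}'$, which by step~(i) satisfies $C_{t,1}$-LSI; the LSI-based R\'enyi dissipation identity produces the strong-data-processing contraction $(1+\eta\sigma^2/C_{t,1})^{-1}$. The subsequent 1-Lipschitz projection preserves R\'enyi divergence by data processing. Summing the per-step sensitivity contributions, each attenuated by the downstream chain of LSI contractions, yields the general bound
\begin{equation*}
\varepsilon_0^{(S)}\le \frac{2\alpha\eta S^2 M^2}{\sigma^2 n^2}\sum_{t=0}^{T-1}\prod_{t'=t}^{T-1}\left(1+\frac{\eta\sigma^2}{C_{t',1}}\right)^{-1}.
\end{equation*}

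The convex case is then immediate from the sharpened LSI recurrence. In the strongly convex case, the recurrence $C_{t+1}\le (1-\eta m)^2 C_t+2\eta\sigma^2$ combined with the hypothesis $C_0>\sigma^2/m$ and the step size $\eta\le\tfrac{2}{m}\bigl(1-\tfrac{\sigma^2}{mC_0}\bigr)$ lets me check inductively that $C_t\le C_0$ for every $t$, so the LSI cap is iteration-independent; using strong convexity to upgrade the per-step contraction to $e^{-m\eta}$ and summing the resulting geometric series closes in closed form to $\tfrac{4\alpha S^2 M^2}{m\sigma^2 n^2}(1-e^{-m\eta T})$. The main obstacle is step~(ii): turning a single discrete half-noise convolution into exactly the factor $(1+\eta\sigma^2/C_{t,1})^{-1}$ requires embedding the half-noise as a continuous-time Langevin/heat semigroup against the correct reference measure and establishing a tight time-derivative inequality for the R\'enyi divergence from LSI; the bookkeeping must remain consistent across the gradient step, both halves of the noise and the projection at every iteration. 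A secondary difficulty is iteratively invoking the cap $\bar C$ of~\cite{chen2021dimension} after each projection while preserving the dimension-free form of the final bound.
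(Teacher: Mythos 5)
Your proposal follows essentially the same route as the paper: the identical two-half-noise decomposition of each PNGD step with the LSI recurrences capped by the bound of~\cite{chen2021dimension}, the per-step Gaussian-mechanism sensitivity $2\alpha\eta S^2M^2/(\sigma^2 n^2)$, the LSI-driven contraction $(1+\eta\sigma^2/C_{t,1})^{-1}$ from the second half-noise (the paper imports this as Lemma 3.2 of~\cite{ye2022differentially}), and the same geometric-series closure in the strongly convex case. The only detail you gloss over is that the contraction lemma does not act on $D_\alpha$ at fixed order but on the normalized quantity $D_\alpha/\alpha$ with a shifted order $\alpha'=\frac{\alpha-1}{1+\eta\sigma^2/C_{t,1}}+1$, which is exactly the bookkeeping you flag as the main obstacle and which the paper resolves by tracking $D_\alpha/\alpha$ across iterations with changing orders.
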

%\pan{perhaps, give the discussion when $T$ is super large.}
Note that any initialization $x_0\in \mathcal{C}_R$ can be viewed as sampling from $\mathcal{N}(x_0,c I_d)$ with $c\rightarrow 0$, which corresponds to $C_{0}$-LSI for any $C_{0}>0$. By taking $T\rightarrow \infty$, Theorem~\ref{thm:NGD_learning_no_sc} provides the initial $(\alpha,\varepsilon_0)$-RDP guarantee and the LSI constant needed in Theorem~\ref{thm:NGD_unlearning_no_sc}. Since there is an iteration-independent upper bound for $C_{t,1}$, one can show that $\varepsilon_0 \leq \frac{2\alpha\eta S^2 M^2}{\sigma^2 n^2 c}$ for some $T$-independent constant $c\in (0,1)$ due to the finiteness of geometric series. Similar to our discussion for Theorem~\ref{thm:NGD_unlearning_no_sc}, the bound of $C_t$ may be loose and it is possible to further improve the LSI constant analysis. The goal of our results is to demonstrate that it is possible to derive (finite) RDP and (arbitrarily small) RU guarantees even for general non-convex problems. 

Nevertheless, for the $m$-strongly convex case we have $\varepsilon_0^{(S)} \leq \frac{4\alpha S^2M^2}{m\sigma^2n^2}$ for all $T>0$, including $T\rightarrow \infty$. It shows that indeed the current learned distribution $\nu_\mathcal{D}$ is close to the retraining distribution $\nu_{\mathcal{D}^\prime}$ for $n$ sufficiently large. This also leads to the computational benefit of Langevin unlearning compared to retraining from scratch, which we discuss below. On the other hand, we show by experiments in Section~\ref{sec:exp} that our results provide a superior privacy-utility-complexity trade-off for the strongly convex case compared to existing approximate unlearning approaches.

\textbf{The computational benefit compared to retraining. }While our Theorems~\ref{thm:NGD_unlearning_no_sc} and~\ref{thm:NGD_learning_no_sc} together provide the privacy guarantee of Langevin unlearning, it is critical to check if our approach provides a computational benefit compared to retraining from scratch as well. Let $\nu_0$ be the (data-independent) initialization distribution of the learning process. Intuitively, starting with $\nu_{\mathcal{D}}$ instead of $\nu_0$ (i.e., retraining) should converge faster to $\nu_{\mathcal{D}^\prime}$, since $d_\alpha(\nu_{\mathcal{D}},\nu_{\mathcal{D}^\prime})\leq \varepsilon_0$ is likely to be much smaller than $d_\alpha(\nu_{0},\nu_{\mathcal{D}^\prime})$. Thus, our Langevin unlearning needs less iterations than retraining for most cases, except for a corner case when $\nu_0$ is already close to $\nu_{\mathcal{D}}$. From Theorem~\ref{thm:NGD_unlearning_no_sc} we know that the number of PNGD iterations we need to approach $\varepsilon$-close in $d_\alpha$ to the target distribution $\nu_{\mathcal{D}^\prime}$ is roughly $O(\log(\frac{\varepsilon_I}{\varepsilon}))$, where $\varepsilon_I$ is the R\'enyi difference between the initial distribution and the target distribution $\nu_{\mathcal{D}^\prime}$. From Theorem~\ref{thm:NGD_learning_no_sc}, we know that the initial R\'enyi difference of Langevin unlearning is at most $\varepsilon_0 = O(1/n^2)$ for any datasets $\mathcal{D},\mathcal{D}^\prime$ and any smooth Lipchitz loss. In contrast, even if both the target distribution $\nu_{\mathcal{D}^\prime}$ and the initialization of retraining $\nu_0$ are Gaussian distributions with the same variance but mean difference $\Omega(1)$, their R\'enyi difference is $\Omega(1)$~\cite{mironov2017renyi}. As a result, computational saving offered by Langevin unlearning is significant for sufficiently large $n$. A more thorough discussion is in Appendix~\ref{apx:retrain_benefit}.

\subsection{Empirical Aspects of Langevin Unlearning}

\begin{wrapfigure}{r}{7.3cm}
    \centering
    \vspace{-0.5cm}
    \includegraphics[trim={0cm 10cm 6.8cm 4.4cm},clip,width=\linewidth]{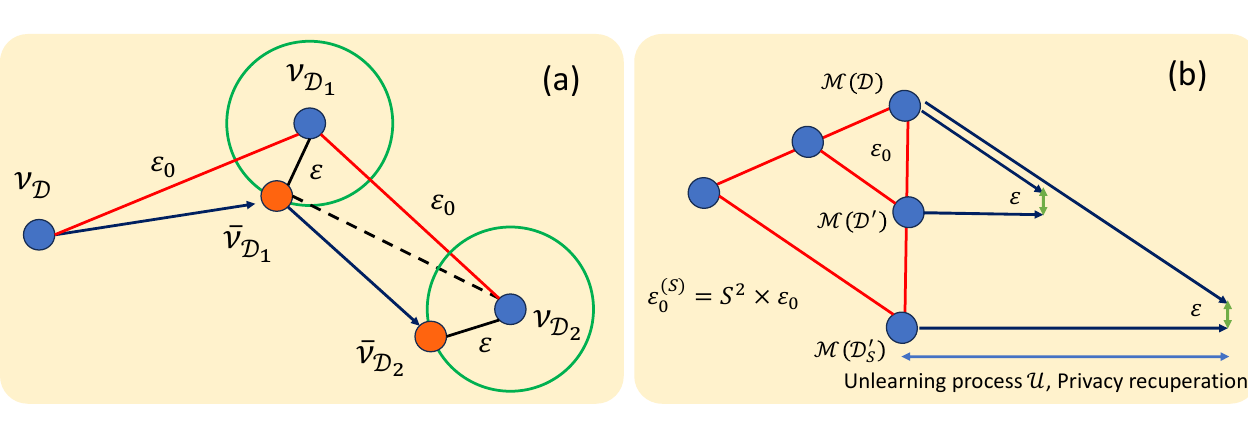}
    \caption{Illustration of (a) sequential unlearning and (b) batch unlearning. For sequential unlearning, we can leverage the weak triangle inequality of R\'enyi divergence to connect all the error terms. For batch unlearning, only the initial RDP guarantee changes with a general group size. Notably, unlearning more samples at once implies $\varepsilon_0$ being larger (Theorem~\ref{thm:NGD_learning_no_sc}), and thus we need more unlearning iteration to recuperate the privacy loss to a desired $\varepsilon$.}
    \label{fig:main_idea2}
    \vspace{-0.5cm}
\end{wrapfigure}

\textbf{Insufficient training. }While our theorem assumes the learning process runs until convergence, this assumption can be relaxed by the geometric view of Langevin unlearning. Assume the learning process $\mathcal{M}(\mathcal{D})\sim \nu_T$ terminate with finite step $T$ instead and we only have $d_\alpha(\nu_T,\nu_{\mathcal{D}})\leq \varepsilon_T(\alpha)$ for all possible $\mathcal{D}\in \mathcal{X}^n$. One can still apply the weak triangle inequality of R\'enyi divergence~\cite{mironov2017renyi} twice to bound $d_\alpha(\rho_k,\nu_{T}^\prime)$ with $d_{4\alpha}(\rho_k,\nu_{\mathcal{D}^\prime})$, $\varepsilon_T(2\alpha)$, and $\varepsilon_T(4\alpha)$ with additional factors $(\alpha-0.5)/(\alpha-1)$ and $(2\alpha-0.5)/(2\alpha-1)$. In practice, it is reasonable to require the model parameters to be sufficiently trained so that $\varepsilon_T$ is negligible and a tighter weak triangle inequality can be employed.  

\begin{figure*}[t]
     \centering
     \subfloat[][Unlearning one point]{\includegraphics[width=0.33\linewidth]{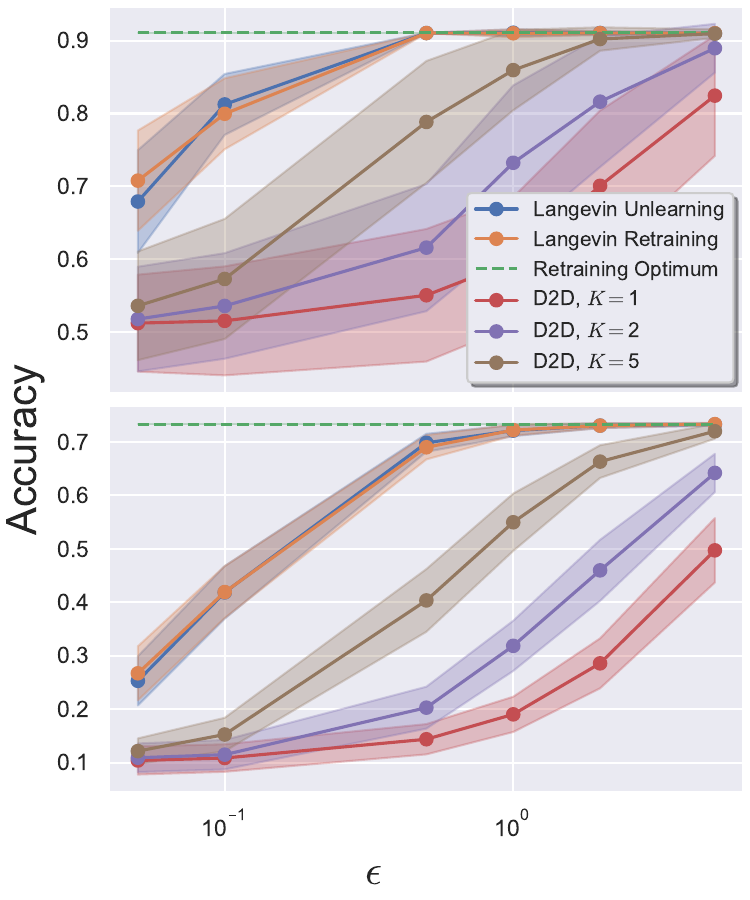}\label{fig:exp_fig1}}
     \subfloat[][Unlearning $100$ points]{\includegraphics[width=0.33\linewidth]{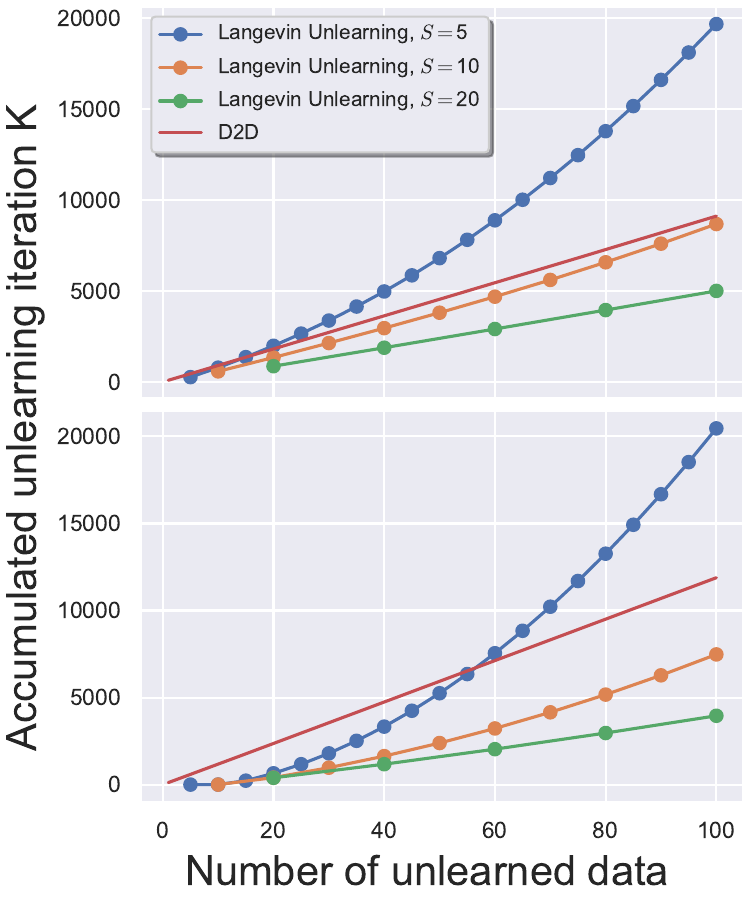}\label{fig:exp_fig5}}
     \subfloat[][Langevin unlearning tradeoff]{\includegraphics[width=0.33\linewidth]{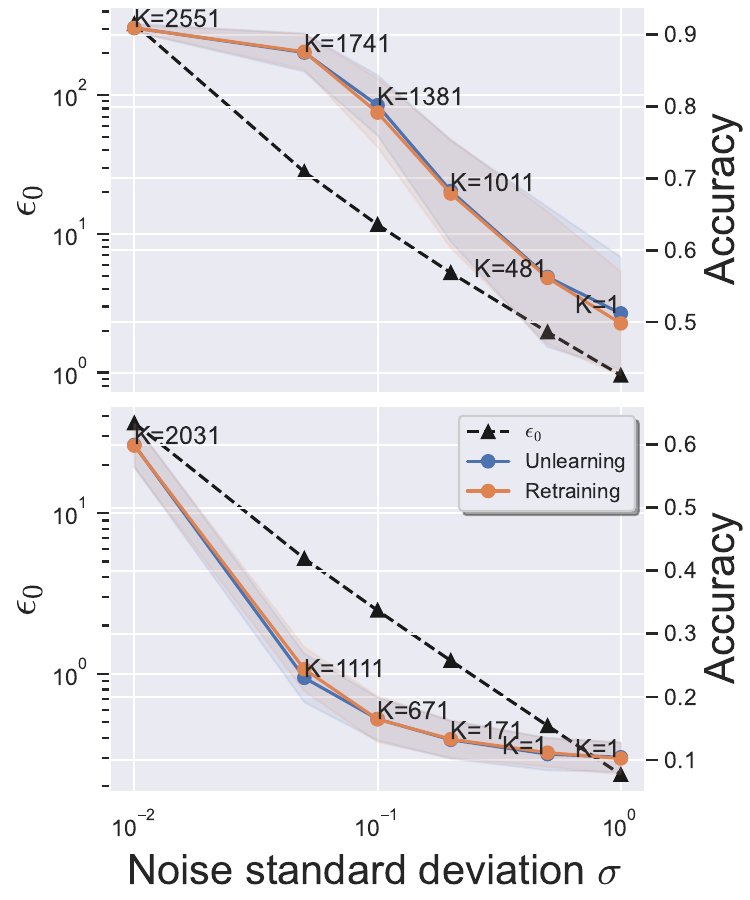}\label{fig:exp_fig4}}
     \caption{Main experiments, where the top and bottom rows are for MNIST and CIFAR10 respectively. (a) Compare to D2D for unlearning one point using limited unlearning iteration. This demonstrates the privacy-utility ($\epsilon$-accuracy) tradeoff under the fixed unlearning complexity (K). For Langevin unlearning, we use only $K=1$ unlearning iterations. For D2D, we allow it not only to use $K=1,2,5$ unlearning iterations but also to keep the non-private internal state information. (b) Compare to D2D for unlearning $100$ points, where all methods achieve $(\epsilon,1/n)$-unlearning guarantee with $\epsilon=1$. For Langevin unlearning, we vary different unlearning batch sizes $S$ and combine them with the sequential unlearning result. For D2D, we do not allow it to keep the non-private internal state information in this experiment so that there is an inherent lower bound on the unlearning iterations per unlearning request. (c) A detailed investigation of the utility-complexity trade-off of Langevin unlearning with unlearning $S=100$ points at once under the fixed privacy constraint $\epsilon=1$. For each $\sigma$, we report the corresponding $\epsilon_0$ (black dash line) for the initial $(\epsilon_0,1/n)$-DP guarantee and the utility after unlearning to $\epsilon=1$.}
     \vspace{-0.5cm}
\end{figure*}

\textbf{Sequential and batch unlearning. }Langevin unlearning naturally supports sequential and batch unlearning for unlearning multiple data points thanks to our geometric view of the unlearning problem, see Figure~\ref{fig:main_idea2} for a pictorial example. For sequential unlearning, we show that fine-tuning the current model parameters on the updated datasets for sequential $S\geq 1$ unlearning requests can achieve $(\alpha,\varepsilon)$-RU simultaneously. The formal proof is deferred to Appendix~\ref{apx:SLU}. 

\begin{corollary}[Sequential unlearning]\label{cor:SLU}
    Assume the unlearning requests arrive sequentially such that our dataset changes from $\mathcal{D}_0\rightarrow\mathcal{D}_1\rightarrow\ldots\rightarrow\mathcal{D}_S$, where $\mathcal{D}_s,\mathcal{D}_{s+1}$ are adjacent. Let $y_k^{(s)}$ be the unlearned parameters for the $s^{th}$ unlearning request with $k$ unlearning update following~\eqref{eq:GLD_unlearning} on $\mathcal{D}_s$ and $y_0^{(s+1)} = y_{K_s}^{(s)}\sim\bar{\nu}_{\mathcal{D}_s}$, where $y_0^{(1)}=x_\infty$ and $K_s$ is the unlearning steps for the $s^{th}$ unlearning request. Suppose we have achieved $(\alpha,\varepsilon^{(s)}(\alpha))$-RU for the $s^{th}$ unlearning request, the learning process~\eqref{eq:GLD_learning} is $(\alpha,\varepsilon_0(\alpha))$-RDP and $\bar{\nu}_{\mathcal{D}_s}$ satisfies $C_{\text{LSI}}$-LSI, we achieve $(\alpha,\varepsilon^{(s+1)}(\alpha))$-RU for the $(s+1)^{th}$ unlearning request as well, where
    \begin{align*}
        & \varepsilon^{(s+1)}(\alpha) \leq \exp(-\frac{1}{\alpha}\sum_{k=0}^{K_{s+1}-1}R_k) \times\frac{\alpha-1/2}{\alpha-1}\left(\varepsilon_0(2\alpha) + \varepsilon^{(s)}(2\alpha)\right),
    \end{align*}
    $\varepsilon^{(0)}(\alpha) = 0\;\forall\alpha>1$ and $R_k$ are defined in Theorem~\ref{thm:NGD_unlearning_no_sc}.
\end{corollary}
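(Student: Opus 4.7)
The plan is to reduce the $(s{+}1)$-th unlearning guarantee to two ingredients: (i) exponential contraction of the R\'enyi difference to $\nu_{\mathcal{D}_{s+1}}$ along the PNGD update~\eqref{eq:GLD_unlearning} on $\mathcal{D}_{s+1}$, starting from an \emph{arbitrary} initial law $\rho_0$; and (ii) a bound on the initial R\'enyi difference $d_\alpha(\bar{\nu}_{\mathcal{D}_s}, \nu_{\mathcal{D}_{s+1}})$ obtained by triangulating through the intermediate stationary law $\nu_{\mathcal{D}_s}$.

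First, I would extract from the proof of Theorem~\ref{thm:NGD_unlearning_no_sc} the per-step R\'enyi contraction lemma: for any $\rho_0\in\mathcal{P}(\mathcal{C}_R)$, the law $\rho_k$ of $y_k$ generated by~\eqref{eq:GLD_unlearning} on $\mathcal{D}_{s+1}$ satisfies
\[
d_\alpha(\rho_{K_{s+1}}, \nu_{\mathcal{D}_{s+1}}) \;\leq\; \exp\Bigl(-\tfrac{1}{\alpha}\sum_{k=0}^{K_{s+1}-1} R_k\Bigr)\, d_\alpha(\rho_0, \nu_{\mathcal{D}_{s+1}}),
\]
with the same rates $R_k$ as in Theorem~\ref{thm:NGD_unlearning_no_sc}. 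The key point is that this decay only uses the LSI of the \emph{target} $\nu_{\mathcal{D}_{s+1}}$ together with the iteration-independent LSI upper bound $\tilde{C}$ of~\cite{chen2021dimension}, which becomes active after a single noisy step regardless of $\rho_0$. Hence I may legitimately instantiate $\rho_0 = \bar{\nu}_{\mathcal{D}_s}$, reducing the problem to bounding $d_\alpha(\bar{\nu}_{\mathcal{D}_s}, \nu_{\mathcal{D}_{s+1}})$.

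Second, I would interpose $\nu_{\mathcal{D}_s}$ and apply the weak triangle inequality of R\'enyi divergence (Proposition~11 of~\cite{mironov2017renyi}) with $p=q=2$, combined with monotonicity $D_\beta \leq D_{2\alpha}$ for $\beta\leq 2\alpha$, in both directions $\bar{\nu}_{\mathcal{D}_s}\rightleftharpoons\nu_{\mathcal{D}_{s+1}}$. Taking the maximum gives
\[
d_\alpha(\bar{\nu}_{\mathcal{D}_s}, \nu_{\mathcal{D}_{s+1}}) \;\leq\; \tfrac{\alpha-1/2}{\alpha-1}\bigl[\, d_{2\alpha}(\bar{\nu}_{\mathcal{D}_s},\nu_{\mathcal{D}_s}) \;+\; d_{2\alpha}(\nu_{\mathcal{D}_s},\nu_{\mathcal{D}_{s+1}})\,\bigr].
\]
The first bracketed term is $\leq \varepsilon^{(s)}(2\alpha)$ by the inductive $(2\alpha, \varepsilon^{(s)}(2\alpha))$-RU hypothesis for the $s$-th request, and the second is $\leq \varepsilon_0(2\alpha)$ by the $(2\alpha,\varepsilon_0(2\alpha))$-RDP of the learning process applied to the adjacent pair $\mathcal{D}_s, \mathcal{D}_{s+1}$ (Theorem~\ref{thm:NGD_learning_no_sc}). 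Chaining this with the contraction factor from the first step yields the stated bound for $\varepsilon^{(s+1)}(\alpha)$; the base case $\varepsilon^{(0)}\equiv 0$ follows directly from Theorem~\ref{thm:NGD_unlearning_no_sc} with $\rho_0 = \nu_{\mathcal{D}_0}$.

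The main obstacle is conceptually minor but technically delicate: decoupling the contraction step of Theorem~\ref{thm:NGD_unlearning_no_sc} (originally stated for the initialization $x_\infty = \mathcal{M}(\mathcal{D})$) from the particular choice of starting distribution, and in particular re-verifying that the iteration-independent LSI bounds $C_k$ on $\rho_k$ continue to hold when $\rho_0 = \bar{\nu}_{\mathcal{D}_s}$ rather than a stationary law. A secondary subtlety is handling the weak triangle inequality in both directions so that the output is in the two-sided $d_\alpha$ rather than one-sided $D_\alpha$; this adds no new ideas but requires symmetrizing the argument and using the monotonicity of $D_\beta$ in $\beta$ to collapse the two natural orders ($2\alpha-1$ and $2\alpha$) into the single cleaner order $2\alpha$ appearing in the final bound.
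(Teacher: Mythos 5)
Your proposal is correct and follows essentially the same route as the paper's proof: apply the contraction of Theorem~\ref{thm:NGD_unlearning_no_sc} starting from $\bar{\nu}_{\mathcal{D}_s}$ (legitimized by the corollary's assumption that $\bar{\nu}_{\mathcal{D}_s}$ satisfies $C_{\text{LSI}}$-LSI), then bound the initial R\'enyi difference $d_\alpha(\bar{\nu}_{\mathcal{D}_s},\nu_{\mathcal{D}_{s+1}})$ via the symmetrized weak triangle inequality with $p=q=2$ through $\nu_{\mathcal{D}_s}$, invoking the inductive RU guarantee and the RDP of the learning process for the two resulting terms. The two subtleties you flag (decoupling the contraction from the stationary initialization, and symmetrizing the weak triangle inequality using monotonicity in the order) are exactly the points the paper handles.
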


As a result, one can leverage Corollary~\ref{cor:SLU} to recursively determine needed unlearning iterations for each sequential unlearning request. For the batch unlearning setting, it only affects the initial R\'enyi difference in Theorem~\ref{thm:NGD_unlearning_no_sc}. We can simply adopt Theorem~\ref{thm:NGD_learning_no_sc} with a group size of $S \geq 1$ for the RDP guarantees of the learning process $\varepsilon_0^{(S)}$. 

\textbf{Utility-privacy-efficiency trade-off. }An interesting aspect of the Langevin unlearning is its strong connection to the initial RDP guarantee. From Theorem~\ref{thm:NGD_learning_no_sc}, we know that increasing $\sigma$ leads to smaller R\'enyi difference $\varepsilon_0$ and thus better unlearning efficiency. However, this intuitively is at the cost of the utility of $\nu_{\mathcal{D}}$, see for example the discussion in Section 5 of~\cite{chourasia2021differential} under the strong convexity assumption. To achieve the same $(\alpha,\varepsilon)$-RU guarantee, one can either ensure smaller $\varepsilon_0$ at the cost of worst utility or run more unlearning iterations at the cost of unlearning efficiency. We investigate how utility trade-off with privacy and unlearning complexity empirically in Section~\ref{sec:exp}.

\section{Experiments}\label{sec:exp}

\textit{Benchmark datasets. }We consider logistic regression with $\ell_2$ regularization. We focus on this strongly convex setting since the non-convex unlearning bound in Theorem~\ref{thm:NGD_unlearning_no_sc} currently is not tight enough to be applied in practice due to its exponential dependence on various hyperparameters. However, we emphasize its significant theoretical implication due to the lack of a certified non-convex approximate unlearning framework in previous studies. Meanwhile, the existing baseline approach~\cite{neel2021descent} also only applies to strongly convex problems. We conduct experiments on MNIST~\cite{deng2012mnist} and CIFAR10~\cite{krizhevsky2009learning}, which contain 11,982 and 50,000 training instances respectively. We follow the setting of~\cite{guo2020certified} to distinguish digits $3$ and $8$ for MNIST so that the problem is a binary classification. For the CIFAR10 dataset, we use all classes and leverage the last layer of the public ResNet18~\cite{he2016deep} embedding as the data features, which follows the public feature extractor setting of~\cite{guo2020certified}. Experiments on additional datasets~\cite{misc_adult_2} are deferred to Appendix~\ref{apx:exp} and our code is publicly available\footnote{\url{https://github.com/Graph-COM/Langevin_unlearning}}.

\textit{Baseline methods. }Our baseline methods include Delete-to-Descent (D2D)~\cite{neel2021descent}, the state-of-the-art gradient-based approximate unlearning method, and retraining from scratch using PNGD. For D2D, we leverage Theorem 9 and 28 in~\cite{neel2021descent} for privacy accounting depending on whether we allow D2D to have an internal non-private state. Note that allowing an internal non-private state provides a weaker notion of privacy guarantee~\cite{neel2021descent} and our Langevin unlearning by default does not require it. We include those theorems for D2D and a detailed explanation of its possible non-privacy internal state in Appendix~\ref{apx:D2D} for completeness. All experimental details can be found in Appendix~\ref{apx:exp}, including how to convert $(\alpha,\varepsilon)$-RU to the standard $(\epsilon,\delta)$-unlearning guarantee. Throughout this section, we choose $\delta = 1/n$ for each dataset and require all tested unlearning approaches to achieve $(\epsilon,\delta)$-unlearning with different $\epsilon$. We report test accuracy for all experiments as the utility metric. For the initialization, we sample from Gaussian distribution with mean $1000$. This simulates the case that the initial distribution is in a reasonable distance away from the convergent distribution $\nu_\mathcal{D}$. We set the learning iteration $T=10,000$ to ensure all approaches converge. For Langevin unlearning, we leverage Theorems~\ref{thm:NGD_unlearning_no_sc},~\ref{thm:NGD_learning_no_sc} and Corollay~\ref{cor:SLU} for privacy accounting under different settings. All results are averaged over $100$ independent trials with standard deviation reported as shades in all figures.

\textbf{Unlearning one data point with $K=1$ iteration.} We first consider the setting of unlearning one data point using only $K=1$ unlearning iteration for both Langevin unlearning and D2D (Figure~\ref{fig:exp_fig1}). Since D2D cannot achieve a privacy guarantee with only $1$ unlearning iteration without a non-private internal state, we allow D2D to have it in this experiment. Even in this case, our Langevin unlearning significantly outperforms D2D in utility for $\epsilon$ from $0.1$ to $5$ under the same unlearning complexity ($K=1$), but also achieves similar accuracy to retraining from scratch. Since retraining requires $T=10,000$ PNGD iterations, Langevin unlearning is indeed much more efficient. We also show that D2D can achieve better utility at the cost of a larger unlearning iteration $K=2,5$. Our Langevin unlearning exhibits both smaller unlearning complexity and better utility compared to D2D. 

\begin{wrapfigure}{r}{5cm}
     \centering
     \vspace{-0.3cm}
     \includegraphics[width=\linewidth]{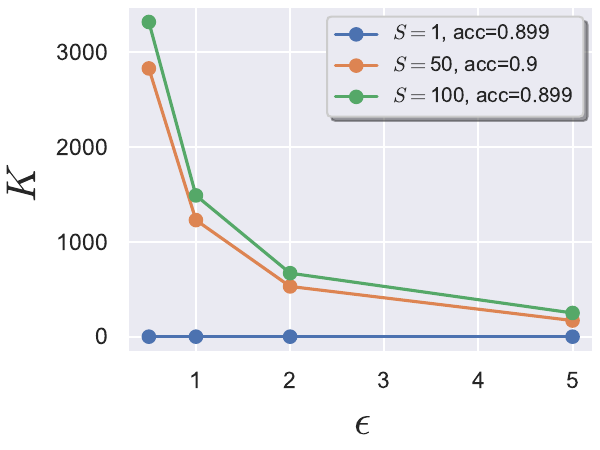}
     \caption{Trade-off between privacy ($\epsilon$), unlearning complexity ($K$), and the number of points to be unlearned ($S$) in the batch unlearning setting for MNIST. We fix $\sigma=0.03$ so that $K$ can be determined given $(\epsilon,S)$.}
     \label{fig:exp_fig23}
     \vspace{-0.3cm}
\end{wrapfigure}

\textbf{Unlearning multiple data points.} We now consider the scenario of unlearning $100$ data points, where the results are in Figure~\ref{fig:exp_fig5}. We let all methods achieve the same $(1,1/n)$-unlearning guarantee for a fair comparison. Since D2D only supports sequential unlearning, we directly apply its sequential unlearning results~\cite{neel2021descent}. Also, we do not allow D2D to have an internal non-private state in this experiment for a fair comparison. On the other hand, since Langevin unlearning supports both sequential and batch unlearning, we vary the number of points per unlearning request $S=5,10,20$ and report the accumulated unlearning iterations for $\sigma=0.03$. All methods achieve a similar utility, with an accuracy of roughly $0.9$ and $0.98$ for MNIST and CIFAR10 respectively. Langevine unlearning can achieve a significantly better unlearning complexity compared to D2D if one allows for a larger unlearning batch size. For instance, when we are allowed to unlearn $S=20$ points at once, Langevine unlearning saves $40\%$ unlearning iteration compared to D2D. Nevertheless, we note that due to the use of weak triangle inequality of R\'enyi divergence in our analysis, Langevin unlearning can be more expensive in complexity compared to D2D when one only allows for unlearning a small batch of points (i.e, $S=5$). We leave the improvement in this direction as the future work.

\textbf{Privacy-utility-complexity trade-off. }We further examine the inherent privacy-utility-complexity trade-off provided by our Langevin unlearning with two experiments. In the first experiment, we aim to achieve $(\epsilon,1/n)$-unlearning guarantee with $\epsilon=1$ for batch unlearning of $100$ points. We vary the choice of $\sigma$ from $0.01$ to $1.0$. A smaller $\sigma$ leads to a worse initial $\epsilon_0$ and thus requires more unlearning iteration $K$ to recuperate it to $\epsilon=1$. It is interesting to see that even if we choose a small $\sigma$ so that the initial $(\epsilon_0,1/n)$-DP guarantee is extremely weak (i.e, $\epsilon_0\approx 100$ for $\sigma=0.05$), our unlearning iteration can recuperate $\epsilon_0$ to $\epsilon=1.0$ efficiently. On the other hand, a larger $\sigma$ leads to a worse utility which is the inherent privacy-utility-complexity trade-off of Langevin unlearning. The results are illustrated in Figure~\ref{fig:exp_fig4}. Compared to retraining until convergence ($T=10,000$), we achieve a similar utility but with much lower unlearning complexity with $K$ roughly up to $2500$.  

In the second experiment, we investigate the effect of the number of points to be unlearned $S$ in the batch unlearning setting. In Figure~\ref{fig:exp_fig23}, we can see that both larger $S$ and smaller $\epsilon$ will require more unlearning iterations $K$. It is worth noting that the resulting utility does not change significantly, whereas Langevin unlearning always archives a similar utility compared to retraining (see Figure~\ref{fig:exp_fig3} in Appendix~\ref{apx:exp}). Retraining requires $T=10,000$ PNGD iterations which is significantly larger than the required unlearning iteration $K$ even for $\epsilon=0.5$. We have shown that Langevin unlearning is a promising unlearning solution. 

\section{Conclusions and Future Directions}\label{sec:discuss}

We propose Langevin unlearning based on noisy gradient descent with privacy guarantees for approximate unlearning problems. It unifies the DP learning process and the privacy-certified unlearning process with many algorithmic benefits such as applicability to non-convex problems and multiple unlearning requests. Below we discuss several future directions for Langevin unlearning.

\textbf{Extension to projected noisy stochastic gradient descent. }It is straightforward to extend our analysis to the projected noisy SGD case. There are two possibilities for the SGD setting: 1) randomly partition the indices $[n]$ into a sequence of mini-batches, then fix this sequence for all the learning and unlearning process~\cite{ye2022differentially}; 2) randomly draw a mini-batch for each update~\cite{ryffel2022differential,altschuler2022privacy}. The analysis of~\cite{ye2022differentially} can be combined with our LSI constant analysis for RU guarantees, similar to the proof of our Theorem~\ref{thm:NGD_unlearning_no_sc}. Unfortunately, the analysis~\cite{ryffel2022differential} may lead to an extra large LSI constant in the intermediate step even if $R$ is small. We refer interested readers to Appendix C of~\cite{ye2022differentially} for a detailed discussion. The technical difficulty here is to provide a tight analysis of the LSI constant for a mixture of distributions, where each of them corresponds to a possible choice of mini-batch. The analysis of~\cite{altschuler2022privacy} is based on privacy amplification by iteration, which does not directly generalize to the non-convex cases. In our companion work~\cite{chien2024stochastic}, we exploit it not only to establish a better unlearning result but also to enable the mini-batch setting under the convexity assumption. It is currently an open problem whether a matching result can be established via Langevin dynamic analysis as well. 

\textbf{Better convergence rate. }While it is already exciting that Langevin dynamic analysis leads to formal unlearning algorithms and guarantees even for general non-convex problems in theory, the potential of this direction for a \emph{practical} plug-and-play unlearning solution is even more interesting. Several promising future directions can significantly improve the convergence rate and the unlearning efficiency. Developing a better LSI constant bound under additional structural assumptions for the non-convex problems is the most straightforward one. Another direction is to work with (weak) Poincar\'e inequality instead. While a weaker tail assumption leads to slower convergence~\cite{mousavi2023towards}, the corresponding (weak) Poincar\'e constant may be more tightly tracked. Finally, while we only discuss the noisy GD which corresponds to Langevin Monte Carlo, some other advanced samplers are off-the-shelf including the Metropolis-Hastings filter~\cite{hastings1970monte} and Hamiltonian Monte Carlo~\cite{neal2011mcmc}. We hope our work motivates further collaborations among the sampling and privacy communities and pushes the boundaries of learning and unlearning with privacy guarantees.

\begin{ack}
The authors thank Sinho Chewi, Wei-Ning Chen, and Ayush Sekhari for the helpful discussions. E. Chien, H. Wang and P. Li are supported by NSF awards CIF-2402816, PHY-2117997 and JPMC faculty award.

E. Chien would like to thank Ahmed Mehdi Inane for pointing out two calculation errors in the appendix, where the results of non-stongly-convex cases in Theorem~\ref{thm:NGD_unlearning_no_sc} and~\ref{thm:NGD_learning_no_sc} are slightly changed and fixed. No other part of the manuscript is affected.  
\end{ack}

\bibliographystyle{ieeetr}
\bibliography{Ref}

%%%%%%%%%%%%%%%%%%%%%%%%%%%%%%%%%%%%%%%%%%%%%%%%%%%%%%%%%%%%
\newpage
\appendix

\section{Limitations}\label{apx:limit}
While our Langevin unlearning provides the first approximate unlearning solution for non-convex problems, it is still not practical enough as we have stated in the main text. One limitation of our unlearning guarantees (Theorem~\ref{thm:NGD_unlearning_no_sc}) is that the privacy bound for non-convex problems is not tight and can potentially improved via advanced analysis, where some possible directions are discussed in the future directions below. Currently, it is still only applicable in practice for strongly convex problems as we have demonstrated in our experiment section if a privacy guarantee is required. Nevertheless, recent studies have shown that a $(\epsilon,\delta)$-DP model provides strong empirical privacy against popular attacks such as membership inference attacks even if $\epsilon\approx 10^8$~\cite{aerni2024evaluations}. We conjecture a similar phenomenon exists for Langevin unlearning, which allows Langevin unlearning to defend against membership inference attacks for non-convex models as well in practice with a similar $\epsilon$ scale. We leave this empirical study as our important future work.

\section{Broader Impact}\label{apx:broad_impact}
Our work study the theoretical unlearning guarantees of projected noisy gradient descent algorithm for convex problems. We believe our work is a foundational research and does not have a direct path to any negative applications.

\section{Standard Definitions}\label{apx:standard_def}
Let $f:\mathbb{R}^d\mapsto \mathbb{R}$ be a mapping. We define smoothness, Lipschitzsness, and strong convexity as follows:
\begin{align}
    & \text{$L$-smooth: }\forall~ x,y\in\mathbb{R}^d,\;\|\nabla f(x)-\nabla f(y)\| \leq L \|x-y\| \\
    & \text{$m$-strongly convex: }\forall~ x,y\in\mathbb{R}^d,\;\ip{x-y}{\nabla f(x)-\nabla f(y)} \geq m \|x-y\|^2\\
    & \text{$M$-Lipschitzs: }\forall~ x,y\in\mathbb{R}^d,\;\|f(x)-f(y)\| \leq M \|x-y\|.
\end{align}
Furthermore, we say $f$ is convex means it is $0$-strongly convex.

\section{Additional Related Works}\label{apx:relate}

\textbf{Bayesian Unlearning. }Due to the relation between Langevin Monte Carlo and Bayes learning approaches, our Langevin unlearning is also loosely related to the Bayesian unlearning literature. See~\cite{nguyen2020variational,nguyen2022markov,rawat2022challenges} for a series of empirical results. Along this line of work,~\cite{fu2021bayesian} is the only one that provides a certain unlearning guarantee in terms of KL divergence. However, they only provide a bound for one direction of KL (similar to $D_1(\rho_k||\nu_{\mathcal{D}}^\prime)$) which makes it fail to be directly connected to the differential privacy. Note that it is crucial to ensure the bidirectional bound for KL or R\'enyi divergence for the purpose of privacy. Otherwise, we cannot ensure the sufficiently large type I and type II errors of the best possible attacker in membership inference attack~\cite{kairouz2015composition}. Also, it is essential to have a (relatively) tight conversion to DP, where the general $\alpha$ order in R\'enyi divergence is crucial.

\section{Detailed Discussion on Computational Benefit Against Retraining}\label{apx:retrain_benefit}

In this section, we provide a more detailed discussion of the computational benefit of Langevin unlearning against retraining from scratch. We start our discussion under strongly convex assumption and then explain the non-convex case. Let us consider the case $f(x;\mathbf{d})$ is $m$-strongly convex, $L$-smooth and $M$-Lipschitz in $x$ for all $\mathbf{d}\in \mathcal{X}$. Also, assume the initialization distribution $\nu_0=\mathcal{N}(\tilde{x}_0,\frac{2\sigma^2}{m} I_d)$ for some $\tilde{x}_0\in \mathcal{C}_R$. In this case, from Theorem~\ref{thm:NGD_unlearning_no_sc} we know that running $T$ PNGD learning iteration~\eqref{eq:GLD_learning}, we have
\begin{align}
    d_{\alpha}(\nu_T,\nu_{\mathcal{D}^\prime})\leq \exp(-\frac{2\sigma^2 \eta T}{\alpha C_{\text{LSI}}}) d_{\alpha}(\nu_0,\nu_{\mathcal{D}^\prime}).
\end{align}
Note that by Theorem~\ref{thm:NGD_learning_no_sc}, we know that $C_{\text{LSI}}=\frac{2\sigma^2}{m}$ by our choice of $\nu_0$ for an appropriate step size $\eta\leq \min(\frac{1}{m},\frac{1}{L})$. As a result, in order to be $\varepsilon$ close to $\nu_{\mathcal{D}^\prime}$, we need $\frac{\alpha}{m\eta}\log(\frac{d_{\alpha}(\nu_0,\nu_{\mathcal{D}^\prime})}{\varepsilon})$ retraining iteration. On the other hand, for Langevin unlearning we need $\frac{\alpha}{m\eta}\log(\frac{\varepsilon_0}{\varepsilon})$, where $\varepsilon_0\leq \frac{4\alpha M^2}{m\sigma^2 n^2}$. As a result, Langevin unlearning the computational saving for Langevin unlearning against retraining is
\begin{align}
    &  \frac{\alpha}{m\eta}\log(\frac{d_{\alpha}(\nu_0,\nu_{\mathcal{D}^\prime})}{\varepsilon})-\frac{\alpha}{m\eta}\log(\frac{\varepsilon_0}{\varepsilon}) = \frac{\alpha}{m\eta}\log(\frac{d_{\alpha}(\nu_0,\nu_{\mathcal{D}^\prime})}{\varepsilon_0}) \geq \frac{\alpha}{m\eta}\log(\frac{m\sigma^2 n^2\times d_{\alpha}(\nu_0,\nu_{\mathcal{D}^\prime})}{4\alpha M^2}).
\end{align}
Clearly, this saving depends on $\nu_{\mathcal{D}^\prime}$. In some rare cases, $\nu_0$ might accidentally be close to $\nu_{\mathcal{D}^\prime}$ so that retraining is more efficient. However, even if $\nu_{\mathcal{D}^\prime} = \mathcal{N}(x^\star(\mathcal{D}^\prime),\frac{2\sigma^2}{m}I_d)$, we have $d_\alpha(\nu_0,\nu_{\mathcal{D}^\prime}) = \frac{\alpha m \|\tilde{x}_0-x^\star(\mathcal{D}^\prime)\|}{4\sigma^2}$. That is, even if we know the target distribution is Gaussian and choose the initialization to have the same variance, the corresponding R\'enyi difference is $\Omega(1)$ for $\|\tilde{x}_0-x^\star(\mathcal{D}^\prime)\|=\Omega(1)$. As a result, if we uniformly at random sample $\tilde{x}_0$ from $\mathcal{C}_R$, we have $\|\tilde{x}_0-x^\star(\mathcal{D}^\prime)\| \geq 1$ with probability at least $1-\frac{1}{R^d}$. Plug this into the lower bound above we obtain a data-independent lower bound on the computational savings with probability at least $1-\frac{1}{R^d}$ as follows
\begin{align}
    &  \frac{\alpha}{m\eta}\log(\frac{d_{\alpha}(\nu_0,\nu_{\mathcal{D}^\prime})}{\varepsilon})-\frac{\alpha}{m\eta}\log(\frac{\varepsilon_0}{\varepsilon})\geq \frac{\alpha}{m\eta}\log(\frac{m\sigma^2 n^2\times d_{\alpha}(\nu_0,\nu_{\mathcal{D}^\prime})}{4\alpha M^2}) \\
    & = \frac{\alpha}{m\eta}\log(\frac{m^2 n^2 \|\tilde{x}_0-x^\star(\mathcal{D}^\prime)\|}{16 M^2}) \geq \frac{\alpha}{m\eta}\log(\frac{m^2 n^2}{16 M^2}).
\end{align}
Here we can see that for larger problem size $n$, our computational benefit is more significant. 

For the non-convex case, note that the convergence rate $R_k$ in Theorem~\ref{thm:NGD_unlearning_no_sc} will vary and depend on the LSI constant of $\nu_0$ and $\nu_{\mathcal{D}}$ in general. This makes it hard to have a direct characterization of the computational benefit against retraining. To simplify the situation, we assume the convergence rate $R_k$ is a constant $\bar{R}>0$ that is independent of $n,k$. In this case, the computational saving can be characterized as
\begin{align}
    \frac{\alpha}{\bar{R}}\log(\frac{d_{\alpha}(\nu_0,\nu_{\mathcal{D}^\prime})}{\varepsilon_0}).
\end{align}

In this case, one can still leverage Theorem~\ref{thm:NGD_learning_no_sc} to provide an upper bound on $\varepsilon_0$, yet the obtained bound can be weak due to the inaccurate estimate of LSI constants for the non-convex case. Instead, we propose to use unbiased limits $\Tilde{\nu}_{\mathcal{D}}$ to approximate the biased limit $\nu_{\mathcal{D}}$ for a rough estimate instead, since $D_\alpha(\nu_{\mathcal{D}}||\Tilde{\nu}_{\mathcal{D}})\rightarrow 0$ as $\eta\rightarrow 0$~\cite{chewi2023logconcave}. From standard sampling literature~\cite{vempala2019rapid}, we know that $\Tilde{\nu}_{\mathcal{D}}\propto \exp(-\frac{f_{\mathcal{D}}}{\sigma^2})$. We provide the following result for bounding $d_\alpha(\Tilde{\nu}_{\mathcal{D}},\Tilde{\nu}_{\mathcal{D}^\prime})$.

\begin{proposition}\label{prop:unbiased_limit_adj}
    Let $\Tilde{\nu}_{\mathcal{D}}\propto \exp(-f_{\mathcal{D}})$. Assume $|f(x;\mathbf{d})-f(x;\mathbf{d}^\prime)|\leq F$ for all $x\in \mathbb{R}^d$ and $\mathbf{d},\mathbf{d}^\prime\in \mathcal{X}$. Then $d_{\alpha}(\Tilde{\nu}_{\mathcal{D}},\Tilde{\nu}_{\mathcal{D}^\prime})\leq \frac{2F}{n}$ for any adjacent dataset $\mathcal{D},\mathcal{D}^\prime$ and $\alpha>1$.
\end{proposition}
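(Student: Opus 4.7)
The plan is to bound the pointwise log-ratio of the two densities, which immediately controls the $\alpha$-Rényi divergence for every $\alpha>1$, and then symmetrize to obtain the Rényi difference.

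First I would write the densities as $\tilde{\nu}_{\mathcal{D}}(x) = \exp(-f_{\mathcal{D}}(x))/Z_{\mathcal{D}}$ and $\tilde{\nu}_{\mathcal{D}^\prime}(x) = \exp(-f_{\mathcal{D}^\prime}(x))/Z_{\mathcal{D}^\prime}$, with $Z_{\mathcal{D}} = \int \exp(-f_{\mathcal{D}}(x))\,dx$, and similarly for $Z_{\mathcal{D}^\prime}$. Using the adjacency of $\mathcal{D},\mathcal{D}^\prime$, which differ only in a single index $i_0$, the objective difference simplifies to
\begin{equation*}
f_{\mathcal{D}}(x) - f_{\mathcal{D}^\prime}(x) = \tfrac{1}{n}\bigl(f(x;\mathbf{d}_{i_0}) - f(x;\mathbf{d}_{i_0}^\prime)\bigr),
\end{equation*}
so the hypothesis $|f(x;\mathbf{d}) - f(x;\mathbf{d}^\prime)|\leq F$ yields $|f_{\mathcal{D}}(x) - f_{\mathcal{D}^\prime}(x)|\leq F/n$ uniformly in $x$.

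Next I would control the ratio of normalization constants. Since the integrands satisfy $\exp(-F/n)\exp(-f_{\mathcal{D}}(x)) \leq \exp(-f_{\mathcal{D}^\prime}(x)) \leq \exp(F/n)\exp(-f_{\mathcal{D}}(x))$, integration gives $Z_{\mathcal{D}^\prime}/Z_{\mathcal{D}} \in [\exp(-F/n),\exp(F/n)]$. Combining with the pointwise bound on $f_{\mathcal{D}} - f_{\mathcal{D}^\prime}$, the log-density ratio satisfies
\begin{equation*}
\Bigl|\log \tfrac{\tilde{\nu}_{\mathcal{D}}(x)}{\tilde{\nu}_{\mathcal{D}^\prime}(x)}\Bigr| = \bigl|f_{\mathcal{D}^\prime}(x) - f_{\mathcal{D}}(x) + \log(Z_{\mathcal{D}^\prime}/Z_{\mathcal{D}})\bigr| \leq \tfrac{2F}{n}
\end{equation*}
for every $x$ in the common support.

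Finally, this uniform bound is precisely a bound on the max-divergence $D_\infty(\tilde{\nu}_{\mathcal{D}}\|\tilde{\nu}_{\mathcal{D}^\prime})\leq 2F/n$ and, by symmetry, also on $D_\infty(\tilde{\nu}_{\mathcal{D}^\prime}\|\tilde{\nu}_{\mathcal{D}})$. Since $D_\alpha$ is non-decreasing in $\alpha$ and $D_\alpha \leq D_\infty$ for every $\alpha>1$, both one-sided Rényi divergences are at most $2F/n$, and therefore $d_\alpha(\tilde{\nu}_{\mathcal{D}},\tilde{\nu}_{\mathcal{D}^\prime}) \leq 2F/n$ by Definition~\ref{def:R_diff}. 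No step here is genuinely difficult; the only subtlety worth double-checking is that the normalization-constant ratio contribution exactly matches the pointwise contribution in sign, which is why the final constant is $2F/n$ rather than $F/n$.
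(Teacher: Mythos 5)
Your proof is correct. The two key estimates you use---the pointwise bound $|f_{\mathcal{D}}(x)-f_{\mathcal{D}^\prime}(x)|\leq F/n$ from adjacency and the bound $|\log(Z_{\mathcal{D}^\prime}/Z_{\mathcal{D}})|\leq F/n$ on the normalization constants---are exactly the same ingredients the paper uses. Where you differ is in the packaging: the paper expands $D_\alpha(\Tilde{\nu}_{\mathcal{D}}\|\Tilde{\nu}_{\mathcal{D}^\prime})$ directly into $\log(Z_{\mathcal{D}^\prime}/Z_{\mathcal{D}}) + \frac{1}{\alpha-1}\log\mathbb{E}_{x\sim\Tilde{\nu}_{\mathcal{D}}}\exp(-(\alpha-1)(f_{\mathcal{D}}(x)-f_{\mathcal{D}^\prime}(x)))$ and bounds each term by $F/n$, carrying the $\alpha$-dependence through the computation until it cancels. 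You instead bound the supremum of the log-density ratio, i.e.\ the max-divergence $D_\infty$, by $2F/n$ and then invoke monotonicity of $D_\alpha$ in $\alpha$ to conclude for every $\alpha>1$ at once. Your route is a bit more modular and makes the uniformity in $\alpha$ transparent from the outset (it also immediately yields the stronger $(2F/n,0)$-indistinguishability statement), while the paper's direct computation is self-contained and does not rely on the $D_\alpha\leq D_\infty$ fact. Both yield the same constant $2F/n$, and your observation that the normalization-ratio term and the pointwise term each contribute $F/n$ matches the paper's accounting exactly.
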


As a result, we know that $\varepsilon_0$ is roughly at most $\frac{2F}{\sigma^2 n}$ when the step size $\eta$ is sufficiently small. Thus when $d_{\alpha}(\nu_0,\nu_{\mathcal{D}^\prime})=\Omega(1)$, we Langevin unlearning save $\Omega(\log(n))$ PNGD iterations.

\section{Proof of Theorem~\ref{thm:nu_eta}: Convergence of PNGD}\label{apx:nu_eta}

\begin{theorem*}
    Suppose that the closed convex set $\mathcal{C}_R\subset \mathbb{R}^d$ is bounded with $\textrm{Leb}(\mathcal{C}_R)>0$ where $\textrm{Leb}$ denotes the Lebesgue measure and that $\nabla f_{\mathcal{D}}:\mathcal{C}_R\to\mathbb{R}^d$ is continuous. The Markov chain $\{x_t\}$ in~\eqref{eq:GLD_learning} admits a unique invariant probability measure $\nu_{\mathcal{D}}$ on $\mathcal{B}(\mathcal{C}_R)$ that is the Borel $\sigma$-algebra of $\mathcal{C}_R$. Furthermore, for any $x\in\mathcal{C}_R$, the distribution of $x_t$ conditioned on $x_0=x$ converges weakly to $\nu_{\mathcal{D}}$ as $t\to \infty$.
\end{theorem*}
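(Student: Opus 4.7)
The plan is to establish a uniform Doeblin minorization for the one-step transition kernel $P(x,\cdot)$ of the chain on $\mathcal{C}_R$; by classical Markov chain theory this will simultaneously yield existence, uniqueness, and even uniform total variation convergence to the invariant distribution $\nu_\mathcal{D}$. As a setup, since $\mathcal{C}_R$ is compact and $\nabla f_\mathcal{D}$ is continuous, $M' := \sup_{x\in \mathcal{C}_R}\|\nabla f_\mathcal{D}(x)\|<\infty$; since $\Pi_{\mathcal{C}_R}$ is continuous and $\mathcal{C}_R$-valued, the process $\{x_t\}$ defined by~\eqref{eq:GLD_learning} is a well-defined Markov chain on the Borel $\sigma$-algebra of $\mathcal{C}_R$.

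Next I would derive the minorization. Write $\xi_t := x_t - \eta\nabla f_\mathcal{D}(x_t) + \sqrt{2\eta\sigma^2}\,W_t$, so that $x_{t+1}=\Pi_{\mathcal{C}_R}(\xi_t)$. For any Borel set $A\subset\mathcal{C}_R$, the inclusion $\{\xi_t\in A\}\subset\{\Pi_{\mathcal{C}_R}(\xi_t)\in A\}$ holds because $\Pi_{\mathcal{C}_R}$ restricts to the identity on $\mathcal{C}_R\supset A$. Writing
\begin{equation*}
\phi(y;x):=(4\pi\eta\sigma^2)^{-d/2}\exp\!\bigl(-\|y-x+\eta\nabla f_\mathcal{D}(x)\|^2/(4\eta\sigma^2)\bigr)
\end{equation*}
for the Lebesgue density of $\xi_t$ conditional on $x_t=x$, I obtain $P(x,A)\geq \int_A \phi(y;x)\,dy$. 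For $x,y\in\mathcal{C}_R$ the triangle inequality gives $\|y-x+\eta\nabla f_\mathcal{D}(x)\|\leq 2R+\eta M'$, hence
\begin{equation*}
\phi(y;x)\;\geq\;\epsilon_0\;:=\;(4\pi\eta\sigma^2)^{-d/2}\exp\!\bigl(-(2R+\eta M')^2/(4\eta\sigma^2)\bigr)\;>\;0
\end{equation*}
uniformly in $x\in\mathcal{C}_R$. Let $\mu$ denote normalized Lebesgue measure on $\mathcal{C}_R$ (a bona fide probability measure because $\textrm{Leb}(\mathcal{C}_R)>0$) and set $\varepsilon:=\epsilon_0\,\textrm{Leb}(\mathcal{C}_R)>0$; the above shows Doeblin's minorization
\begin{equation*}
P(x,A)\;\geq\;\varepsilon\,\mu(A) \quad \text{for every } x\in\mathcal{C}_R \text{ and every Borel } A\subset\mathcal{C}_R.
\end{equation*}

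Finally, I would invoke the standard consequences of a uniform one-step minorization (e.g., the small-set/uniform-ergodicity results of Theorems~16.0.2 and~16.2.4 in~\cite{meyn2012markov}, which are essentially Doeblin's classical theorem): $\mathcal{C}_R$ itself is a small set, the chain is uniformly ergodic, and therefore admits a unique invariant probability measure $\nu_\mathcal{D}$ on the Borel $\sigma$-algebra of $\mathcal{C}_R$ satisfying $\sup_{x\in\mathcal{C}_R}\|P^t(x,\cdot)-\nu_\mathcal{D}\|_{\mathrm{TV}}\leq C r^t$ for some $C<\infty$ and $r\in(0,1)$. Since total variation convergence implies weak convergence, the second claim follows for every initial point $x_0=x\in\mathcal{C}_R$. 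The main (and essentially only) technical point is the uniform positive lower bound on the Gaussian density: compactness of $\mathcal{C}_R$ bounds pairwise displacements, continuity of $\nabla f_\mathcal{D}$ bounds the drift, and $\textrm{Leb}(\mathcal{C}_R)>0$ is exactly what converts the pointwise density lower bound into a genuine probability-measure minorization — everything else reduces to invoking well-known black-box results.
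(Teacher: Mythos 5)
Your proof is correct, and it takes a genuinely different and more direct route than the paper's. The paper verifies the hypotheses of the general Meyn--Tweedie machinery step by step: Lebesgue-irreducibility and recurrence, strong aperiodicity, then Harris recurrence and regularity of $\mathcal{C}_R$ (via a uniform bound on $\mathbb{E}[\tau_A]$), finally invoking Proposition~10.4.2 and Theorem~13.0.1 of~\cite{meyn2012markov} to obtain existence, uniqueness, and weak convergence. Interestingly, in the course of verifying strong aperiodicity the paper derives essentially the same one-step minorization $P(x,A)\geq\nu_1(A)$ that you do, but it does not exploit it as a Doeblin condition; you push that single observation all the way. Your computation is sound: the conditional density of the pre-projection iterate is correctly identified with variance $2\eta\sigma^2$, the inclusion $\{\xi_t\in A\}\subset\{\Pi_{\mathcal{C}_R}(\xi_t)\in A\}$ for $A\subset\mathcal{C}_R$ is valid since the projection is the identity on $\mathcal{C}_R$, and compactness of $\mathcal{C}_R$ together with continuity of $\nabla f_{\mathcal{D}}$ yields the uniform lower bound $\epsilon_0>0$ on the Gaussian density, which $\textrm{Leb}(\mathcal{C}_R)>0$ converts into a genuine minorization by a probability measure. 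The one-step Doeblin condition makes the whole state space a small set, so the classical contraction argument (or the uniform-ergodicity equivalences you cite) immediately gives a unique invariant probability measure and geometric convergence in total variation uniformly over initial points, which strictly implies the weak convergence claimed in the theorem. What your approach buys is brevity and a quantitatively stronger conclusion (uniform geometric ergodicity with an explicit rate $1-\varepsilon$); what the paper's approach buys is robustness, since the Harris-recurrence route would survive in settings where the minorization holds only on a proper subset of the state space rather than all of it.
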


In this section, we prove that the learning process \eqref{eq:GLD_learning} with general closed convex set $\mathcal{C}$ that is restated as follows for the reader's convenience,
\begin{align}\label{eq:GLD_learning_appendix}
    & x_{t+1} = \Pi_{\mathcal{C}}\left(x_t - \eta\nabla f_{\mathcal{D}}(x_t) + \sqrt{2\eta \sigma^2} W_t\right), 
\end{align}
will converge to an invariant probability measure. One observation is that \eqref{eq:GLD_learning_appendix} is a Markov chain and some ergodicity results can be applied.

\begin{proposition}\label{prop:exist_inv_meas}
    Suppose that the closed convex set $\mathcal{C}\subset \mathbb{R}^d$ is bounded with $\textrm{Leb}(\mathcal{C})>0$ where $\textrm{Leb}$ denotes the Lebesgue measure and that $\nabla f_{\mathcal{D}}:\mathcal{C}\to\mathbb{R}^d$ is continuous. Then the Markov chain $\{x_t\}$ defined by \eqref{eq:GLD_learning_appendix} admits a unique invariant measure (up to constant multiples) on $\mathcal{B}(\mathcal{C})$ that is the Borel $\sigma$-algebra of $\mathcal{C}$.
\end{proposition}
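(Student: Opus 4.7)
The plan is to establish Doeblin's minorization condition for the Markov transition kernel and then invoke classical ergodic theory. Let $P(x,A)$ denote the transition probability of the chain \eqref{eq:GLD_learning_appendix}. Since $\mathcal{C}$ is compact and $\nabla f_{\mathcal{D}}$ is continuous on $\mathcal{C}$, I would introduce the finite quantities $D := \mathrm{diam}(\mathcal{C})$ and $M_f := \sup_{x\in\mathcal{C}}\|\nabla f_{\mathcal{D}}(x)\|$. The auxiliary random variable before projection, $Y(x) := x - \eta\nabla f_{\mathcal{D}}(x) + \sqrt{2\eta\sigma^2}\,W$ with $W\sim\mathcal{N}(0,I_d)$, is Gaussian with mean $x-\eta\nabla f_{\mathcal{D}}(x)$ and covariance $2\eta\sigma^2 I_d$, hence its Lebesgue density $p_x(y)$ is strictly positive everywhere.

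Next I would derive a uniform lower bound on $p_x(y)$ valid for all $x,y\in\mathcal{C}$. For such pairs, $\|y - x + \eta\nabla f_{\mathcal{D}}(x)\| \le D + \eta M_f$, so
\[
p_x(y) \;\ge\; c_0 \;:=\; \frac{1}{(4\pi\eta\sigma^2)^{d/2}}\exp\!\left(-\frac{(D+\eta M_f)^2}{4\eta\sigma^2}\right) \;>\; 0.
\]
The key geometric fact is that $\Pi_{\mathcal{C}}$ acts as the identity on $\mathrm{int}(\mathcal{C})$, so for any Borel set $A\subseteq\mathcal{C}$ one has $P(x,A) \ge \Pr[Y(x)\in A\cap\mathrm{int}(\mathcal{C})] \ge c_0\,\mathrm{Leb}(A\cap\mathrm{int}(\mathcal{C}))$. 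Because $\mathcal{C}$ is convex with $\mathrm{Leb}(\mathcal{C})>0$, it holds that $\mathrm{Leb}(\mathrm{int}(\mathcal{C})) = \mathrm{Leb}(\mathcal{C}) > 0$, so the set function $\psi(A) := \mathrm{Leb}(A\cap\mathrm{int}(\mathcal{C}))/\mathrm{Leb}(\mathrm{int}(\mathcal{C}))$ is a well-defined probability measure on $\mathcal{B}(\mathcal{C})$, and we obtain the uniform Doeblin bound
\[
P(x,A) \;\ge\; \beta\,\psi(A), \qquad \beta := c_0\,\mathrm{Leb}(\mathcal{C}) \in (0,1],
\]
valid for every $x\in\mathcal{C}$ and every $A\in\mathcal{B}(\mathcal{C})$.

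Finally, I would invoke the classical consequence of Doeblin's condition (see, e.g., Theorem 16.2.4 / Theorem 16.0.2 of \cite{meyn2012markov}): any Markov kernel satisfying such a uniform minorization on its whole state space is uniformly ergodic, which implies the existence of a unique invariant probability measure $\nu_{\mathcal{D}}$ on $\mathcal{B}(\mathcal{C})$ (so any invariant measure is a constant multiple of $\nu_{\mathcal{D}}$), together with geometric convergence $\|P^t(x,\cdot) - \nu_{\mathcal{D}}\|_{\mathrm{TV}} \le (1-\beta)^t$ for all $x\in\mathcal{C}$. Since total variation convergence implies weak convergence, the second assertion of Theorem~\ref{thm:nu_eta} also drops out for free.

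The proof is essentially routine once the minorization is set up; the only subtle point, and the step I would check most carefully, is the passage from the Gaussian density lower bound on the pre-projection variable to a lower bound on $P(x,\cdot)$ as a measure. The subtlety arises because projection concentrates mass singularly on $\partial\mathcal{C}$, but this singular component can simply be discarded by restricting to $A\cap\mathrm{int}(\mathcal{C})$, where $\Pi_{\mathcal{C}}$ is the identity. Compactness of $\mathcal{C}$, continuity of $\nabla f_{\mathcal{D}}$, and non-degeneracy of the Gaussian noise together make this reduction work uniformly in $x$, which is exactly what Doeblin's condition requires.
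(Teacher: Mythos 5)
Your proof is correct, and it takes a genuinely more economical route than the paper's. The paper verifies $\textrm{Leb}$-irreducibility, recurrence (by showing $U(x,A)=+\infty$), and strong aperiodicity, then invokes Proposition 10.4.2 of \cite{meyn2012markov}; notably, its strong-aperiodicity step already constructs essentially the same object you do --- a nonzero multiple $\nu_1$ of Lebesgue measure with $\mathbb{P}(x_1\in A\,|\,x_0=x)\geq\nu_1(A)$ uniformly in $x\in\mathcal{C}$ --- but uses it only to certify aperiodicity, handling recurrence by a separate argument (and later, for the convergence statement, verifying Harris recurrence and regularity of $\mathcal{C}$ to apply Theorem 13.0.1). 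You observe that this one-step uniform minorization over the \emph{entire} state space is precisely Doeblin's condition, which by itself yields uniform ergodicity, hence existence and uniqueness of the invariant probability measure plus geometric convergence in total variation --- strictly stronger than the weak convergence the paper ultimately proves. Your handling of the one subtle point (the singular mass that $\Pi_{\mathcal{C}}$ deposits on $\partial\mathcal{C}$) is sound: restricting to $A\cap\mathrm{int}(\mathcal{C})$, where the projection is the identity, discards only a Lebesgue-null set since the boundary of a convex body is null. Two minor remarks: the restriction to the interior is not even needed, since $\Pi_{\mathcal{C}}$ is already the identity on all of $\mathcal{C}$ and $\{Y(x)\in A\}\subseteq\{\Pi_{\mathcal{C}}(Y(x))\in A\}$ for any $A\subseteq\mathcal{C}$; and the proposition as stated asserts uniqueness up to constant multiples among all ($\sigma$-finite) invariant measures, not just probability measures --- this still follows from your construction, since uniform ergodicity implies Harris recurrence and recurrent chains have an invariant measure unique up to constant multiples, but you should say that extra sentence rather than stopping at the unique invariant \emph{probability} measure.
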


\begin{proof}
    This proposition is a direct application of results from \cite{meyn2012markov}. According to Proposition 10.4.2 in \cite{meyn2012markov}, it suffices to verify that $\{x_t\}$ is recurrent and strongly aperiodic.
    \begin{enumerate}
        \item \emph{Recurrency.} Thanks to the Gaussian noise $W_t$, $\{x_t\}$ is $\textrm{Leb}$-irreducible, i.e., it holds for any $x\in \mathcal{C}$ and any $A\in \mathcal{B}(\mathcal{C})$ with $\textrm{Leb}(A)>0$ that
        \begin{equation*}
            L(x,A):= \mathbb{P}(\tau_A<+\infty\ | \ x_0 = x) > 0,
        \end{equation*}
        where $\tau_A = \inf\{t\geq 0 : x_t\in A\}$ is the stopping time. Therefore, there exists a Borel probability measure $\psi$ such that that $\{x_t\}$ is $\psi$-irreducible and $\psi$ is maximal in the sense of Proposition 4.2.2 in \cite{meyn2012markov}. Consider any $A\in \mathcal{B}(\mathcal{C})$ with $\psi(A)>0$. Since $\{x_t\}$ is $\psi$-irreducible, one has $L(x,A)= \mathbb{P}(\tau_A<+\infty\ | \ x_0 = x) >0$ for all $x\in \mathcal{C}$. This implies that there exists $T\geq 0$, $\delta>0$, and $B\in \mathcal{B}(\mathcal{C})$ with $\textrm{Leb}(B)>0$, such that $\mathbb{P}(x_T\in A\ | \ x_0 = x) \geq \delta,\ \forall~x\in B$. Therefore, one can conclude for any $x\in\mathcal{C}$ that
        \begin{align*}
            U(x,A) :=& \sum_{t=0}^\infty \mathbb{P}(x_t\in A\ | \ x_0 = x) \\
            \geq&\sum_{t=1}^\infty \mathbb{P}(x_{t+T}\in A\ |\ x_t\in B,x_0=x) \cdot \mathbb{P}(x_t\in B\ |\ x_0 = x) \\
             \geq&  \sum_{t=1}^\infty \delta \cdot \inf_{y\in \mathcal{C}} \mathbb{P}(x_t\in B\ |\ x_{t-1} = y) \\
             = & +\infty,
        \end{align*}
        where we used the fact that $\inf_{y\in \mathcal{C}} \mathbb{P}(x_t\in B\ |\ x_{t-1} = y)=\inf_{y\in \mathcal{C}} \mathbb{P}(x_1\in B\ |\ x_0 = y)>0$ that is implies by $\textrm{Leb}(B)>0$ and the boundedness of $\mathcal{C}$ and $\nabla f_{\mathcal{D}}(\mathcal{C})$. Let us remark that we actually have compact $\nabla f_{\mathcal{D}}(\mathcal{C})$ since $\mathcal{C}$ is compact and $\nabla f_{\mathcal{D}}$ is continuous. The arguments above verify that $\{x_t\}$ is recurrent (see Section 8.2.3 in \cite{meyn2012markov} for definition).
        \item \emph{Strong aperiodicity.} Since $\mathcal{C}$ and $\nabla f_{\mathcal{D}}(\mathcal{C})$ are bounded and the density of $W_t$ has a uniform positive lower bound on any bounded domain, there exists a non-zero multiple of the Lebesgue measure, say $\nu_1$, satisfying that
        \begin{equation*}
            \mathbb{P}(x_1\in A\ |\ x_0 = x)\geq\nu_1(A),\quad\forall~x\in \mathcal{C},\ A\in\mathcal{B}(\mathcal{C}).
        \end{equation*}
        Then $\{x_t\}$ is strongly aperiodic by the equation above and $\nu_1(\mathcal{C})>0$ (see Section 5.4.3 in \cite{meyn2012markov} for definition). 
    \end{enumerate}
    The proof is hence completed.
\end{proof}

\begin{theorem}
Under the same assumptions as in Proposition~\ref{prop:exist_inv_meas}, the Markov chain $\{x_t\}$ admits a unique invariant probability measure $\nu_{\mathcal{D}}$ on $\mathcal{B}(\mathcal{C})$. Furthermore, for any $x\in\mathcal{C}$, the distribution of $x_t$ generated by \eqref{eq:GLD_learning_appendix} conditioned on $x_0=x$ converges weakly to $\nu_{\mathcal{D}}$ as $t\to \infty$.
\end{theorem}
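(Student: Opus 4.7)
The plan is to extract a one-step Doeblin minorization from the proof of Proposition~\ref{prop:exist_inv_meas} and then run a single coupling argument that simultaneously delivers existence and uniqueness of the stationary probability measure as well as weak convergence of $x_t$ to it from every starting point.

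First I would revisit the strong aperiodicity step in the proof of Proposition~\ref{prop:exist_inv_meas}, which exhibits a non-zero finite measure $\nu_1$ on $\mathcal{B}(\mathcal{C}_R)$ (proportional to Lebesgue measure restricted to a suitable bounded Borel subset of $\mathcal{C}_R$) satisfying
\[
\mathbb{P}(x_1 \in A \mid x_0 = x) \geq \nu_1(A), \qquad \forall~x \in \mathcal{C}_R,\ A \in \mathcal{B}(\mathcal{C}_R).
\]
Writing $\beta := \nu_1(\mathcal{C}_R) \in (0,1]$, this is precisely a uniform one-step Doeblin condition. The required ingredients, namely compactness of $\mathcal{C}_R$ and of $\nabla f_{\mathcal{D}}(\mathcal{C}_R)$ (the latter by continuity of $\nabla f_{\mathcal{D}}$), are already assumed in the theorem.

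Next I would apply the standard splitting/coupling construction associated with a Doeblin minorization to obtain a uniform one-step contraction of the transition kernel $P$ of \eqref{eq:GLD_learning_appendix} in total variation:
\[
\|\mu P - \mu' P\|_{\mathrm{TV}} \leq 1 - \beta,
\]
for any Borel probability measures $\mu, \mu'$ on $\mathcal{C}_R$. Iterating gives $\|\mu P^t - \mu' P^t\|_{\mathrm{TV}} \leq (1-\beta)^t \to 0$. Taking $\mu = \delta_x$ shows that $(\delta_x P^t)_{t \geq 0}$ is Cauchy in total variation, hence converges to some probability measure $\nu_{\mathcal{D}}$; taking two starting points $x, x' \in \mathcal{C}_R$ shows the limit is independent of the initial condition. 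Invariance of $\nu_{\mathcal{D}}$ follows from $\|\nu_{\mathcal{D}} P - \nu_{\mathcal{D}}\|_{\mathrm{TV}} = \lim_t \|\delta_x P^{t+1} - \delta_x P^t\|_{\mathrm{TV}} = 0$, uniqueness follows by applying the same contraction to any two invariant candidates, and weak convergence of the law of $x_t$ to $\nu_{\mathcal{D}}$ is an immediate consequence of the stronger total variation convergence.

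The main obstacle is really a verification step rather than a conceptual hurdle: one has to confirm that $\nu_1$ extracted from the Proposition's proof is strictly positive on some non-trivial Borel set and that the minorization holds uniformly in $x\in\mathcal{C}_R$. This uses that the pre-projection Gaussian density
\[
y \mapsto (4\pi\eta\sigma^2)^{-d/2}\exp\!\left(-\frac{\|y - x + \eta\nabla f_{\mathcal{D}}(x)\|^2}{4\eta\sigma^2}\right)
\]
admits a strictly positive lower bound on any fixed compact ball in $\mathbb{R}^d$, uniformly over $x \in \mathcal{C}_R$, since the mean $x - \eta\nabla f_{\mathcal{D}}(x)$ varies over a compact set; the Borel-measurable projection $\Pi_{\mathcal{C}_R}$ then pushes this mass onto $\mathcal{C}_R$ while preserving the lower bound. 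Alternatively, one could invoke Theorem~16.2.4 (uniform ergodicity) of~\cite{meyn2012markov}, which, under the Doeblin condition plus the aperiodicity already established in Proposition~\ref{prop:exist_inv_meas}, directly yields the unique invariant probability measure together with geometric convergence in total variation.
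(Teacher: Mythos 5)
Your proposal is correct, and it takes a genuinely different route from the paper. The paper stays inside the general theory of $\psi$-irreducible chains from~\cite{meyn2012markov}: starting from the recurrence and strong aperiodicity established in Proposition~\ref{prop:exist_inv_meas}, it upgrades to Harris recurrence, shows that $\mathcal{C}$ is a regular set by bounding $\mathbb{E}(\tau_A\mid x_0=x)$ uniformly, and then invokes Theorem 13.0.1 of~\cite{meyn2012markov} to obtain the unique invariant probability measure and convergence. You instead observe that the minorization $P(x,\cdot)\geq\nu_1(\cdot)$ used for strong aperiodicity already holds uniformly over the \emph{entire} state space $\mathcal{C}$ (because the pre-projection Gaussian density has mean ranging over a compact set, hence a uniform positive lower bound on the bounded set $\mathcal{C}$ with $\textrm{Leb}(\mathcal{C})>0$, and the projection acts as the identity on $\mathcal{C}$), which is a one-step Doeblin condition with the whole space as a small set. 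The resulting splitting $P=\nu_1+(1-\beta)Q$ gives the contraction $\|\mu P^t-\mu'P^t\|_{\mathrm{TV}}\leq(1-\beta)^t$, from which existence, uniqueness, invariance, and convergence all follow by an elementary Cauchy-sequence argument. Your route is more self-contained (no appeal to Harris recurrence, regular sets, or the aperiodic ergodic theorem) and delivers a strictly stronger conclusion: uniform geometric convergence in total variation with explicit rate $1-\beta$, of which the weak convergence claimed in the theorem is an immediate consequence. The trade-off is that your argument leans crucially on the compactness of $\mathcal{C}$ to make the minorization global, whereas the paper's machinery would survive in settings where only a local small set is available.
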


\begin{proof}
    It has been proved in Proposition~\ref{prop:exist_inv_meas} that $\{x_t\}$ is strongly aperiodic and recurrent with an invariant measure. Consider any $A\in\mathcal{B}(\mathcal{C})$ with $\psi(A)>0$ and use the same settings and notations as in the proof of Proposition~\ref{prop:exist_inv_meas}. There exists $T\geq 0$, $\delta>0$, and $B\in \mathcal{B}(\mathcal{C})$ with $\textrm{Leb}(B)>0$, such that $\mathbb{P}(x_T\in A\ | \ x_0 = x) \geq \delta,\ \forall~x\in B$. This implies that for any $t\geq 0$ and any $x\in \mathcal{C}$, 
    \begin{multline*}
        \mathbb{P}(x_{t+T+1}\in A\ |\ x_t = x) = \mathbb{P}(x_{T+1}\in A\ |\ x_0 = x) \geq \mathbb{P}(x_{T+1}\in A\ |\ x_1 \in B, x_0=x) \cdot \mathbb{P}(x_1\in B\ |\ x_0=x) \geq \epsilon,
    \end{multline*}
    where
    \begin{equation*}
        \epsilon =  \delta \cdot \inf_{y\in \mathcal{C}} \mathbb{P}(x_1\in B\ |\ x_0 = y) >0,
    \end{equation*}
    which then leads to
    \begin{equation*}
        Q(x,A) := \mathbb{P}(x_t\in A,\ \text{infinitely often}) = +\infty.
    \end{equation*}
    This verifies that the chain $\{x_t\}$ is Harris recurrent (see Section 9 in \cite{meyn2012markov} for definition). It can be further derived that for any $x\in\mathcal{C}$,
    \begin{align*}
        \mathbb{E}(\tau_A\ |\ x_0 = x) & = \sum_{t = 1}^\infty \mathbb{P}(\tau_A \geq t\ |\ x_0 = x) \leq (T+1)\sum_{k=0}^\infty \mathbb{P}(\tau_A > (T+1)k\ |\ x_0 = x) \\
        & \leq (T+1)\sum_{k=1}^\infty (1-\epsilon)^k < +\infty.
    \end{align*}
    The bound above is uniform for all $x\in\mathcal{C}$ and this implies that $\mathcal{C}$ is a regular set of $\{x_t\}$ (see Section 11 in \cite{meyn2012markov} for definition). Finally, one can apply Theorem 13.0.1 in \cite{meyn2012markov} to conclude that there exists a unique invariant probability measure $\nu_{\mathcal{D}}$ on $\mathcal{B}(\mathcal{C})$ and that the distribution of $x_t$ converges weakly to $\nu_{\mathcal{D}}$ conditioned on $x_0=x$ for any $x\in \mathcal{C}$.
\end{proof}

\section{Proof of Theorem~\ref{thm:NGD_unlearning_no_sc}}%\label{apx:NGD_unlearning_no_sc}

\subsection{Proof Sketch of Theorem~\ref{thm:NGD_unlearning_no_sc}}\label{sec:sketch}
Given $\mathcal{M}$ is $(\alpha,\varepsilon_0)$-RDP, we aim to show the upper bound of $d_\alpha(\rho_k,\nu_{\mathcal{D}^\prime})$ decays in $k$ starting from $\varepsilon_0$ at $k=0$. As a warm-up, we start with the strongly convex case. The analysis is inspired by~\cite{vempala2019rapid} and~\cite{ganesh2020faster} and formal proof can be found in Appendix~\ref{apx:NGD_unlearning_no_sc}. Roughly speaking, we characterize how both $\alpha$ R\'enyi divergence $D_\alpha(\rho_k||\nu_{\mathcal{D}^\prime})$ and $D_\alpha(\nu_{\mathcal{D}^\prime}||\rho_k)$ decay, given $\nu_{\mathcal{D}^\prime}$ and $\rho_k$ satisfy LSI condition for some constants. Standard sampling literature only focuses on the part $D_\alpha(\rho_k||\nu_{\mathcal{D}^\prime})$ (i.e., Lemma 8 in~\cite{vempala2019rapid}), where $\nu_{\mathcal{D}^\prime}$ satisfies LSI implies exponential decay in R\'enyi divergence. The other direction is necessary for meaningful privacy guarantee but more challenging as one to carefully track the LSI constant of $\rho_k$ for all $k\geq 0$. We prove the following lemma for such LSI constant characterization along the unlearning process, which specializes results of~\cite{chewi2023logconcave} to the PNGD update.

\begin{lemma}[LSI constant characterization]\label{lma:LSI_constant}
    Consider the following PNGD update for a closed convex set $\mathcal{C}$:
    \begin{align*}
        x_{k,1} = h(x_k),\; x_{k,2} = x_{k,1} + \sigma W_k,\; x_{k+1} = \Pi_{\mathcal{C}}(x_{k,2}),
    \end{align*}
    where $h$ is any $M$-Lipschitz map $\mathbb{R}^d\mapsto\mathbb{R}^d$, $W_k\sim \mathcal{N}(0,I_d)$ independent of anything before step $k$, and $\Pi_{\mathcal{C}}$ is the projection onto a closed convex set $\mathcal{C}$. Let $\mu_{k,1},\mu_{k,2}$ and $\mu_k$ be the distribution of $x_{k,1}$, $x_{k,2}$ and $x_k$ respectively. Then we have the following LSI constant characterization of this process. 1) If $\mu_k$ satisfies $c$-LSI, $\mu_{k,1}$ satisfies $M^2 c$-LSI. 2) If $\mu_{k,1}$ satisfies $c$-LSI, $\mu_{k,2}$ satisfies $(c+\sigma^2)$-LSI. 3) If $\mu_{k,2}$ satisfies $c$-LSI, $\mu_{k+1}$ satisfies $c$-LSI.
\end{lemma}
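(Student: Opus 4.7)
The three claims correspond to three elementary measure-theoretic operations, and for each one I plan to invoke a known invariance/monotonicity property of the log-Sobolev inequality. Concretely, (1) is a pushforward along a Lipschitz map, (2) is convolution with a centered Gaussian, and (3) is a pushforward along a 1-Lipschitz map (namely the orthogonal projection), so in fact (3) will be immediate from (1).

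For claim (1), the plan is to use the standard Lipschitz-pushforward property: if $\mu$ satisfies $c$-LSI on $\mathbb{R}^d$ and $h:\mathbb{R}^d\to\mathbb{R}^d$ is $M$-Lipschitz, then $h_\#\mu$ satisfies $M^2 c$-LSI. Starting from the LSI for $\mu$ applied to a smooth test function $g\circ h$, the chain rule combined with $\|\nabla h\|_{\mathrm{op}}\leq M$ a.e.\ gives
\[
\mathrm{Ent}_{h_\#\mu}(g^2) \;=\; \mathrm{Ent}_{\mu}((g\circ h)^2) \;\leq\; 2c\,\mathbb{E}_\mu\!\left[\|\nabla(g\circ h)\|^2\right] \;\leq\; 2M^2 c\,\mathbb{E}_{h_\#\mu}\!\left[\|\nabla g\|^2\right],
\]
which is $M^2 c$-LSI for $\mu_{k,1} = h_\#\mu_k$. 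A routine regularization argument handles the case where $h$ is merely Lipschitz (not $C^1$). For claim (3), the orthogonal projection $\Pi_{\mathcal{C}}$ onto a closed convex set $\mathcal{C}\subset\mathbb{R}^d$ is 1-Lipschitz (a classical consequence of the variational characterization of $\Pi_{\mathcal{C}}$), so claim (1) with $M=1$ immediately yields that $\mu_{k+1}=(\Pi_{\mathcal{C}})_\#\mu_{k,2}$ inherits the $c$-LSI from $\mu_{k,2}$.

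The main content is claim (2), which I plan to handle by invoking the Gaussian-convolution stability of LSI constants: if $\mu$ satisfies $c$-LSI and $\gamma_\sigma=\mathcal{N}(0,\sigma^2 I_d)$, then $\mu*\gamma_\sigma$ satisfies $(c+\sigma^2)$-LSI. This is exactly the form appearing in \cite{chen2021dimension} (and discussed in \cite{chewi2023logconcave}); the proof strategy is to view $\mu*\gamma_\sigma$ as the time-$\sigma^2$ marginal of the heat semigroup acting on $\mu$, and to track how the LSI constant evolves additively along this semigroup. Since $x_{k,2}=x_{k,1}+\sigma W_k$ with $W_k\sim\mathcal{N}(0,I_d)$ independent of $x_{k,1}$, we have $\mu_{k,2}=\mu_{k,1}*\gamma_\sigma$ and the bound $c+\sigma^2$ follows. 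The only delicate point in the plan is to be explicit that claim (1) uses LSI constants under the convention $\mathrm{Ent}_\nu(g^2)\leq 2 c\,\mathbb{E}_\nu\|\nabla g\|^2$ consistent with Definition~\ref{def:LSI}, and that claim (2) is stated for the \emph{variance-}$\sigma^2$ Gaussian (matching the $\sqrt{2\eta\sigma^2}W_t$ scaling earlier gets absorbed into the calling theorem, where $\sigma^2$ in this lemma plays the role of $2\eta\sigma^2$). With these three ingredients in hand, the lemma is immediate by chaining the three steps of the PNGD update.
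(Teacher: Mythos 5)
Your proposal is correct and follows essentially the same route as the paper's proof, which simply cites Proposition 2.3.3 of~\cite{chewi2023logconcave} (Lipschitz pushforward) for claims (1) and (3) and Lemma 17 of~\cite{vempala2019rapid} (Gaussian convolution along the heat semigroup) for claim (2). You additionally sketch the chain-rule and semigroup arguments behind those citations and correctly reconcile the LSI normalization with Definition~\ref{def:LSI}, but the decomposition and key ingredients are identical.
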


By leveraging Lemma~\ref{lma:LSI_constant}, we can characterize the LSI constant for all $\rho_k$. One key step is to characterize the Lipschitz constant of the gradient update $h(x) = x - \eta \nabla f(x)$. From Lemma 2.2 in~\cite{altschuler2022resolving} we know if $f$ is $m$-strongly convex, $L$-smooth and $\eta \leq \frac{1}{L}$, then $h$ is $(1-\eta m)$-Lipschitz. Let $\rho_k$ satisfy $C_k$-LSI, Lemma~\ref{lma:LSI_constant} leads to the recursion expression $C_{k+1} \leq (1-\eta m)^2C_k + 2\eta\sigma^2,\;C_0=C_{\text{LSI}}.$ By choosing $\eta$ satisfying $0<\eta\leq \min(\frac{2}{m}(1-\frac{\sigma^2}{m C_{\text{LSI}}}),\frac{1}{L})$ and the assumption $\frac{\sigma^2}{m} < C_{\text{LSI}}$, $C_k$ is non-increasing and thus $\rho_k$ is $C_{\text{LSI}}$-LSI for all $k\geq 0$. As a result, the decay of both $D_\alpha(\rho_k||\nu_{\mathcal{D}^\prime})$ and $D_\alpha(\nu_{\mathcal{D}^\prime}||\rho_k)$ can be shown.

\textbf{Beyond strong convexity. }To extend beyond strong convexity, one may naively apply Lemma~\ref{lma:LSI_constant} for convex and non-convex settings. Unfortunately, both cases lead to monotonically increasing LSI constant $C_k$. As a result, given an $\varepsilon_0$, proving to achieve an arbitrarily small $\varepsilon$ is challenging even with $K\rightarrow \infty$ since the LSI constant may be unbounded. More specifically, if $f$ is convex and $\eta \leq \frac{2}{L}$, then $h(x) = x - \eta \nabla f(x)$ is $1$-Lipschitz. If $f$ is $L$-smooth only, the map $h$ is $(1+\eta L)$-Lipschitz. Applying Lemma~\ref{lma:LSI_constant} leads to the recursions on $C_k$. For the convex case, we have $C_{k+1} \leq C_k + 2\eta\sigma^2$. For the non-convex case, we have $C_{k+1} \leq (1+\eta L)^2C_k + 2\eta\sigma^2.$ 

One of our contributions is to demonstrate that $C_k$ has a universal upper bound which is independent of the number of iterations. Hence, the exponential decay in R\'enyi difference still holds. The key is to leverage the geometry of $\mathcal{C}_R$ to establish an LSI upper bound that is independent of $k$ using the result of~\cite{chen2021dimension}, which has not been explored in the prior privacy literature~\cite{chourasia2021differential,ryffel2022differential,ye2022differentially}. 
\begin{lemma}[Corollary 1 in~\cite{chen2021dimension}]
    Let $\mu$ be a probability measure supported on $\mathcal{C}_R$ for some $R\geq 0$. Then, for each $\xi\geq 0$, $\mu * \mathcal{N}(0,\xi I_d)$ satisfy $C$-LSI with constant $C \leq 6(4R^2 + \xi)\exp(\frac{4R^2}{\xi}).$
    % \begin{align}
    %     C \leq 6(4R^2 + t)\exp(\frac{4R^2}{t}).
    % \end{align}
\end{lemma}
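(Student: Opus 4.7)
The form of the target bound $6(4R^2+\xi)\exp(4R^2/\xi)$ is highly indicative of a Holley--Stroock-type perturbation argument: the prefactor $4R^2+\xi$ matches the LSI constant of a Gaussian with variance $4R^2+\xi$ (via Bakry--\'Emery), while $\exp(4R^2/\xi)$ is the standard Holley--Stroock penalty from a bounded perturbation of the log-density. My plan is therefore to compare $\nu := \mu * \mathcal{N}(0,\xi I_d)$ to the enlarged reference Gaussian $\gamma := \mathcal{N}(0,(4R^2+\xi)I_d)$ and to control the oscillation of the log-density ratio between them.

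First I would expand the density of $\nu$: writing $p(y) = \int (2\pi\xi)^{-d/2}\exp(-|y-x|^2/(2\xi))\,d\mu(x)$ and expanding $|y-x|^2 = |y|^2 - 2\langle y,x\rangle + |x|^2$ yields
\[
  -\log p(y) \;=\; \frac{|y|^2}{2\xi} \;-\; \log\!\int\!\exp\!\Bigl(\tfrac{\langle y,x\rangle}{\xi} - \tfrac{|x|^2}{2\xi}\Bigr)\,d\mu(x) \;+\; \mathrm{const}.
\]
Using $|x|\le R$, the log-integral is sandwiched between $-|y|R/\xi - R^2/(2\xi)$ and $|y|R/\xi$, so the non-quadratic part of $-\log p$ is controlled by linear-in-$|y|$ terms of slope at most $R/\xi$ together with a bounded piece of size $R^2/\xi$. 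A direct Holley--Stroock comparison against $\mathcal{N}(0,\xi I_d)$ thus fails because the linear terms are unbounded; this is precisely why the reference variance must be enlarged to $4R^2+\xi$.

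The log-ratio $\log(p/q)$ with $q$ the density of $\gamma$ then contains a \emph{positive} quadratic correction of size $|y|^2 \cdot 4R^2/(2\xi(4R^2+\xi))$, and applying Young's inequality of the form $|y|R/\xi \le |y|^2/(4(4R^2+\xi)) + (4R^2+\xi)R^2/\xi^2$ allows the linear residue to be swallowed by this quadratic; the leftover is a uniformly bounded perturbation of oscillation at most $4R^2/\xi$. The Holley--Stroock principle then promotes Bakry--\'Emery's $(4R^2+\xi)$-LSI for $\gamma$ to $(4R^2+\xi)\exp(4R^2/\xi)$-LSI for $\nu$, with the universal factor $6$ coming from Rothaus-type refinement in constant tracking (and from the fact that the variance of $\nu$ itself is bounded by $4R^2+\xi$, so the enlarged reference is the minimal-variance quadratic majorant).

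The main obstacle will be choosing the Young's inequality weights in the third paragraph so that the exponent is exactly $4R^2/\xi$ rather than a looser $CR^2/\xi$, and propagating the constant $6$ through the perturbation bound sharply. A complementary route worth pursuing is the direct Bakry--\'Emery calculation $\nabla^2(-\log p) = \xi^{-1}I - \xi^{-2}\,\mathrm{Cov}_{\mu_y}(X)$, where $\mu_y$ is the posterior on $x$ given $y$; since $\mu_y$ is supported in $B_R$, we have $\mathrm{Cov}_{\mu_y}(X)\preceq R^2 I$, giving strong convexity modulus $(\xi-R^2)/\xi^2$ and hence LSI directly when $\xi > R^2$. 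Patching this regime with the Holley--Stroock route in the complementary regime $\xi \le R^2$ — which is exactly where the $\exp(4R^2/\xi)$ penalty is operative — will be the hardest piece of the bookkeeping, since the two arguments must be glued so that the prefactor $6(4R^2+\xi)$ and the exponent $4R^2/\xi$ are simultaneously attained across all $\xi>0$.
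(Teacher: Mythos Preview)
The paper does not prove this lemma: it is quoted verbatim from Chen et al.\ (cited as Corollary~1 in~\cite{chen2021dimension}) and used as a black box, so there is no in-paper argument to compare against. What matters, then, is whether your proposed route would actually recover the cited bound.

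Your Holley--Stroock plan has a genuine gap. Write $p$ for the density of $\nu=\mu*\mathcal{N}(0,\xi I_d)$ and $q$ for the density of $\gamma=\mathcal{N}(0,(4R^2+\xi)I_d)$. The perturbation potential is
\[
W(y)\;=\;\log\frac{q(y)}{p(y)}\;=\;\frac{2R^2}{\xi(4R^2+\xi)}\,|y|^2\;-\;\log\!\int\!\exp\!\Bigl(\tfrac{\langle y,x\rangle}{\xi}-\tfrac{|x|^2}{2\xi}\Bigr)\,d\mu(x)\;+\;\mathrm{const},
\]
and since the integral term grows at most linearly in $|y|$, the positive quadratic dominates and $W(y)\to+\infty$ as $|y|\to\infty$. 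Your Young's inequality step can absorb the linear piece into \emph{part} of the quadratic, but unless $\xi=8R^2$ exactly, a residual (positive or negative) quadratic survives and $\mathrm{osc}(W)=\infty$. Standard Holley--Stroock therefore does not apply against the enlarged Gaussian reference, and no choice of reference variance fixes this: $p$ has Gaussian tails of scale $\xi$, so any reference with variance strictly larger than $\xi$ yields $W\to+\infty$, while the reference with variance exactly $\xi$ leaves the linear term unbounded. The ``positive quadratic correction'' you describe is real, but it is in $W$, not in $\log(p/q)$, and it is precisely what breaks the argument rather than what saves it.

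The Bakry--\'Emery branch you sketch is correct for $\xi>R^2$ (indeed $\nabla^2(-\log p)\succeq (\xi-R^2)\xi^{-2}I$), but it yields $C\le \xi^2/(\xi-R^2)$, which neither matches the target constant nor glues to a Holley--Stroock bound that does not exist in the complementary regime. The proof in~\cite{chen2021dimension} proceeds differently, treating $\nu$ as a mixture of Gaussians $\int \mathcal{N}(x,\xi I_d)\,d\mu(x)$ and controlling the entropy via a decomposition into within-component and between-component pieces; the exponential factor arises from a chi-square or density-ratio bound between mixture components with means in $B_R$, not from a global bounded-perturbation estimate. If you want to reconstruct the bound, that mixture-entropy route is the one to pursue.
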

~\cite{altschuler2022privacy} also leverage the geometry of $\mathcal{C}_R$ for the DP guarantee of learning with projected noisy (S)GD, but their analysis follows privacy amplification by iteration~\cite{feldman2018privacy} and still require convexity. Our result demonstrates the potential of Langevin dynamic analysis for unlearning guarantees of non-convex problems. 

\subsection{Formal Proof}\label{apx:NGD_unlearning_no_sc}

We will start with the proof for the strongly convex case and then extend it for convex and non-convex cases. As indicated in our sketch of proof, there are two main parts of our proof. The first is to characterize the decay in R\'enyi divergence between two processes $y_k,y_k^\prime$ under LSI conditions. The second is to track the LSI constant of $y_k,y_k^\prime$ throughout the unlearning process. The analysis is a modification of the proof of Lemma 8 in~\cite{vempala2019rapid}.

We first define some useful quantities and list all technical lemmas that we need to proof. For $\alpha>0$ and any two probability distribution $\rho,\nu$ with the same support, define
\begin{align}\label{eq:FnG}
    & F_{\alpha}(\rho;\nu) = \mathbb{E}_{\nu}[(\frac{\rho}{\nu})^\alpha]=\int \nu(x)(\frac{\rho}{\nu})^\alpha(x)\,dx.\\
    & G_{\alpha}(\rho;\nu) = \mathbb{E}_{\nu}[(\frac{\rho}{\nu})^\alpha\|\nabla \log \frac{\rho}{\nu}\|^2] = \mathbb{E}_{\nu}[(\frac{\rho}{\nu})^{\alpha-2}\|\nabla \frac{\rho}{\nu}\|^2] = \frac{4}{\alpha^2} \mathbb{E}_{\nu}[\|\nabla (\frac{\rho}{\nu})^{\alpha/2}\|^2].
\end{align}
Note that $D_\alpha(\rho||\nu) = \frac{1}{\alpha-1}\log F_{\alpha}(\rho;\nu)$ by definition and $G_{\alpha}(\rho;\nu)$ is known as the R\'enyi Information, where the limit $\alpha = 1$ recovers the relative Fisher information~\cite{vempala2019rapid}. Now we introduce all the technical lemmas we need. The first is data-processing inequality for R\'enyi divergence, which is the Lemma 2.6 in~\cite{altschuler2022resolving}. The second and third lemmas are based on results in~\cite{vempala2019rapid}. We note again that we use the definition of LSI in~\cite{chewi2023logconcave}, where the LSI constant is reciprocal to those defined in~\cite{vempala2019rapid}.

\begin{lemma}[Data-processing inequality for R\'enyi divergence~\cite{altschuler2022resolving}]\label{lma:renyi-data-process-ineq}
    For any $\alpha\geq 1$, any function $h:\mathbb{R}^d\mapsto\mathbb{R}^d$ and any distribution $\mu,\nu$ with support on $\mathbb{R}^d$,
    \begin{align}
        D_{\alpha}(h_{\#}\mu||h_{\#}\nu)\leq D_{\alpha}(\mu||\nu).
    \end{align}
\end{lemma}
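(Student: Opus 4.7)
The plan is to establish the data-processing inequality via Jensen's inequality applied to conditional expectations. First, observe that if $\mu$ is not absolutely continuous with respect to $\nu$, then $D_\alpha(\mu\|\nu) = +\infty$ and the inequality is trivial, so I may assume $\mu \ll \nu$ and denote $R = d\mu/d\nu$. Let $X \sim \nu$ and $Y = h(X) \sim h_{\#}\nu$. The central ingredient is the measure-theoretic identity
\begin{equation*}
\frac{d(h_{\#}\mu)}{d(h_{\#}\nu)}(y) \;=\; \mathbb{E}_{X\sim\nu}\!\left[R(X)\,\big|\,h(X)=y\right],
\end{equation*}
which holds $h_{\#}\nu$-a.e. and follows from the change of variables formula: for any measurable $A$, $(h_{\#}\mu)(A) = \int_{h^{-1}(A)} R\, d\nu = \mathbb{E}_\nu[R\,\mathbf{1}_{h(X)\in A}]$, while the tower property rewrites this as $\int_A \mathbb{E}_\nu[R\mid h(X)=y]\, d(h_{\#}\nu)(y)$.

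Next I handle the case $\alpha > 1$. The function $\phi(t) = t^\alpha$ is convex on $[0,\infty)$, so the conditional form of Jensen's inequality gives, pointwise in $y$,
\begin{equation*}
\Bigl(\mathbb{E}_\nu[R(X)\mid h(X)=y]\Bigr)^{\alpha} \;\leq\; \mathbb{E}_\nu\!\left[R(X)^{\alpha}\,\big|\,h(X)=y\right].
\end{equation*}
Integrating both sides against $h_{\#}\nu$ and applying the tower property on the right-hand side yields
\begin{equation*}
\int \left(\frac{d(h_{\#}\mu)}{d(h_{\#}\nu)}\right)^{\alpha} d(h_{\#}\nu) \;\leq\; \mathbb{E}_\nu\!\left[R(X)^{\alpha}\right] \;=\; \int \left(\frac{d\mu}{d\nu}\right)^{\alpha} d\nu.
\end{equation*}
Since $\log$ is monotone and $\alpha - 1 > 0$, dividing by $\alpha - 1$ after taking logarithms gives $D_\alpha(h_{\#}\mu\,\|\,h_{\#}\nu) \leq D_\alpha(\mu\,\|\,\nu)$ as desired.

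It remains to cover the boundary case $\alpha = 1$, where $D_1$ is the KL divergence. This follows either by taking $\alpha \downarrow 1$ in the previous display (using dominated convergence together with the monotonicity of $\alpha \mapsto D_\alpha$), or more directly by applying Jensen's inequality to the convex function $\phi(t) = t \log t$ in the identical fashion. The main technical care required is purely measure-theoretic: verifying that the conditional expectation $\mathbb{E}_\nu[R \mid h(X)]$ is well-defined (it is, since $R \in L^1(\nu)$) and that $h$ is assumed measurable so pushforwards make sense. No analytic regularity of $h$ is needed, which is precisely what is wanted for applications such as bounding $D_\alpha$ after a projection step. I expect no deep obstacles; the proof is essentially a packaging of Jensen's inequality with the tower property of conditional expectation.
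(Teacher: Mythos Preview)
The paper does not prove this lemma at all; it is simply quoted as Lemma~2.6 of~\cite{altschuler2022resolving} and used as a black box in the proof of Theorem~\ref{thm:NGD_unlearning_no_sc}. Your argument is the standard and correct proof of the data-processing inequality: identify the pushforward Radon--Nikodym derivative as a conditional expectation, then apply conditional Jensen with the convex map $t\mapsto t^\alpha$ and the tower property. Nothing is missing; the only implicit step worth making explicit is that $\mu\ll\nu$ forces $h_\#\mu\ll h_\#\nu$ (immediate from $\nu(h^{-1}A)=0\Rightarrow\mu(h^{-1}A)=0$), so the density on the left is well-defined.
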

\begin{lemma}[Lemma 18 in~\cite{vempala2019rapid}, with customized variance]\label{lma:renyi-derivitive-add-noise}
    For any probability distribution $\rho_0$, $\nu_0$ and for any $t\geq 0$, let $\rho_t = \rho_0 * \mathcal{N}(0,2t \sigma^2 I_d)$ and $\nu_t = \nu_0 * \mathcal{N}(0,2t \sigma^2 I_d)$. Then for all $\alpha>0$ we have
    \begin{align}
        \frac{d}{dt} D_{\alpha}(\rho_t||\nu_t) = -\alpha \sigma^2 \frac{G_{\alpha}(\rho_t;\nu_t)}{F_{\alpha}(\rho_t;\nu_t)}.
    \end{align}
\end{lemma}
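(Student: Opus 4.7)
The plan is to use that convolution with $\mathcal{N}(0, 2t\sigma^2 I_d)$ makes both $\rho_t$ and $\nu_t$ solutions of the heat equation $\partial_t u = \sigma^2 \Delta u$ on $\mathbb{R}^d$. This reduces the statement to a direct calculation of $\partial_t F_\alpha(\rho_t;\nu_t)$, since $D_\alpha(\rho_t\|\nu_t) = \frac{1}{\alpha-1}\log F_\alpha(\rho_t;\nu_t)$ and so $\partial_t D_\alpha(\rho_t\|\nu_t) = \frac{\partial_t F_\alpha}{(\alpha-1)F_\alpha}$. The target identity is therefore equivalent to
\begin{equation*}
\partial_t F_\alpha(\rho_t;\nu_t) \;=\; -\sigma^2\, \alpha(\alpha-1)\, G_\alpha(\rho_t;\nu_t).
\end{equation*}

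To prove the displayed equality, I would introduce the ratio $f_t := \rho_t/\nu_t$ and write $F_\alpha = \int \nu_t f_t^{\alpha}\,dx$. Differentiating in $t$ gives
\begin{equation*}
\partial_t F_\alpha \;=\; \int \bigl[(\partial_t \nu_t)\, f_t^{\alpha} + \nu_t\, \alpha f_t^{\alpha-1}\, \partial_t f_t\bigr]\,dx,
\end{equation*}
and substituting $\nu_t \partial_t f_t = \partial_t\rho_t - f_t\,\partial_t\nu_t = \sigma^2(\Delta\rho_t - f_t\,\Delta\nu_t)$ yields
\begin{equation*}
\partial_t F_\alpha \;=\; \sigma^2\int \bigl[(1-\alpha) f_t^{\alpha}\Delta\nu_t + \alpha f_t^{\alpha-1}\Delta\rho_t\bigr]\,dx.
\end{equation*}
Next, I apply integration by parts on $\mathbb{R}^d$ (justified because the Gaussian convolution makes densities smooth and rapidly decaying, so boundary terms vanish), moving each Laplacian onto the polynomial in $f_t$. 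Expanding $\nabla\rho_t = \nu_t \nabla f_t + f_t \nabla\nu_t$ and collecting terms, the $\nabla f_t\cdot\nabla\nu_t$ contributions cancel exactly, leaving only
\begin{equation*}
\partial_t F_\alpha \;=\; -\sigma^2 \alpha(\alpha-1) \int \nu_t\, f_t^{\alpha-2}\,\|\nabla f_t\|^2\,dx \;=\; -\sigma^2 \alpha(\alpha-1)\, G_\alpha(\rho_t;\nu_t),
\end{equation*}
by the definition of $G_\alpha$ given in equation~\eqref{eq:FnG}. Dividing by $(\alpha-1) F_\alpha$ then gives the claimed derivative formula; note that the factor $(\alpha-1)$ cancels, so no issue arises at $\alpha = 1$ after taking the appropriate limit (which corresponds to the classical de Bruijn identity relating KL divergence and Fisher information).

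The main obstacle is justifying the differentiation under the integral sign and the integration by parts rigorously for arbitrary $\rho_0,\nu_0$. For $t>0$ the Gaussian smoothing gives $\rho_t,\nu_t\in C^\infty$ with all derivatives decaying faster than any polynomial, which is more than enough for the formal manipulations; still, one has to check that $\nu_t > 0$ everywhere so that $f_t$ is well-defined and that the integrands $f_t^{\alpha-2}\|\nabla f_t\|^2\nu_t$ are integrable. Strict positivity follows from $\nu_t$ being a Gaussian convolution of a probability measure. Once these regularity checks are in place, the computation above is a routine but careful chain-rule/integration-by-parts exercise, and the cancellations that yield the clean $G_\alpha/F_\alpha$ form are the key algebraic step.
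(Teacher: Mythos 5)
Your proposal is correct and follows essentially the same route as the paper: both observe that the Gaussian convolution makes $\rho_t,\nu_t$ solve the heat equation $\partial_t u = \sigma^2\Delta u$, and then run the differentiation/integration-by-parts computation of Lemma 18 in Vempala--Wibisono to obtain $\partial_t F_\alpha = -\sigma^2\alpha(\alpha-1)G_\alpha$. The only difference is that the paper delegates that computation to the cited lemma while you write it out explicitly (and your algebra, including the cancellation of the $\nabla f_t\cdot\nabla\nu_t$ cross terms, checks out).
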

\begin{lemma}[Low bound of G-F ratio, Lemma 5~\cite{vempala2019rapid}]\label{lma:vem_lma5}
    Suppose $\nu$ satisfy $C_{\text{LSI}}$-LSI. Let $\alpha\geq 1$. For all probability distribution $\rho$ we have
    \begin{align}
        \frac{G_{\alpha}(\rho;\nu)}{F_{\alpha}(\rho;\nu)} \geq \frac{2 }{\alpha^2 C_{\text{LSI}}} D_{\alpha}(\rho||\nu).
    \end{align}
\end{lemma}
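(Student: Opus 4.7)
The plan is to reduce the claim to the entropy (Sobolev) form of the LSI applied to a well-chosen test function, and then upgrade the resulting inequality to one involving $D_\alpha$ via a convexity argument in the R\'enyi order.

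First, I would rewrite the hypothesis in the equivalent entropy form
\begin{equation*}
\mathbb{E}_\nu[g^2 \log g^2] - \mathbb{E}_\nu[g^2]\log \mathbb{E}_\nu[g^2] \;\leq\; 2 C_{\text{LSI}}\, \mathbb{E}_\nu[\|\nabla g\|^2],
\end{equation*}
which follows from Definition~\ref{def:LSI} by applying the Fisher-information form to the probability measure $g^2 \nu/\mathbb{E}_\nu[g^2]$. Plugging in the test function $g := (\rho/\nu)^{\alpha/2}$, three identifications are immediate: $\mathbb{E}_\nu[g^2] = F_\alpha(\rho;\nu)$, $\mathbb{E}_\nu[\|\nabla g\|^2] = (\alpha^2/4)\, G_\alpha(\rho;\nu)$ (this is the third form of $G_\alpha$ in \eqref{eq:FnG}), and $\mathbb{E}_\nu[g^2 \log g^2] = \alpha\, \mathbb{E}_\nu[(\rho/\nu)^\alpha \log(\rho/\nu)]$. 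Substituting gives
\begin{equation*}
\alpha\, \mathbb{E}_\nu[(\rho/\nu)^\alpha \log(\rho/\nu)] \;-\; F_\alpha \log F_\alpha \;\leq\; \tfrac{\alpha^2 C_{\text{LSI}}}{2}\, G_\alpha.
\end{equation*}

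Next I would lower-bound the left-hand side by $F_\alpha \log F_\alpha/(\alpha-1) = F_\alpha D_\alpha(\rho||\nu)$ using convexity of the map $s \mapsto \phi(s) := \log F_s(\rho;\nu)$. Since $\phi(s) = \log \mathbb{E}_\nu[\exp(s \log(\rho/\nu))]$ is the cumulant generating function of the random variable $\log(\rho/\nu)$ under $\nu$, it is convex in $s$; and because $\rho$ is a probability measure, $\phi(1) = \log \mathbb{E}_\nu[\rho/\nu] = 0$. Differentiation gives $\phi'(\alpha) = \mathbb{E}_\nu[(\rho/\nu)^\alpha \log(\rho/\nu)]/F_\alpha$, so the tangent-line inequality $\phi(1) \geq \phi(\alpha) + (1-\alpha)\phi'(\alpha)$ rearranges (using $\phi(1)=0$ and $\alpha>1$) to
\begin{equation*}
\mathbb{E}_\nu[(\rho/\nu)^\alpha \log(\rho/\nu)] \;\geq\; \frac{F_\alpha \log F_\alpha}{\alpha - 1} \;=\; F_\alpha D_\alpha(\rho||\nu).
\end{equation*}
Multiplying by $\alpha$ and subtracting $F_\alpha \log F_\alpha$ shows that the left-hand side of the LSI-derived inequality is at least $F_\alpha D_\alpha$, and chaining the two bounds yields $F_\alpha D_\alpha \leq (\alpha^2 C_{\text{LSI}}/2)\, G_\alpha$. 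Dividing by $F_\alpha$ gives the claim; the endpoint $\alpha=1$ follows by continuity (and is literally the LSI hypothesis restated).

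The main obstacle is the second step: on its own, the ``log-moment'' $\mathbb{E}_\nu[(\rho/\nu)^\alpha \log(\rho/\nu)]$ produced by differentiating $g^2\log g^2$ does not obviously control $D_\alpha$, and a naive Jensen's inequality on the tilted measure $d\mu/d\nu = (\rho/\nu)^\alpha/F_\alpha$ runs in the wrong direction. The key is to recognize that this quantity equals $F_\alpha\,\phi'(\alpha)$ for the CGF $\phi$, and that the normalization $\phi(1)=0$ — a consequence of $\rho$ being a probability measure — is exactly what makes the tangent-line inequality output $F_\alpha\log F_\alpha/(\alpha-1)$, i.e., $F_\alpha D_\alpha$, as the lower bound. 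Without this CGF viewpoint it is not transparent how to insert $D_\alpha$ into an estimate produced by the entropy form of LSI.
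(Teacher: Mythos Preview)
Your proof is correct; the paper itself does not prove this lemma but cites it directly as Lemma~5 of \cite{vempala2019rapid}, and your argument is essentially the standard one found there: apply the entropy form of LSI to $g=(\rho/\nu)^{\alpha/2}$, then invoke the monotonicity of $D_\alpha$ in $\alpha$ (your CGF tangent-line inequality is exactly this monotonicity unpacked) to lower-bound the entropy term by $F_\alpha D_\alpha$.
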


Now we are ready to prove Theorem~\ref{thm:NGD_unlearning_no_sc} under strong convexity assumption.
\begin{proof}
    For brevity and to make our proof succinct, we will only prove the harder direction $D_\alpha(\nu_{\mathcal{D}^\prime}||\rho_k)$. The proof of the other direction is not only simpler (due to $\nu_{\mathcal{D}^\prime}$ being the stationary distribution), but also the same analysis applies.

    First, let us consider two processes:
    \begin{align}
        & y_{k+1} = \Pi_{\mathcal{C}}\left(y_k - \eta \nabla f_{\mathcal{D}^\prime}(y_k) + \sqrt{2\eta \sigma^2} W_k\right), \text{where }y_0\sim \rho_0=\nu_{\mathcal{D}}\\
        & y_{k+1}^\prime = \Pi_{\mathcal{C}}\left(y_k^\prime - \eta \nabla f_{\mathcal{D}^\prime}(y_k^\prime) + \sqrt{2\eta \sigma^2} W_k\right), \text{where }y_0^\prime\sim \nu_{\mathcal{D}^\prime}.
    \end{align}
    Note that $y_k$ is the process we would have during the unlearning process and $y_k^\prime$ is an auxiliary process. Let $\rho_{k,1},\rho_{k,2},\rho_{k}$ be the probability distribution of $y_{k,1},y_{k,2},y_{k}$ respectively, where 
    \begin{align}
        & y_{k,1} = y_k - \eta \nabla f_{\mathcal{D}^\prime}(y_k),\; y_{k,2} = y_{k,1} + \sqrt{2\eta \sigma^2} W_k,\; y_{k+1} = \Pi_{\mathcal{C}}\left(y_{k,2}\right).
    \end{align}
    Similarly, let $\rho_{k,1}^\prime,\rho_{k,2}^\prime,\rho_{k}^\prime$ be the probability distribution of $y_{k,1}^\prime,y_{k,2}^\prime,y_{k}^\prime$ respectively. By definition $\nu_{\mathcal{D}^\prime}$ is the stationary distribution of this process (in fact, both), we know that $\rho_{k}^\prime=\nu_{\mathcal{D}^\prime}$ for all $k\geq 0$. Also, without loss of generality, we assume $\rho_k$ satisfies $C_k$-LSI for some value $C_k$ to be determined. Notably by assumption we have $C_0 = C_{\text{LSI}}$.

    Observe that the gradient update $h(y) = y-\eta \nabla f_{\mathcal{D}^\prime}(y)$ is a $(1-\eta m)$-Lipschitz map for $f_{\mathcal{D}^\prime}$ being $L$-smooth and $m$-strongly convex due to Lemma 2.2 in~\cite{altschuler2022resolving} when $\eta\leq\frac{1}{L}$. By Lemma~\ref{lma:LSI_constant} we know that $\rho_{k,1}$ satisfies $((1-\eta m)^2 C_k)$-LSI. Next, by Lemma~\ref{lma:renyi-data-process-ineq} we have
    \begin{align}
        D_\alpha(\rho_{k,1}^\prime||\rho_{k,1}) =  D_\alpha(h_{\#}\rho_{k}^\prime||h_{\#}\rho_{k})\leq D_\alpha(\rho_{k}^\prime||\rho_{k}) = D_\alpha(\nu_{\mathcal{D}^\prime}||\rho_{k}).
    \end{align}
    Next, consider $\rho_{k,1,t} = \rho_{k,1} * \mathcal{N}(0,2t\sigma^2 I_d)$ and $\rho_{k,1,t}^\prime = \rho_{k,1} * \mathcal{N}(0,2t\sigma^2 I_d)$ for $t\in [0,\eta]$. Clearly, $\rho_{k,1,\eta} = \rho_{k,2}$ and $\rho_{k,1,\eta}^\prime = \rho_{k,2}^\prime$. By Lemma~\ref{lma:renyi-derivitive-add-noise} we have
    \begin{align}
        \frac{d}{dt}D_\alpha(\rho_{k,1,t}^\prime||\rho_{k,1,t}) = -\sigma^2 \alpha \frac{G_\alpha(\rho_{k,1,t}^\prime;\rho_{k,1,t})}{F_\alpha(\rho_{k,1,t}^\prime;\rho_{k,1,t})}.
    \end{align}
    By Lemma~\ref{lma:LSI_constant}, we know that $\rho_{k,1,t}$ satisfies $((1-\eta m)^2 C_k + 2\eta \sigma^2)$-LSI for all $t\leq \eta$. By the choice $\eta \leq \frac{2}{m}(1-\frac{\sigma^2}{m C_k})$, we know that 
    \begin{align}
        (1-\eta m)^2 C_k + 2\eta \sigma^2 \leq C_k.
    \end{align}
    Clearly, this would require $\frac{\sigma^2}{m} < C_k$ for $\eta>0$. Then by Lemma~\ref{lma:vem_lma5}, we have
    \begin{align}
        \frac{G_\alpha(\rho_{k,1,t}^\prime;\rho_{k,1,t})}{F_\alpha(\rho_{k,1,t}^\prime;\rho_{k,1,t})} \geq \frac{2}{\alpha^2 C_k}D_\alpha(\rho_{k,1,t}^\prime||\rho_{k,1,t}).
    \end{align}
     This would imply
    \begin{align}
        \frac{d}{dt}D_\alpha(\rho_{k,1,t}^\prime||\rho_{k,1,t}) \leq - \frac{2\sigma^2}{\alpha C_k}D_\alpha(\rho^\prime_{k,1,t}||\rho_{k,1,t}).
    \end{align}
    By Gronwall's inequality~\cite{gronwall1919note}, integrating over $t\in [0,\eta]$ gives
    \begin{align}
        D_\alpha(\rho^\prime_{k,2}||\rho_{k,2})\leq \exp(- \frac{2\sigma^2 \eta}{\alpha C_k})D_\alpha(\rho^\prime_{k,1}||\rho_{k,1}).
    \end{align}
    Apply Lemma~\ref{lma:renyi-data-process-ineq} for the mapping $\Pi_{\mathcal{C}}$, we have
    \begin{align}
        D_\alpha(\rho^\prime_{k+1}||\rho_{k+1})\leq D_\alpha(\rho^\prime_{k,2}||\rho_{k,2}).
    \end{align}
    Note that by Lemma~\ref{lma:LSI_constant}, we have also shown that $\rho_{k+1}$ is $C_k$-LSI. This implies $\rho_{k}$ is $C_0$-LSI, where $C_0=C_{\text{LSI}}$ by our assumption. Combining all results and the fact that $\nu_{\mathcal{D}^\prime}$ is the stationary distribution, we have
    \begin{align}
        D_\alpha(\nu_{\mathcal{D}^\prime}||\rho_{k+1})\leq \exp(- \frac{2\sigma^2 \eta}{\alpha C_{\text{LSI}}})D_\alpha(\nu_{\mathcal{D}^\prime}||\rho_{k}).
    \end{align}
    Iterating this over $k$ we complete the proof.
\end{proof}

The proof beyond strong convexity is similar, except the characterization of $C_k$ is different. As we mentioned in the main text, without strong convexity we can only prove an upper bound of the LSI constant that grows monotonically with respect to the iterations. To prevent a diverging LSI constant, we leverage the boundedness of the projected set $\mathcal{C}_R$ to establish an iteration-independent bound for the LSI constant. Below we give the proof of Theorem~\ref{thm:NGD_unlearning_no_sc} without the strong convexity assumption. 

\begin{proof}
    As before, we will only prove the decay of the direction $D_\alpha(\nu_{\mathcal{D}^\prime}||\rho_k)$, since it is more challenging. We again assume $\rho_k$ is $C_k$-LSI, where $C_0=C_{\text{LSI}}$ by our assumption. First, due to~\cite{hardt2016train} we know that the map $h(y)=y - \eta \nabla f_{\mathcal{D}^\prime}(y)$ is $(1+\eta L)$-Lipschitz for $f_{\mathcal{D}^\prime}$ being $L$-smooth. By Lemma~\ref{lma:LSI_constant} we know that $\rho_{k,1}$ satisfies $((1+\eta L)^2C_k)$-LSI. Next, by Lemma~\ref{lma:renyi-data-process-ineq} we have
    \begin{align}
        D_\alpha(\rho^\prime_{k,1}||\rho_{k,1}) \leq D_\alpha(\rho^\prime_{k}||\rho_{k}) = D_\alpha(\mathcal{D}^\prime||\rho_{k}).
    \end{align}
    Next, by Lemma~\ref{lma:renyi-derivitive-add-noise} we have
    \begin{align}
        \frac{d}{dt}D_\alpha(\rho^\prime_{k,1,t}||\rho_{k,1,t}) = -\sigma^2 \alpha \frac{G_\alpha(\rho^\prime_{k,1,t};\rho_{k,1,t})}{F_\alpha(\rho^\prime_{k,1,t};\rho_{k,1,t})}.
    \end{align}
    Note that by Lemma~\ref{lma:LSI_constant}, $\rho_{k,1,t}$ satisfies $((1+\eta L)^2C_k+2t\sigma^2)$-LSI. Then by Lemma~\ref{lma:vem_lma5}, we have
    \begin{align}
        \frac{d}{dt}D_\alpha(\rho^\prime_{k,1,t}||\rho_{k,1,t}) \leq -\frac{2\sigma^2}{\alpha ((1+\eta L)^2C_k+2t\sigma^2)}D_\alpha(\rho^\prime_{k,1,t}||\rho_{k,1,t}).
    \end{align}
    By Gronwall's inequality~\cite{gronwall1919note}, integrating over $t\in [0,\eta]$ gives
    \begin{align}
        D_\alpha(\rho^\prime_{k,2}||\rho_{k,2}) \leq \exp(-\int_{t=0}^\eta \frac{2\sigma^2}{\alpha ((1+\eta L)^2C_k+2t\sigma^2)}dt)D_\alpha(\rho^\prime_{k,1}||\rho_{k,1}).
    \end{align}
    Note the calculation
    \begin{align}
        & \int_{t=0}^\eta \frac{2\sigma^2}{\alpha ((1+\eta L)^2C_k+2t\sigma^2)}dt = \frac{1}{\alpha} \ln{\frac{(1+\eta L)^2C_k+2\eta\sigma^2}{(1+\eta L)^2C_k}}.
    \end{align}
    By applying Lemma~\ref{lma:renyi-data-process-ineq} for the projection operator and combining all results we have
    \begin{align}
        & D_\alpha(\rho^\prime_{k+1}||\rho_{k+1}) \leq \exp(\frac{1}{\alpha} \ln{\frac{(1+\eta L)^2C_k}{(1+\eta L)^2C_k+2\eta\sigma^2}})D_\alpha(\rho^\prime_{k}||\rho_{k}).
    \end{align}
    Iterate this over $K$ steps, we have
    \begin{align}
        &D_\alpha(\nu_{\mathcal{D}^\prime}||\rho_{K}) \leq \exp(\frac{-1}{\alpha}\sum_{k=0}^{K-1} \ln\left[1+\frac{2\eta\sigma^2}{(1+\eta L)^2C_k}\right])D_\alpha(\nu_{\mathcal{D}^\prime}||\rho_{0}).
    \end{align}
    To complete the proof, we establish the recursion relation of $C_k$. If $f$ is convex and $\eta \leq \frac{2}{L}$, then $h(x) = x - \eta \nabla f(x)$ is $1$-Lipschitz. If $f$ is $L$-smooth only, the map $h$ is $(1+\eta L)$-Lipschitz. Applying Lemma~\ref{lma:LSI_constant} leads to the following recursions on $C_k$
    \begin{align}
        & \text{Convex: }C_{k+1} \leq C_k + 2\eta\sigma^2\quad \text{Non-convex: }C_{k+1} \leq (1+\eta L)^2C_k + 2\eta\sigma^2.
    \end{align}
    On the other hand, the Corollary 1 in~\cite{chen2021dimension} states the following result.
    \begin{lemma}[Corollary 1 in~\cite{chen2021dimension}]\label{lma:uniform_LSI_bound}
        Let $\mu$ be a probability measure on $\mathbb{R}^d$ supported on $\mathcal{C}_R$ for some $R\geq 0$. Then, for each $t\geq 0$, $\mu * \mathcal{N}(0,t I_d)$ satisfy $C$-LSI with constant
        \begin{align}
            C \leq 6(4R^2 + t)\exp(\frac{4R^2}{t}).
        \end{align}
    \end{lemma}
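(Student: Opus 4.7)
The plan combines two classical ingredients: the Bakry-Émery criterion for Gaussians and the Holley-Stroock lemma for bounded perturbations. The shape of the target bound $6(4R^2+t)\exp(4R^2/t)$ is transparent: $4R^2+t$ is the variance (and hence the LSI constant) of a natural reference Gaussian matching the marginal variance of $\mu*\mathcal{N}(0,tI_d)$ when $\mu$ is supported on $\mathcal{C}_R$, while $\exp(4R^2/t)$ is the Holley-Stroock exponential penalty associated to a log-density perturbation of oscillation $\sim 4R^2/t$.

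First I would perform a direct Hessian calculation for $V(x) := -\log(\mu*\gamma_t)(x)$, where $\gamma_t$ denotes the density of $\mathcal{N}(0,tI_d)$. Differentiating under the integral yields
\[
\nabla^2 V(x) \;=\; \tfrac{1}{t}\,I \;-\; \tfrac{1}{t^2}\,\mathrm{Cov}_{\tilde p_x}(y),
\]
where $\tilde p_x(y)\propto \exp(\langle x,y\rangle/t - \|y\|^2/(2t))\,d\mu(y)$ is the Bayesian posterior over $y$ given the noisy observation $x$. Since $\tilde p_x$ is supported on $\mathcal{C}_R$, $\mathrm{Cov}_{\tilde p_x}(y) \preceq R^2 I$, which forces $\nabla^2 V \succeq ((t-R^2)/t^2)\,I$ whenever $t>R^2$. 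In this easy regime Bakry-Émery already delivers an LSI constant of order $t$, compatible with the target bound.

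The main obstacle is the small-$t$ regime $t\lesssim R^2$, where the Hessian bound degenerates and Bakry-Émery alone is insufficient. Here the plan is to compare $\mu*\gamma_t$ to a dilated Gaussian reference $\gamma_{4R^2+t}$ (whose tails dominate those of the mixture) and invoke Holley-Stroock: if $\nu$ satisfies $c$-LSI and $\tilde\nu\propto e^{-W}\nu$ with $\mathrm{osc}(W)\leq M$, then $\tilde\nu$ satisfies $(c\,e^M)$-LSI. A completion-of-the-square in the log-ratio $\log((\mu*\gamma_t)/\gamma_{4R^2+t})(x)$ produces a $y$-dependent piece bounded uniformly by $\|y\|^2/(8R^2) \leq 1/8$ together with a residual term in $x$ of definite sign that can be absorbed via Jensen's inequality applied to the $\mu$-integral; this yields an effective oscillation of order $4R^2/t$, and tracking the constants recovers the prefactor $6(4R^2+t)$. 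The delicate point is precisely this absorption: a naïve comparison against $\gamma_t$ itself fails because the log-ratio is unbounded along the Gaussian tails, which is exactly why the reference Gaussian must be dilated to variance at least $4R^2+t$ so that the ratio decays at infinity. If this bookkeeping turns out to be brittle, the backup plan is an interpolation argument along the heat flow $\{\mu*\gamma_s\}_{s\in[0,t]}$, which converts the LSI estimate into an ODE amenable to Grönwall. Either route produces a dimension-free constant, since the Bakry-Émery constant of a Gaussian and the Holley-Stroock penalty both depend only on $R$ and $t$, never on $d$.
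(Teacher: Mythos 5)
First, note that the paper does not prove this lemma at all: it is imported verbatim as Corollary~1 of~\cite{chen2021dimension}, whose actual argument writes $\mu*\mathcal{N}(0,tI_d)=\int \mathcal{N}(y,tI_d)\,d\mu(y)$ as a \emph{mixture} of Gaussians, each satisfying LSI with constant $t$, and invokes a general log-Sobolev inequality for mixtures whose constant is controlled by the component constant together with the maximal pairwise chi-squared divergence $\sup_{y,y'}\chi^2\bigl(\mathcal{N}(y,tI_d)\,\|\,\mathcal{N}(y',tI_d)\bigr)=e^{\|y-y'\|^2/t}-1\le e^{4R^2/t}-1$; that is the true origin of the factor $e^{4R^2/t}$. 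Your first ingredient is fine: the posterior-covariance identity $\nabla^2V=\tfrac1t I-\tfrac1{t^2}\mathrm{Cov}_{\tilde p_x}(y)$ together with $\mathrm{Cov}_{\tilde p_x}(y)\preceq R^2I$ is correct and handles $t\gtrsim R^2$ via Bakry--\'Emery.

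The gap is in the regime that matters, $t\lesssim R^2$: Holley--Stroock against the dilated Gaussian $\gamma_{4R^2+t}$ cannot work, because the relevant quantity is the \emph{oscillation} of $W=\log\bigl((\mu*\gamma_t)/\gamma_{4R^2+t}\bigr)$, and your own completion of the square shows
\begin{align*}
-\frac{\|x-y\|^2}{2t}+\frac{\|x\|^2}{2(4R^2+t)}=-\frac{2R^2}{t(4R^2+t)}\Bigl\|x-\tfrac{y}{at}\Bigr\|^2+\frac{\|y\|^2}{8R^2},\qquad a=\tfrac{4R^2}{t(4R^2+t)},
\end{align*}
so while the $y$-dependent piece is indeed at most $1/8$, the residual negative quadratic in $x$ drives $W\to-\infty$ as $\|x\|\to\infty$. ``Absorbing'' a term of definite sign via Jensen gives only a one-sided bound on the density ratio (an upper bound), whereas Holley--Stroock needs $\sup W-\inf W<\infty$; here $\inf W=-\infty$, so the perturbation argument collapses rather than producing an effective oscillation of order $4R^2/t$. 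Dilating the reference does not cure this --- it merely flips which tail diverges --- and comparing against any single Gaussian runs into the same obstruction; this is precisely the failure mode that motivated the mixture theorem of~\cite{chen2021dimension}. The heat-flow/Gr\"onwall backup is likewise not a proof: the LSI constant of $\mu*\gamma_s$ does not satisfy a closed differential inequality you can integrate. If you want a self-contained argument, you should reproduce the mixture decomposition: condition on the center $y\sim\mu$, use the $t$-LSI of each component, and control the cross terms by the uniform chi-squared bound above.
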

    Now, consider the following PNGD process similar to Lemma~\ref{lma:LSI_constant}
    \begin{align*}
        x_{k,1} = h(x_k),\; x_{k,2} = x_{k,1} + 2\eta\sigma^2 W_k,\; x_{k+1} = \Pi_{\mathcal{C}_R}(x_{k,2}),
    \end{align*}
    where $h(x) = x - \eta \nabla f_{\mathcal{D}}(x)$ and $W_k\sim \mathcal{N}(0,I_d)$ as before. Clearly, due to the projection $\Pi_{\mathcal{C}_R}$ we know that $\mu_k$ is supported on $\mathcal{C}_R$. By assumption that $f_\mathcal{D}$ is $M$-Lipschitz, we know that $\|f_{\mathcal{D}}(x)\|\leq M$ and thus $\mu_{k,1}$ is supported on $\mathcal{C}_{R+\eta M}$. By applying Lemma~\ref{lma:uniform_LSI_bound} we know that $\mu_{k,2}$ satisfies LSI with constant upper bounded by
    \begin{align}
        6(4(R+\eta M)^2 + 2\eta\sigma^2)\exp(\frac{4(R+\eta M)^2}{2\eta\sigma^2}).
    \end{align}
    Finally, by Lemma~\ref{lma:LSI_constant} we know that the projection $\Pi_{\mathcal{C}_R}$ does not increase the LSI constant so that the same LSI constant upper bound holds for all $\mu_k$. Combining with our previous recursive result we complete the proof. 

    If we further have that $f_{\mathcal{D}^\prime}$ being convex, then by Lemma 3.7 in~\cite{hardt2016train} we know that when $\eta\leq \frac{2}{L}$ the gradient map is $1$-Lipchitz. As a result, the factor $(1+\eta L)^2$ can be reduced to $1$.
\end{proof}

\section{Proof of Theorem~\ref{thm:NGD_learning_no_sc}}\label{apx:NGD_learning_no_sc}
The proof is mainly modified from the analysis of~\cite{ye2022differentially} with our LSI constant analysis. First, we list the all needed notations and technical lemmas adopted from~\cite{ye2022differentially}. Let us start with the PNGD process with training dataset $\mathcal{D}$ and $\mathcal{D}^\prime$ as before
\begin{align}
    & x_{t+1} = \Pi_{\mathcal{C}_R}\left(x_t - \eta\nabla f_{\mathcal{D}}(x_t) + \sqrt{2\eta \sigma^2} W_t\right),\\
    & x_{t+1}^\prime = \Pi_{\mathcal{C}_R}\left(x_t^\prime - \eta\nabla f_{\mathcal{D}^\prime}(x_t^\prime) + \sqrt{2\eta \sigma^2} W_t\right),\;W_t\stackrel{\text{iid}}{\sim}\mathcal{N}(0,I_d),
\end{align}
For each iteration, the above update is equivalent to the following two steps:
\begin{align}
    & x_{t,1} = x_t - \eta\nabla f_{\mathcal{D}}(x_t) + \sqrt{\eta \sigma^2} W_t,\;x_{t+1} = \Pi_{\mathcal{C}_R}\left(x_{t,1} + \sqrt{\eta \sigma^2} W_t\right).
\end{align}
That is, it can be decomposed into another noisy GD update followed by a small additive noise with projection. Let $\nu_t, \nu_{t,1},\nu_t^\prime, \nu_{t,1}^\prime$ be the law of $x_t,x_{t,1},x_t^\prime,x_{t,1}^\prime$ respectively. Finally, we introduce the following technical lemma from~\cite{ye2022differentially} specialized to PNGD case.

\begin{lemma}[Simplification of Lemma 3.2 in~\cite{ye2022differentially}]\label{lma:ye_lma3.2_simp}
    For any $\xi_t,\xi_t^\prime\in \mathcal{P}(\mathbb{R}^d)$ that both satisfy $C_{t,1}$-LSI, then we have
    \begin{align}
        \frac{D_{\alpha}(\xi_t * \mathcal{N}(0,\eta \sigma^2 I_d)||\xi_{t}^\prime* \mathcal{N}(0,\eta \sigma^2 I_d))}{\alpha} \leq \frac{D_{\alpha^\prime}(\xi_{t}||\xi_{t}^\prime)}{\alpha^\prime}(1+\frac{\eta \sigma^2}{C_{t,1}})^{-1},\;\alpha^\prime = \frac{\alpha-1}{1 + \frac{\eta \sigma^2}{C_{t,1}}} +1.
    \end{align}
\end{lemma}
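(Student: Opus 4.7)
The plan is to prove this R\'enyi contraction by interpolating along the heat flow in the Gaussian variance and simultaneously sliding the R\'enyi order, so that the LSI-induced dissipation exactly absorbs the entropic cost of lowering the order. Set $s := \eta\sigma^2$, $C := C_{t,1}$, $\gamma_u := \mathcal{N}(0,uI_d)$, and interpolate via $\xi^{(u)} := \xi_t * \gamma_u$ and $\xi'^{(u)} := \xi'_t * \gamma_u$ for $u\in[0,s]$; both densities solve the heat equation $\partial_u f = \tfrac{1}{2}\Delta f$. The standard stability of LSI under Gaussian convolution ensures $\xi^{(u)}$ and $\xi'^{(u)}$ both satisfy $(C+u)$-LSI. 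Schedule the R\'enyi order by $\alpha(u) := 1+(\alpha-1)(C+u)/(C+s)$, so that $\alpha(0)=\alpha'$, $\alpha(s)=\alpha$, and $\dot\alpha(u) := \tfrac{d\alpha}{du} = (\alpha-1)/(C+s)$.

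Introduce the potential $G(u) := \log F_{\alpha(u)}(\xi^{(u)};\xi'^{(u)}) = (\alpha(u)-1)\,D_{\alpha(u)}(\xi^{(u)}\,\|\,\xi'^{(u)})$, and let $h := \xi^{(u)}/\xi'^{(u)}$ together with the tilted measure $p$ with density $h^{\alpha(u)}\xi'^{(u)}/F_{\alpha(u)}$. The total derivative of $G$ splits into an order-change contribution (the MGF derivative of $\log h$ under $p$) and a heat-flow contribution (integration by parts on the Laplacian, mirroring the computation behind Vempala--Wibisono Lemma 18), yielding
\begin{equation*}
\tfrac{dG}{du}(u) \;=\; \dot\alpha(u)\,\mathbb{E}_p[\log h] \;-\; \tfrac{\alpha(u)(\alpha(u)-1)}{2}\,\mathbb{E}_p[\|\nabla\log h\|^2].
\end{equation*}
Applying the $(C+u)$-LSI of $\xi'^{(u)}$ to the test function $f = h^{\alpha(u)/2}$ gives the entropy bound
\begin{equation*}
\alpha(u)\,\mathbb{E}_p[\log h] \;-\; G(u) \;\leq\; \tfrac{(C+u)\alpha(u)^2}{2}\,\mathbb{E}_p[\|\nabla\log h\|^2].
\end{equation*}
Substituting this upper bound on $\mathbb{E}_p[\log h]$ into the derivative identity produces
\begin{equation*}
\tfrac{dG}{du}(u) \;\leq\; \tfrac{\alpha(u)}{2}\,\mathbb{E}_p[\|\nabla\log h\|^2]\,\bigl[\dot\alpha(u)(C+u)-(\alpha(u)-1)\bigr] \;+\; \tfrac{\dot\alpha(u)}{\alpha(u)}\,G(u),
\end{equation*}
and the linear schedule makes the bracket vanish identically, since $\dot\alpha(u)(C+u) = (\alpha-1)(C+u)/(C+s) = \alpha(u)-1$. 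Hence $\tfrac{dG}{du}(u) \leq (\dot\alpha(u)/\alpha(u))\,G(u)$, which is equivalent to the monotonicity of the rescaled potential $\widetilde G(u) := G(u)/\alpha(u) = ((\alpha(u)-1)/\alpha(u))\,D_{\alpha(u)}(\xi^{(u)}\,\|\,\xi'^{(u)})$ on $[0,s]$.

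Integrating $\widetilde G(s)\leq\widetilde G(0)$ reads $\tfrac{\alpha-1}{\alpha}\,D_\alpha(\xi_t*\gamma_s\,\|\,\xi'_t*\gamma_s) \leq \tfrac{\alpha'-1}{\alpha'}\,D_{\alpha'}(\xi_t\,\|\,\xi'_t)$. Dividing both sides by $\alpha-1$ and using $(\alpha'-1)/(\alpha-1) = 1/(1+s/C)$ (immediate from the definition of $\alpha'$) yields $D_\alpha(\xi_t*\gamma_s\,\|\,\xi'_t*\gamma_s)/\alpha \leq D_{\alpha'}(\xi_t\,\|\,\xi'_t)/(\alpha'(1+s/C))$, which is the claim after restoring $s=\eta\sigma^2$ and $C=C_{t,1}$.

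The main obstacle is deriving the identity for $\tfrac{dG}{du}$ cleanly from the heat equation and recognizing the tilted measure $p$ as the correct base against which LSI should be invoked: applying LSI to $f=h^{\alpha(u)/2}$ (rather than to $h$ or to the log-ratio) is what produces the Fisher-information term in exactly the form needed to pair against the dissipation. The apparent "algebraic miracle" $\dot\alpha(u)(C+u)=\alpha(u)-1$ is in fact forced by the target inequality: any other schedule would leave a nonzero bracket and yield only a Gr\"onwall-type estimate with exponential factors, rather than the sharp quotient form in the lemma. The linear-in-$(C+u)$ growth of $\alpha(u)$ is the unique calibration that synchronizes the Fisher dissipation from heat flow with the entropy production from lowering the R\'enyi order, turning a generic decay into the clean stated contraction.
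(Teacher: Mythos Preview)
Your proof is correct and follows essentially the same approach the paper points to: the paper's own text is only a one-line hint (``an application of Lemma~\ref{lma:vem_lma5} but with the integral involving time-dependent LSI constant'') and defers the actual argument to Lemma~3.2 of Ye et al., which is precisely the heat-flow interpolation with a sliding R\'enyi order that you carry out in full. One minor observation: your derivation invokes only the $(C+u)$-LSI of $\xi'^{(u)}$ (the second argument in the divergence), so the hypothesis that $\xi_t$ itself satisfy $C_{t,1}$-LSI is not actually used in your proof---this is consistent with Lemma~\ref{lma:vem_lma5}, which likewise needs LSI only on the reference measure.
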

The proof is an application of Lemma~\ref{lma:vem_lma5} but with the integral involving time-dependent LSI constant. Now we are ready to prove our Theorem~\ref{thm:NGD_learning_no_sc}.

\begin{proof}
    We first provide a full characterization of the LSI constant of $\nu_t,\nu_{t,1}$ for all $k\geq 0$, assuming $\nu_0$ is $C_0$-LSI to be chosen later. Let us denote the LSI constant of $\nu_t,\nu_{t,1}$ to be $C_t,C_{t,1}$ respectively.

    By Lemma~\ref{lma:LSI_constant}, when $f_{\mathcal{D}}$ is $L$-smooth we have that 
    \begin{align}
        C_{t,1} \leq (1+\eta L)^2C_t + \eta \sigma^2,\quad C_{t+1}\leq C_{t,1} + \eta \sigma^2.
    \end{align}
    
    Similarly, by leveraging the same analysis in the proof of Theorem~\ref{thm:NGD_unlearning_no_sc}, using Lemma~\ref{lma:uniform_LSI_bound} with the assumption that $f_\mathcal{D}$ is $M$-Lipschitz gives the following $k$ independent bound
    \begin{align}
        & C_{t,1} \leq 6(4(R+\eta M)^2 + \eta \sigma^2)\exp(\frac{4(R+\eta M)^2}{\eta \sigma^2}),\\
        & C_{t+1}\leq 6(4(R+\eta M)^2 + 2\eta \sigma^2)\exp(\frac{4(R+\eta M)^2}{2\eta \sigma^2}).
    \end{align}

    Now we establish the one iteration bound on the R\'enyi divergence. By composition theorem for RDP (and equivalently R\'enyi divergence)~\cite{mironov2017renyi} and the assumption that $f_\mathcal{D},f_{\mathcal{D}^\prime}$ are $M$-Lipschitz, we have
    \begin{align}\label{eq:temp_dif}
        \frac{D_{\alpha}(\nu_{t,1}||\nu_{t,1}^\prime)}{\alpha} \leq \frac{D_{\alpha}(\nu_{t}||\nu_{t}^\prime)}{\alpha} + \frac{2\eta S^2 M^2}{\sigma^2 n^2}.
    \end{align}
    This is because the sensitivity of $\|\nabla f_{\mathcal{D}}(x)-\nabla f_{\mathcal{D}^\prime}(x)\|^2 \leq \frac{S^2}{n^2}\times (2\eta M)^2$ for group size $S\geq 1$. More specifically, there are at most $S$ different pairs of $\nabla f(x;\mathbf{d}_i)-\nabla f(x;\mathbf{d}_i^\prime)$, and for each pair we have $\|\eta\nabla f(x;\mathbf{d}_i)-\eta\nabla f(x;\mathbf{d}_i^\prime)\| \leq 2\eta M$ by triangle inequality and $M$-Lipschitzness. By triangle inequality again, we have $\|\nabla f_{\mathcal{D}}(x)-\nabla f_{\mathcal{D}^\prime}(x)\|^2\leq (\frac{2\eta S M}{n})^2$. On the other hand, the variance of the added Gaussian noise in this step (from $x_t$ to $x_{t,1}$) is $\eta \sigma^2$. Leveraging the standard result of Gaussian mechanism~\cite{mironov2017renyi} gives the $\alpha$-R\'enyi divergence $\frac{4\alpha \eta^2 S^2 M^2/n^2}{ 2(\sigma^2 \eta) } = \frac{2\alpha \eta S^2 M^2}{ \sigma^2 n^2}$. Dividing it by $\alpha$ gives the second term in~\eqref{eq:temp_dif}.

    Then by applying Lemma~\ref{lma:ye_lma3.2_simp}, we have
    \begin{align}
        \frac{D_{\alpha}(\nu_{t+1}||\nu_{t+1}^\prime)}{\alpha} \leq \frac{D_{\alpha^\prime}(\nu_{t,1}||\nu_{t,1}^\prime)}{\alpha^\prime}(1+\frac{\eta \sigma^2}{C_{t,1}})^{-1},\;\alpha^\prime = \frac{\alpha-1}{1 + \frac{\eta \sigma^2}{C_{t,1}}} +1.
    \end{align}
    Combining these two bounds we have
    \begin{align}
        \frac{D_{\alpha}(\nu_{t+1}||\nu_{t+1}^\prime)}{\alpha} \leq \left(\frac{D_{\alpha^\prime}(\nu_{t}||\nu_{t}^\prime)}{\alpha^\prime} + \frac{2\eta S^2 M^2}{\sigma^2 n^2}\right)(1+\frac{\eta \sigma^2}{C_{t,1}})^{-1},\;\alpha^\prime = \frac{\alpha-1}{1 + \frac{\eta \sigma^2}{C_{t,1}}} +1.
    \end{align}
    Now, iterate this bound for all $t$ and note that $D_{\alpha}(\nu_{0}||\nu_{0}^\prime)=0$ for any $\alpha>1$ due to the same initialization, we have
    \begin{align}
        \frac{D_{\alpha}(\nu_{T}||\nu_{T}^\prime)}{\alpha} \leq \frac{2\eta S^2 M^2}{\sigma^2 n^2} \sum_{t=0}^{T-1}\prod_{t^\prime=t}^{T-1}(1+\frac{\eta \sigma^2}{C_{t^\prime,1}})^{-1}.
    \end{align}
    The same analysis applies to the other direction $\frac{D_{\alpha}(\nu_{t+1}^\prime||\nu_{t+1})}{\alpha}$. Together we complete the proof for convex and non-convex cases. For the $m$-strongly convex case, it is a direct result of Theorem D.6 in~\cite{ye2022differentially}, where the LSI constant analysis of $C_t$ is exactly the same to those of Theorem~\ref{thm:NGD_unlearning_no_sc}. 
    Together we complete the proof.
\end{proof}

\section{Proof of Corollary~\ref{cor:SLU}}\label{apx:SLU}
\begin{corollary}[Sequential unlearning]
    Assume the unlearning requests arrive sequentially such that our dataset changes from $\mathcal{D}=\mathcal{D}_0\rightarrow\mathcal{D}_1\rightarrow\ldots\rightarrow\mathcal{D}_S$, where $\mathcal{D}_s,\mathcal{D}_{s+1}$ are adjacent. Let $y_k^{(s)}$ be the unlearned parameters for the $s^{th}$ unlearning request with $k$ unlearning update following~\eqref{eq:GLD_unlearning} on $\mathcal{D}_s$ and $y_0^{(s+1)} = y_{K_s}^{(s)}\sim\bar{\nu}_{\mathcal{D}_s}$, where $y_0^{(1)}=x_\infty$ and $K_s$ is the unlearning steps for the $s^{th}$ unlearning request. Suppose we have achieved $(\alpha,\varepsilon^{(s)}(\alpha))$-RU for the $s^{th}$ unlearning request, the learning process~\eqref{eq:GLD_learning} is $(\alpha,\varepsilon_0(\alpha))$-RDP and $\bar{\nu}_{\mathcal{D}_s}$ satisfies $C_{\text{LSI}}$-LSI, we achieve $(\alpha,\varepsilon^{(s+1)}(\alpha))$-RU for the $(s+1)^{th}$ unlearning request as well, where
    \begin{align*}
        & \varepsilon^{(s+1)}(\alpha) \leq \exp(-\frac{1}{\alpha}\sum_{k=0}^{K_{s+1}-1}R_k) \frac{\alpha-1/2}{\alpha-1}\left(\varepsilon_0(2\alpha) + \varepsilon^{(s)}(2\alpha)\right),
    \end{align*}
    $\varepsilon^{(0)}(\alpha) = 0\;\forall\alpha>1$ and $R_k$ are defined in Theorem~\ref{thm:NGD_unlearning_no_sc}.
\end{corollary}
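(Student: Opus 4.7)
I would proceed by induction on $s$. The base case $s=0$ is immediate: there has been no unlearning yet, so $\bar{\nu}_{\mathcal{D}_0}=\nu_{\mathcal{D}_0}$ by construction and $\varepsilon^{(0)}(\alpha)=0$ for every $\alpha>1$. For the inductive step $s\to s{+}1$, observe that the $(s{+}1)$-th unlearning pass is structurally identical to the setting of Theorem~\ref{thm:NGD_unlearning_no_sc}: starting from $y_0^{(s+1)}\sim\bar{\nu}_{\mathcal{D}_s}$ we run $K_{s+1}$ PNGD updates driven by $\nabla f_{\mathcal{D}_{s+1}}$, and $\nu_{\mathcal{D}_{s+1}}$ is the stationary distribution of this Markov chain by Theorem~\ref{thm:nu_eta}. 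The per-iteration contraction argument that underlies Theorem~\ref{thm:NGD_unlearning_no_sc} uses only $C_{\text{LSI}}$-LSI of the starting law, which is exactly the hypothesis the corollary places on $\bar{\nu}_{\mathcal{D}_s}$. Applying it would yield
\[ d_\alpha\!\bigl(\bar{\nu}_{\mathcal{D}_{s+1}},\nu_{\mathcal{D}_{s+1}}\bigr) \;\leq\; \exp\!\Bigl(-\tfrac{1}{\alpha}\sum_{k=0}^{K_{s+1}-1} R_k\Bigr)\, d_\alpha\!\bigl(\bar{\nu}_{\mathcal{D}_s},\nu_{\mathcal{D}_{s+1}}\bigr). \]

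The remaining task is to control the initial R\'enyi difference $d_\alpha(\bar{\nu}_{\mathcal{D}_s},\nu_{\mathcal{D}_{s+1}})$, which straddles both the residual unlearning error from round $s$ and one fresh dataset swap. Neither the inductive RU bound nor the RDP bound speaks directly to this pair, so I would insert the pivot $\nu_{\mathcal{D}_s}$ and invoke the weak triangle inequality for R\'enyi divergence of~\cite{mironov2017renyi}, in the form
\[ D_\alpha(P\|R) \;\leq\; \tfrac{\alpha-1/2}{\alpha-1}\, D_{2\alpha}(P\|Q) + D_{2\alpha-1}(Q\|R). \]
Using monotonicity of R\'enyi divergence in its order to replace $D_{2\alpha-1}(Q\|R)$ by $D_{2\alpha}(Q\|R)$, together with $\tfrac{\alpha-1/2}{\alpha-1}\geq 1$, puts both terms at order $2\alpha$ and pulls a common factor $\tfrac{\alpha-1/2}{\alpha-1}$ out. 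Running the same bound with $P$ and $R$ swapped and taking the maximum converts the statement into one on the symmetric R\'enyi difference:
\[ d_\alpha\!\bigl(\bar{\nu}_{\mathcal{D}_s},\nu_{\mathcal{D}_{s+1}}\bigr) \;\leq\; \tfrac{\alpha-1/2}{\alpha-1}\Bigl( d_{2\alpha}(\bar{\nu}_{\mathcal{D}_s},\nu_{\mathcal{D}_s}) + d_{2\alpha}(\nu_{\mathcal{D}_s},\nu_{\mathcal{D}_{s+1}}) \Bigr). \]
The first term is at most $\varepsilon^{(s)}(2\alpha)$ by the inductive hypothesis, and the second is at most $\varepsilon_0(2\alpha)$ since $\mathcal{D}_s,\mathcal{D}_{s+1}$ are adjacent and $\mathcal{M}$ is $(2\alpha,\varepsilon_0(2\alpha))$-RDP. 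Substituting into the contraction display produces exactly the claimed recursion.

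The main obstacle is the bookkeeping around the weak triangle step. It forces the induction to live at the level of the \emph{function} $\alpha\mapsto\varepsilon^{(s)}(\alpha)$ rather than a single scalar, since each application doubles the effective R\'enyi order; this is precisely why the corollary parameterises $\varepsilon_0$ and $\varepsilon^{(s)}$ by $\alpha$ on the right-hand side, and it is the reason that applying the corollary in practice requires specifying a schedule of orders $\alpha,2\alpha,4\alpha,\dots,2^s\alpha$ when unrolling the recursion across $s$ requests. A secondary subtlety worth checking carefully is that the derivation of Theorem~\ref{thm:NGD_unlearning_no_sc} truly only invokes the LSI constant of the initial law of the unlearning chain, so that replacing the Theorem's initialization by $\bar{\nu}_{\mathcal{D}_s}$ leaves the per-step rates $R_k$ and the LSI-constant recursion in $C_k$ intact; the explicit $C_{\text{LSI}}$-LSI hypothesis on $\bar{\nu}_{\mathcal{D}_s}$ is exactly what makes this substitution legitimate and what allows the per-step rates $R_k$ to be inherited verbatim from Theorem~\ref{thm:NGD_unlearning_no_sc}.
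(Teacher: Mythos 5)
Your proposal is correct and follows essentially the same route as the paper's proof: apply the contraction of Theorem~\ref{thm:NGD_unlearning_no_sc} starting from $\bar{\nu}_{\mathcal{D}_s}$ (whose $C_{\text{LSI}}$-LSI hypothesis licenses reusing the rates $R_k$), then bound $d_\alpha(\bar{\nu}_{\mathcal{D}_s},\nu_{\mathcal{D}_{s+1}})$ via the weak triangle inequality with pivot $\nu_{\mathcal{D}_s}$ at $p=q=2$, symmetrized and order-monotonicity-adjusted exactly as in Appendix~\ref{apx:SLU}. Your remarks on the order-doubling bookkeeping and on why the LSI hypothesis on $\bar{\nu}_{\mathcal{D}_s}$ is needed match the paper's treatment.
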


While our main theorems only discuss one unlearning request, we can generalize it to address multiple unlearning requests. Consider the case where our learning process is trained with dataset $\mathcal{D}$. At the unlearning phase, we receive a sequence of unlearning requests so that our dataset becomes $\mathcal{D}_1,\mathcal{D}_2,\ldots,\mathcal{D}_S$, where each consecutive dataset $\mathcal{D}_s,\mathcal{D}_{s+1}$ are adjacent (i.e., each unlearning request ask for unlearning one data point). Let us denote $\nu_{\mathcal{D}_s}$ the output probability distribution of $\mathcal{M}(\mathcal{D}_s)$ for $s\geq 0$, where we set $\mathcal{D}_0 = \mathcal{D}$. Sequential unlearning can be viewed as transferring along $\nu_{\mathcal{D}_0}\rightarrow\nu_{\mathcal{D}_1}\cdots\rightarrow\nu_{\mathcal{D}_S}$, where for each request we will stop when we are ``$\varepsilon$'' away from the target distribution in terms of R\'enyi difference. As a result, our actual path is $\nu_{\mathcal{D}_0}\rightarrow\bar{\nu}_{\mathcal{D}_1}\cdots\rightarrow\bar{\nu}_{\mathcal{D}_S}$ for some sequence of distribution $\{\bar{\nu}_{\mathcal{D}_s}\}_{s=1}^S$ such that the $\alpha$ R\'enyi difference $d_\alpha(\nu_{\mathcal{D}_s},\bar{\nu}_{\mathcal{D}_s})\leq \varepsilon$. See Figure~\ref{fig:main_idea2} for a pictorial example of the case $S=2$. While we are unable to characterize the convergence along $\bar{\nu}_{\mathcal{D}_s}\rightarrow \bar{\nu}_{\mathcal{D}_{s+1}}$ directly, we can leverage the weak triangle inequality of R\'enyi divergence to provide an upper bound of it. 

\begin{proposition}[Weak Triangle Inequality of R\'enyi divergence, Corollary 4 in~\cite{mironov2017renyi}]\label{prop:weak_triangle}
    For any $\alpha>1$, $p,q>1$ satisfying $1/p+1/q=1$ and distributions $P,Q,R$ with the same support:
    \begin{align*}
        & D_{\alpha}(P||R) \leq \frac{\alpha-\frac{1}{p}}{\alpha-1} D_{p\alpha}(P||Q) + D_{q(\alpha-1/p)}(Q||R).
    \end{align*}
\end{proposition}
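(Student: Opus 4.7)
The plan is to proceed by induction on $s$, where the base case $s=0$ is immediate: by construction $y_0^{(1)} = x_\infty \sim \nu_{\mathcal{D}_0}$, so $\bar{\nu}_{\mathcal{D}_0} = \nu_{\mathcal{D}_0}$ and $\varepsilon^{(0)}(\alpha) = 0$ trivially. For the inductive step, the $(s+1)^{\text{th}}$ unlearning request runs $K_{s+1}$ iterations of PNGD~\eqref{eq:GLD_unlearning} on the dataset $\mathcal{D}_{s+1}$, initialized at $y_0^{(s+1)} \sim \bar{\nu}_{\mathcal{D}_s}$, and terminating at $y_{K_{s+1}}^{(s+1)} \sim \bar{\nu}_{\mathcal{D}_{s+1}}$. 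My goal is to bound $d_\alpha(\bar{\nu}_{\mathcal{D}_{s+1}}, \nu_{\mathcal{D}_{s+1}})$.

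First I would apply (the same argument as) Theorem~\ref{thm:NGD_unlearning_no_sc} to this unlearning run. Because $\nu_{\mathcal{D}_{s+1}}$ is the stationary distribution for the PNGD update with drift $\nabla f_{\mathcal{D}_{s+1}}$, and because the hypotheses give $\bar{\nu}_{\mathcal{D}_s}$ satisfies $C_{\text{LSI}}$-LSI, the proof of Theorem~\ref{thm:NGD_unlearning_no_sc} applies verbatim with $\rho_0 = \bar{\nu}_{\mathcal{D}_s}$ in place of $\nu_{\mathcal{D}}$. This yields the contraction
\begin{equation*}
d_\alpha\!\left(\bar{\nu}_{\mathcal{D}_{s+1}}, \nu_{\mathcal{D}_{s+1}}\right) \;\leq\; \exp\!\left(-\tfrac{1}{\alpha}\sum_{k=0}^{K_{s+1}-1} R_k\right) d_\alpha\!\left(\bar{\nu}_{\mathcal{D}_s}, \nu_{\mathcal{D}_{s+1}}\right).
\end{equation*}

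Next I would control the seed term $d_\alpha(\bar{\nu}_{\mathcal{D}_s}, \nu_{\mathcal{D}_{s+1}})$ by inserting the intermediate distribution $\nu_{\mathcal{D}_s}$ and invoking Proposition~\ref{prop:weak_triangle} with $p=q=2$. For the forward direction this gives
\begin{equation*}
D_\alpha(\bar{\nu}_{\mathcal{D}_s} \| \nu_{\mathcal{D}_{s+1}}) \;\leq\; \tfrac{\alpha-1/2}{\alpha-1}\, D_{2\alpha}(\bar{\nu}_{\mathcal{D}_s} \| \nu_{\mathcal{D}_s}) + D_{2\alpha-1}(\nu_{\mathcal{D}_s} \| \nu_{\mathcal{D}_{s+1}}),
\end{equation*}
and analogously for the reverse direction with $\nu_{\mathcal{D}_{s+1}}$ in the left slot. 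Using monotonicity of R\'enyi divergence in $\alpha$ to replace $D_{2\alpha-1}$ by $D_{2\alpha}$, the inductive hypothesis $d_{2\alpha}(\bar{\nu}_{\mathcal{D}_s}, \nu_{\mathcal{D}_s}) \leq \varepsilon^{(s)}(2\alpha)$, and the RDP guarantee $d_{2\alpha}(\nu_{\mathcal{D}_s}, \nu_{\mathcal{D}_{s+1}}) \leq \varepsilon_0(2\alpha)$ from Definition~\ref{def:RDP} applied to the adjacent pair $(\mathcal{D}_s,\mathcal{D}_{s+1})$, and noting that the coefficient $\tfrac{\alpha-1/2}{\alpha-1} \geq 1$ so it can be factored out of both summands, I obtain
\begin{equation*}
d_\alpha\!\left(\bar{\nu}_{\mathcal{D}_s}, \nu_{\mathcal{D}_{s+1}}\right) \;\leq\; \tfrac{\alpha-1/2}{\alpha-1}\bigl(\varepsilon^{(s)}(2\alpha) + \varepsilon_0(2\alpha)\bigr).
\end{equation*}
Chaining this with the contraction above gives exactly the claimed bound on $\varepsilon^{(s+1)}(\alpha)$.

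The main subtlety I expect is ensuring the contraction step is legitimate in this sequential setting: Theorem~\ref{thm:NGD_unlearning_no_sc} as originally stated assumes the starting point is a learning convergent distribution satisfying LSI, whereas here the starting point $\bar{\nu}_{\mathcal{D}_s}$ is the output of a prior unlearning run. The hypothesis that $\bar{\nu}_{\mathcal{D}_s}$ satisfies $C_{\text{LSI}}$-LSI is what enables the LSI tracking argument (Lemma~\ref{lma:LSI_constant}) to be initialized, and the universal boundedness provided by Lemma~\ref{lma:uniform_LSI_bound} ensures the per-iteration rates $R_k$ retain their iteration-independent positive lower bound. A secondary delicacy is that the weak triangle inequality is stated for one-sided R\'enyi divergence, so one must apply it separately in each direction before taking the maximum to obtain the two-sided R\'enyi difference $d_\alpha$, but both directions give the same numerical bound since the RDP and RU assumptions already control $d_{2\alpha}$ symmetrically.
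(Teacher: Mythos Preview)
You have proved the wrong statement. The statement under consideration is Proposition~\ref{prop:weak_triangle}, the weak triangle inequality for R\'enyi divergence
\[
D_{\alpha}(P\|R) \leq \frac{\alpha-\tfrac{1}{p}}{\alpha-1}\, D_{p\alpha}(P\|Q) + D_{q(\alpha-1/p)}(Q\|R),
\]
which the paper simply cites from~\cite{mironov2017renyi} without giving its own proof. What you have written is instead a complete argument for Corollary~\ref{cor:SLU} (Sequential unlearning), which \emph{uses} Proposition~\ref{prop:weak_triangle} as a black box at the step where you insert the intermediate distribution $\nu_{\mathcal{D}_s}$.

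For what it is worth, your argument for the corollary matches the paper's own proof of Corollary~\ref{cor:SLU} essentially line for line: first apply the contraction from Theorem~\ref{thm:NGD_unlearning_no_sc} with initial law $\bar{\nu}_{\mathcal{D}_s}$, then bound the seed term $d_\alpha(\bar{\nu}_{\mathcal{D}_s},\nu_{\mathcal{D}_{s+1}})$ via the weak triangle inequality with $p=q=2$, monotonicity in $\alpha$, the inductive hypothesis, and the RDP guarantee. But none of this constitutes a proof of the proposition itself. A proof of Proposition~\ref{prop:weak_triangle} would instead proceed from the integral definition of $D_\alpha$ and apply H\"older's inequality with exponents $p,q$ to the density ratio $(P/R)^\alpha = (P/Q)^\alpha (Q/R)^\alpha$ against the measure $R$, as in Mironov's original argument.
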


Note that by choosing $p=q=2$, we can also establish the weak triangle inequality for R\'enyi difference $d_\alpha$ as follows
\begin{align}
    & D_{\alpha}(P||R) \leq \frac{\alpha-\frac{1}{2}}{\alpha-1} D_{2\alpha}(P||Q) + D_{2\alpha-1}(Q||R) \\
    & \stackrel{(a)}{\leq} \frac{\alpha-\frac{1}{2}}{\alpha-1} d_{2\alpha}(P,Q) + d_{2\alpha-1}(Q,R)\\
    & \stackrel{(b)}{\leq} \frac{\alpha-\frac{1}{2}}{\alpha-1} d_{2\alpha}(P,Q) + d_{2\alpha}(Q,R) \\
    & \stackrel{(c)}{\leq} \frac{\alpha-\frac{1}{2}}{\alpha-1} \left(d_{2\alpha}(P,Q) + d_{2\alpha}(Q,R)\right), \\
\end{align}
 where (a) is due to the definition of R\'enyi difference, (b) is due to the monotonicity of R\'enyi divergence in $\alpha$ and (c) is due to the fact that for all $\alpha>1$, $\frac{\alpha-\frac{1}{2}}{\alpha-1}\geq 1$. Repeat the same analysis for $D_{\alpha}(R||P)$ and combine with the bound above, one can show that
\begin{align}
    & d_{\alpha}(P,R) \leq \frac{\alpha-\frac{1}{2}}{\alpha-1} \left(d_{2\alpha}(P,Q) + d_{2\alpha}(Q,R)\right).
\end{align}

The main idea is illustrated in Figure~\ref{fig:main_idea2} (a). We first leverage Theorem~\ref{thm:NGD_unlearning_no_sc} to upper bound the R\'enyi difference $d_\alpha(\bar{\nu}_{\mathcal{D}_2},\nu_{\mathcal{D}_2})$ in terms of the R\'enyi difference between $d_\alpha(\bar{\nu}_{\mathcal{D}_1},\nu_{\mathcal{D}_2})$ (dash line) with a decaying factor. Then by weak triangle inequality of R\'enyi difference we derived above, we can further bound it with $\varepsilon^{(1)}(2\alpha)$ (black line) and $\varepsilon_0(2\alpha)$ (red line). 

\begin{proof}
    The proof is a direct combination of Theorem~\ref{thm:NGD_unlearning_no_sc} and Proposition~\ref{prop:weak_triangle}. To achieve $(\alpha,\varepsilon^{(s+1)}(\alpha))$-RU for the $(s+1)^{th}$ unlearning request, we need to bound $d_{\alpha}(\Tilde{\nu}_{\mathcal{D}_{s+1}},\nu_{\mathcal{D}_{s+1}})$. Assume we run $K_{s+1}$ unlearning iteration, from Theorem~\ref{thm:NGD_unlearning_no_sc} we have
    \begin{align}
        & d_{\alpha}(\Tilde{\nu}_{\mathcal{D}_{s+1}},\nu_{\mathcal{D}_{s+1}}) \leq \exp(-\frac{1}{\alpha}\sum_{k=0}^{K_{s+1}-1}R_k) d_{\alpha}(\Tilde{\nu}_{\mathcal{D}_{s}},\nu_{\mathcal{D}_{s+1}}),
    \end{align}
    where $R_k$ is defined in Theorem~\ref{thm:NGD_unlearning_no_sc}. On the other hand, by weak triangle inequality of R\'enyi difference, we have
    \begin{align}
        d_{\alpha}(\Tilde{\nu}_{\mathcal{D}_{s}},\nu_{\mathcal{D}_{s+1}}) \leq \frac{\alpha-1/2}{\alpha-1}\left(d_{2\alpha}(\Tilde{\nu}_{\mathcal{D}_{s}},\nu_{\mathcal{D}_{s}}) + d_{2\alpha}(\nu_{\mathcal{D}_{s}},\nu_{\mathcal{D}_{s+1}})\right).
    \end{align}
    By the initial RDP condition, we know that $d_{2\alpha}(\nu_{\mathcal{D}_{s}},\nu_{\mathcal{D}_{s+1}})\leq \varepsilon_0(2\alpha)$. On the other hand, by the RU guarantee of the $s^{th}$ unlearning request, we have 
    \begin{align}
         d_{2\alpha}(\Tilde{\nu}_{\mathcal{D}_{s}},\nu_{\mathcal{D}_{s}})\leq \varepsilon^{(s)}(2\alpha).
    \end{align}
    Together we have
    \begin{align}
        d_{\alpha}(\Tilde{\nu}_{\mathcal{D}_{s}},\nu_{\mathcal{D}_{s+1}}) \leq  \frac{\alpha-1/2}{\alpha-1}\left(\varepsilon_0(2\alpha) + \varepsilon^{(s)}(2\alpha)\right).
    \end{align}
    Hence we complete the proof.
\end{proof}

\section{Proof of Lemma~\ref{lma:LSI_constant}}
\begin{lemma*}[LSI constant characterization]
    Consider the following PNGD update for a closed convex set $\mathcal{C}$:
    \begin{align*}
        x_{k,1} = h(x_k),\; x_{k,2} = x_{k,1} + \sigma W_k,\; x_{k+1} = \Pi_{\mathcal{C}}(x_{k,2}),
    \end{align*}
    where $h$ is any $M$-Lipschitz map $\mathbb{R}^d\mapsto\mathbb{R}^d$, $W_k\sim \mathcal{N}(0,I_d)$ independent of anything before step $k$, and $\Pi_{\mathcal{C}}$ is the projection onto $\mathcal{C}$. Let $\mu_{k,1},\mu_{k,2}$ and $\mu_k$ be the probability distribution of $x_{k,1}$, $x_{k,2}$ and $x_k$ respectively. Then we have the following LSI constant characterization of this process. 1) If $\mu_k$ satisfies $c$-LSI, $\mu_{k,1}$ satisfies $M^2 c$-LSI. 2) If $\mu_{k,1}$ satisfies $c$-LSI, $\mu_{k,2}$ satisfies $(c+\sigma^2)$-LSI. 3) If $\mu_{k,2}$ satisfies $c$-LSI, $\mu_{k+1}$ satisfies $c$-LSI.
\end{lemma*}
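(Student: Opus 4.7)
I would prove the three claims in order, using standard LSI calculus: change of variables and the chain rule for Lipschitz maps, convolution-stability of LSI for the Gaussian step, and the $1$-Lipschitz property of the projection.

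For Claim (1), the plan is to apply the entropy form of LSI to a pullback test function. Fix any smooth positive $\phi:\mathbb{R}^d \to \mathbb{R}_{>0}$. Change of variables gives $\mathrm{Ent}_{\mu_{k,1}}(\phi) = \mathrm{Ent}_{\mu_k}(\phi\circ h)$. Applying the $c$-LSI for $\mu_k$ to $\phi\circ h$ and invoking the chain rule $\nabla(\phi\circ h)(x) = Dh(x)^\top (\nabla \phi)(h(x))$, together with the $M$-Lipschitz bound $\|Dh(x)\|_{\mathrm{op}} \leq M$ (valid almost everywhere by Rademacher's theorem, which is enough for the integral), yields the pointwise bound $\|\nabla(\phi\circ h)(x)\|^2 \leq M^2 \|(\nabla \phi)(h(x))\|^2$. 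Pushing the resulting Dirichlet form back forward under $h$ gives exactly the $M^2 c$-LSI bound for $\mu_{k,1}$.

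For Claim (2), the plan is to invoke the convolution stability of LSI constants: if $\nu_1$ is $c_1$-LSI and $\nu_2$ is $c_2$-LSI, then $\nu_1 * \nu_2$ is $(c_1+c_2)$-LSI. In our setting, $\mu_{k,1}$ is $c$-LSI by hypothesis, and $\mathcal{N}(0,\sigma^2 I_d)$ is $\sigma^2$-LSI by Gross's classical inequality under the convention of Definition~\ref{def:LSI}. Hence $\mu_{k,2} = \mu_{k,1} * \mathcal{N}(0,\sigma^2 I_d)$ is $(c+\sigma^2)$-LSI. This convolution stability is recorded in the monograph~\cite{chewi2023logconcave} already cited in the paper, and the additivity of constants is consistent with the Gaussian-Gaussian case where $\mathcal{N}(0,a^2 I_d) * \mathcal{N}(0,\sigma^2 I_d) = \mathcal{N}(0,(a^2+\sigma^2)I_d)$ is $(a^2+\sigma^2)$-LSI.

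For Claim (3), the projection $\Pi_{\mathcal{C}}$ onto a closed convex set is non-expansive, hence $1$-Lipschitz, a standard fact from convex analysis. Therefore Claim (3) follows directly from Claim (1) applied with $M = 1$, yielding $c$-LSI for $\mu_{k+1} = (\Pi_{\mathcal{C}})_\# \mu_{k,2}$.

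The main obstacle is Claim (2): the additive convolution-stability of LSI constants is a non-trivial fact, substantially deeper than Claims (1) and (3). Once it is granted (either by direct citation or via a Hamilton--Jacobi / Ornstein--Uhlenbeck interpolation argument), the remaining two claims are a one-line chain-rule computation plus the non-expansiveness of convex projections. A minor but genuine technical subtlety in Claim (1) is that the Lipschitz map $h$ is only differentiable almost everywhere; this is handled by Rademacher's theorem and the fact that a.e.\ bounds on $\|Dh\|_{\mathrm{op}}$ suffice to control the Dirichlet integral.
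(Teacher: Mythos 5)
Your proposal is correct and matches the paper's proof in substance: the paper establishes the three claims by citing exactly the three facts you identify — LSI stability under $M$-Lipschitz pushforwards (Proposition 2.3.3 of the cited Chewi monograph, avoiding the differentiability assumption of Vempala–Wibisono's Lemma 16, which your Rademacher remark addresses), additive convolution stability of LSI constants for the Gaussian step (Lemma 17 of Vempala–Wibisono, translated into the paper's reciprocal convention), and the $1$-Lipschitzness of the convex projection fed back into claim (1). You simply flesh out the chain-rule argument for claims (1) and (3) where the paper cites, so the decomposition and key lemmas are identical.
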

\begin{proof}
    The first statement is the direct result of Proposition 2.3.3. in~\cite{chewi2023logconcave}. See also Lemma 16 in~\cite{vempala2019rapid} but additionally require $h$ being differentiable. The second statement is the direct result of Lemma 17 in~\cite{vempala2019rapid}. The third statement is because $\Pi_{\mathcal{C}}$ is a $1$-Lipchitz map. Together we complete the proof.
\end{proof}

\section{Proof of Lemma~\ref{lma:renyi-derivitive-add-noise}}
\begin{lemma*}[Lemma 18 in~\cite{vempala2019rapid}, with customized variance]
    For any probability distribution $\rho_0$, $\nu_0$ and for any $t\geq 0$, let $\rho_t = \rho_0 * \mathcal{N}(0,2t \sigma^2 I_d)$ and $\nu_t = \nu_0 * \mathcal{N}(0,2t \sigma^2 I_d)$. Then for all $\alpha>0$ we have
    \begin{align}
        \frac{d}{dt} D_{\alpha}(\rho_t||\nu_t) = -\alpha \sigma^2 \frac{G_{\alpha}(\rho_t;\nu_t)}{F_{\alpha}(\rho_t;\nu_t)}.
    \end{align}
\end{lemma*}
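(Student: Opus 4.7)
The plan is to directly generalize the Vempala--Wibisono computation (their Lemma 18 with unit variance) to our setting with scaled variance $2t\sigma^2$. The key structural observation is that both $\rho_t$ and $\nu_t$, being convolutions with $\mathcal{N}(0,2t\sigma^2 I_d)$, satisfy the heat equation $\partial_t p_t = \sigma^2 \Delta p_t$. The factor $\sigma^2$ (instead of $1$) is exactly what will propagate through the computation to produce the $\sigma^2$ in the final answer. I see two natural routes and will describe both.

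The direct route goes as follows. First, write
\[
F_\alpha(\rho_t;\nu_t) = \int \rho_t^\alpha\, \nu_t^{1-\alpha}\, dx,
\]
differentiate under the integral, and substitute the heat equations for $\partial_t \rho_t$ and $\partial_t \nu_t$. This yields
\[
\frac{d}{dt}F_\alpha = \sigma^2 \int \bigl[\alpha\, \rho_t^{\alpha-1}\nu_t^{1-\alpha}\,\Delta\rho_t + (1-\alpha)\,\rho_t^\alpha\nu_t^{-\alpha}\,\Delta\nu_t\bigr]\,dx.
\]
Next, integrate by parts twice (assuming the mild decay needed so that boundary terms vanish), and expand the resulting gradients of products $\nabla(\rho_t^{\alpha-1}\nu_t^{1-\alpha})$ and $\nabla(\rho_t^\alpha \nu_t^{-\alpha})$. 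After collecting, the cross terms combine symmetrically and the whole expression becomes
\[
\frac{d}{dt}F_\alpha = -\sigma^2\,\alpha(\alpha-1)\int \Bigl[\rho_t^{\alpha-2}\nu_t^{1-\alpha}|\nabla\rho_t|^2 - 2\rho_t^{\alpha-1}\nu_t^{-\alpha}\nabla\rho_t\cdot\nabla\nu_t + \rho_t^\alpha\nu_t^{-\alpha-1}|\nabla\nu_t|^2\Bigr]\,dx.
\]
The integrand is exactly $\nu_t\cdot (\rho_t/\nu_t)^{\alpha-2}|\nabla(\rho_t/\nu_t)|^2$, so the bracket equals $G_\alpha(\rho_t;\nu_t)$ by definition~\eqref{eq:FnG}. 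Finally, applying the chain rule to $D_\alpha = \tfrac{1}{\alpha-1}\log F_\alpha$ gives
\[
\frac{d}{dt}D_\alpha(\rho_t\|\nu_t) = \frac{1}{\alpha-1}\cdot\frac{1}{F_\alpha}\cdot \frac{d F_\alpha}{dt} = -\alpha\sigma^2\,\frac{G_\alpha(\rho_t;\nu_t)}{F_\alpha(\rho_t;\nu_t)},
\]
which is the claim.

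The slicker alternative is a simple time change. Let $s = \sigma^2 t$ and $\tilde\rho_s = \rho_0 * \mathcal{N}(0,2sI_d)$, $\tilde\nu_s = \nu_0 * \mathcal{N}(0,2sI_d)$; then $\rho_t = \tilde\rho_{\sigma^2 t}$ and $\nu_t = \tilde\nu_{\sigma^2 t}$. The original Vempala--Wibisono Lemma 18 (unit variance) gives $\tfrac{d}{ds}D_\alpha(\tilde\rho_s\|\tilde\nu_s) = -\alpha\, G_\alpha/F_\alpha$, and multiplying by $ds/dt = \sigma^2$ yields the claim. I would present the direct calculation in the main proof, since it is self-contained and makes the origin of the $\sigma^2$ transparent, and remark on the scaling argument as a sanity check.

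The main obstacle is the routine but error-prone bookkeeping in the integration-by-parts step: the cross terms from the two Laplacian integrals must combine with exactly the right signs to reproduce the perfect square $(\rho_t/\nu_t)^{\alpha-2}|\nabla(\rho_t/\nu_t)|^2$. Care is also needed to justify differentiation under the integral and the vanishing of boundary terms, which holds under the standard integrability assumptions on $\rho_0,\nu_0$ that are implicit in the hypotheses and are already used in~\cite{vempala2019rapid}.
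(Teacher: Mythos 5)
Your proposal is correct and follows essentially the same route as the paper: the paper's proof likewise observes that $\rho_t$ and $\nu_t$ evolve by the Fokker--Planck (heat) equation $\partial_t \rho_t = \sigma^2 \Delta \rho_t$ and then plugs this into the first step of Lemma 18 of Vempala--Wibisono, which is exactly the integration-by-parts computation you write out explicitly. Your explicit bookkeeping checks out (the cross terms do assemble into $-\sigma^2\alpha(\alpha-1)G_\alpha$), and the time-rescaling remark is a valid sanity check, so the only difference is that you carry out in full what the paper delegates to the cited reference.
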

\begin{proof}
    The proof is nearly identical to that in~\cite {vempala2019rapid}. Let $X_t \sim \rho_t$, then we have the following stochastic differential equation.
    \begin{align}
        dX_t = \sqrt{2}\sigma dW_t.
    \end{align}
    Thus $\rho_t$ evolves following the Fokker-Planck equation:
    \begin{align}
        \frac{\partial \rho_t}{\partial t} = \sigma^2 \Delta \rho_t.
    \end{align}
    Same for $\nu_t$ and just plug this into the first step in the proof of Lemma 18 in~\cite{vempala2019rapid}, which gives the result.
\end{proof}

\section{Proof of Proposition~\ref{prop:unbiased_limit_adj}}\label{apx:unbiased_limit_adj}
The proof is a direct manipulation of the R\'enyi divergence. Due to symmetry, we will only show that $D_\alpha(\Tilde{\nu}_\mathcal{D},\Tilde{\nu}_{\mathcal{D}^\prime})\leq \frac{2\alpha F}{(\alpha-1) n}$, as the proof for the bound of $D_\alpha(\Tilde{\nu}_{\mathcal{D}^\prime},\Tilde{\nu}_{\mathcal{D}})$ is identical.

Define $Z_{\mathcal{D}} = \int \exp(-f_{\mathcal{D}}(x)) dx$ be the normalizing constant. Then we have
\begin{align}\label{eq:prop_eq1}
    & D_\alpha(\Tilde{\nu}_\mathcal{D},\Tilde{\nu}_{\mathcal{D}^\prime}) = \frac{1}{\alpha - 1}\log \mathbb{E}_{x\sim \Tilde{\nu}_{\mathcal{D}^\prime}}\left(\frac{\Tilde{\nu}_{\mathcal{D}}(x)}{\Tilde{\nu}_{\mathcal{D}^\prime}(x)}\right)^\alpha= \frac{1}{\alpha - 1}\log \mathbb{E}_{x\sim \Tilde{\nu}_{\mathcal{D}}}\left(\frac{\Tilde{\nu}_{\mathcal{D}}(x)}{\Tilde{\nu}_{\mathcal{D}^\prime}(x)}\right)^{\alpha-1}\\
    & = \frac{1}{\alpha - 1}\log \left(\left(\frac{Z_{\mathcal{D}^\prime}}{Z_{\mathcal{D}}}\right)^{\alpha-1}\mathbb{E}_{x\sim \Tilde{\nu}_{\mathcal{D}}}\left(\frac{\exp(-f_{\mathcal{D}}(x))}{\exp(-f_{\mathcal{D}^\prime}(x))}\right)^{\alpha-1}\right) \\
    & = \log(\frac{Z_{\mathcal{D}^\prime}}{Z_{\mathcal{D}}}) + \frac{1}{\alpha-1}\log(\mathbb{E}_{x\sim \Tilde{\nu}_{\mathcal{D}}}\left(\frac{\exp(-f_{\mathcal{D}}(x))}{\exp(-f_{\mathcal{D}^\prime}(x))}\right)^{\alpha-1}).
\end{align}
Recall that $\mathcal{D}$ and $\mathcal{D}^\prime$ are adjacent, thus they only differ in one index. Without loss of generality, assume the index is $n$ so that $\mathbf{d}_i=\mathbf{d}_i^\prime$ for all $i<n$. By definition, 
\begin{align}
    & f_{\mathcal{D}^\prime}(x) = \frac{1}{n}\sum_{i=1}^{n-1} f(x;\mathbf{d}_i^\prime) + \frac{1}{n}f(x;\mathbf{d}_n^\prime)\\
    & = \frac{1}{n}\sum_{i=1}^{n-1} f(x;\mathbf{d}_i^\prime) + \frac{1}{n}f(x;\mathbf{d}_n) + \frac{1}{n}f(x;\mathbf{d}_n^\prime)-\frac{1}{n}f(x;\mathbf{d}_n)\\
    & = f_{\mathcal{D}}(x) + \frac{1}{n}(f(x;\mathbf{d}_n^\prime)-f(x;\mathbf{d}_n)).
\end{align}
As a result, the ratio of the normalizing constant can be bounded as
\begin{align}
    & \frac{Z_{\mathcal{D}^\prime}}{Z_{\mathcal{D}}} = \frac{\int \exp(-f_{\mathcal{D}^\prime}(x)) dx}{Z_{\mathcal{D}}} = \frac{\int \exp(-f_{\mathcal{D}^\prime}(x)) dx}{Z_{\mathcal{D}}} = \frac{\int \exp(-f_{\mathcal{D}}(x)+\frac{f(x;\mathbf{d}_n^\prime)-f(x;\mathbf{d}_n)}{n}) dx}{Z_{\mathcal{D}}}\\
    & \leq \frac{\int \exp(-f_{\mathcal{D}}(x)+\frac{|f(x;\mathbf{d}_n^\prime)-f(x;\mathbf{d}_n)|}{n}) dx}{Z_{\mathcal{D}}}\\
     & \leq \frac{\int \exp(-f_{\mathcal{D}}(x)+\frac{F}{n}) dx}{Z_{\mathcal{D}}} = \frac{\exp(\frac{F}{n})\int \exp(-f_{\mathcal{D}}(x)) dx}{Z_{\mathcal{D}}}= \frac{\exp(\frac{F}{n})Z_{\mathcal{D}}}{Z_{\mathcal{D}}} = \exp(\frac{F}{n}).
\end{align}

On the other hand, for the second term we have
\begin{align}
    & \mathbb{E}_{x\sim \Tilde{\nu}_{\mathcal{D}}}\left(\frac{\exp(-f_{\mathcal{D}}(x))}{\exp(-f_{\mathcal{D}^\prime}(x))}\right)^{\alpha-1} = \mathbb{E}_{x\sim \Tilde{\nu}_{\mathcal{D}}} \exp(-(\alpha-1)(f_{\mathcal{D}}(x)-f_{\mathcal{D}^\prime}(x))) \\
    & \leq \mathbb{E}_{x\sim \Tilde{\nu}_{\mathcal{D}}} \exp((\alpha-1)(\frac{F}{n})) = \exp((\alpha-1)(\frac{F}{n})).
\end{align}
As a result, we can further simplify~\eqref{eq:prop_eq1} as follows
\begin{align}
    & D_\alpha(\Tilde{\nu}_\mathcal{D},\Tilde{\nu}_{\mathcal{D}^\prime}) = \log(\frac{Z_{\mathcal{D}^\prime}}{Z_{\mathcal{D}}}) + \frac{1}{\alpha-1}\log(\mathbb{E}_{x\sim \Tilde{\nu}_{\mathcal{D}}}\left(\frac{\exp(-f_{\mathcal{D}}(x))}{\exp(-f_{\mathcal{D}^\prime}(x))}\right)^{\alpha-1})\\
    & \leq \frac{F}{n} + \frac{(\alpha-1) F}{(\alpha-1)n} = \frac{2F}{n}.
\end{align}
Together we complete the proof.

\section{Experiment Details}\label{apx:exp}

\subsection{$(\alpha,\varepsilon)$-RU to $(\epsilon,\delta)$-Unlearning Conversion}
Let us first state the definition of $(\epsilon,\delta)$-unlearning from prior literature~\cite{guo2020certified,sekhari2021remember,neel2021descent}.

\begin{definition}
    Consider a randomized learning algorithm $\mathcal{M}:\mathcal{X}^n\mapsto \mathbb{R}^d$ and a randomized unlearning algorithm $\mathcal{U}: \mathbb{R}^d\times \mathcal{X}^n\times \mathcal{X}^n \mapsto \mathbb{R}^d$. We say $(\mathcal{M},\mathcal{U})$ achieves $(\epsilon,\delta)$-unlearning if for any adjacent datasets $\mathcal{D},\mathcal{D}^\prime$ and any event $E$, we have
    \begin{align}
        & \mathbb{P}\left(\mathcal{U}(\mathcal{M}(\mathcal{D}),\mathcal{D},\mathcal{D}^\prime)\subseteq E\right) \leq \exp(\epsilon)\mathbb{P}\left(\mathcal{M}(\mathcal{D}^\prime)\subseteq E\right) + \delta,\\
        & \mathbb{P}\left(\mathcal{M}(\mathcal{D}^\prime)\subseteq E\right) \leq \exp(\epsilon)\mathbb{P}\left(\mathcal{U}(\mathcal{M}(\mathcal{D}),\mathcal{D},\mathcal{D}^\prime)\subseteq E\right) + \delta.
    \end{align}
\end{definition}

Following the same proof of RDP-DP conversion (Proposition 3 in~\cite{mironov2017renyi}), we have the following $(\alpha,\varepsilon)$-RU to $(\epsilon,\delta)$-unlearning conversion as well.
\begin{proposition}
    If $(\mathcal{M},\mathcal{U})$ achieves $(\alpha,\varepsilon)$-RU, it satisfies $(\epsilon,\delta)$-unlearning as well, where
    \begin{align}
        \epsilon = \varepsilon + \frac{\log(1/\delta)}{\alpha-1}.
    \end{align}
\end{proposition}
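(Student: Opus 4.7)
The plan is to mirror the proof of Mironov's RDP-to-DP conversion (Proposition 3 in~\cite{mironov2017renyi}) and apply it separately to each of the two R\'enyi divergences that the R\'enyi difference controls. Let $\rho$ denote the law of $\mathcal{U}(\mathcal{M}(\mathcal{D}),\mathcal{D},\mathcal{D}^\prime)$ and $\nu^\prime$ the law of $\mathcal{M}(\mathcal{D}^\prime)$. By Definition~\ref{def:RU}, $(\alpha,\varepsilon)$-RU gives $d_\alpha(\rho,\nu^\prime)\le \varepsilon$, hence both $D_\alpha(\rho\|\nu^\prime)\le\varepsilon$ and $D_\alpha(\nu^\prime\|\rho)\le\varepsilon$ by Definition~\ref{def:R_diff}. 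This symmetric control is exactly what will be needed to establish the two-sided $(\epsilon,\delta)$-unlearning inequalities.

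I would first handle the direction $\mathbb{P}(\mathcal{U}(\mathcal{M}(\mathcal{D}),\mathcal{D},\mathcal{D}^\prime)\in E)\le e^{\epsilon}\mathbb{P}(\mathcal{M}(\mathcal{D}^\prime)\in E)+\delta$. Introduce the privacy loss random variable $L(x)=\log(\rho(x)/\nu^\prime(x))$ and note that the assumption $D_\alpha(\rho\|\nu^\prime)\le\varepsilon$ is equivalent to $\mathbb{E}_{x\sim\rho}\exp((\alpha-1)L(x))\le\exp((\alpha-1)\varepsilon)$. For any event $E$, split it as $E=(E\cap\{L\le \epsilon\})\cup(E\cap\{L>\epsilon\})$. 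On the first piece, the definition of $L$ immediately gives $\mathbb{P}_\rho(E\cap\{L\le\epsilon\})\le e^{\epsilon}\mathbb{P}_{\nu^\prime}(E)$. For the second piece, apply Markov's inequality to $\exp((\alpha-1)L)$ under $\rho$ to obtain $\mathbb{P}_\rho(L>\epsilon)\le \exp(-(\alpha-1)(\epsilon-\varepsilon))$. Setting this tail equal to $\delta$ yields exactly $\epsilon=\varepsilon+\log(1/\delta)/(\alpha-1)$.

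The second inequality in the definition of $(\epsilon,\delta)$-unlearning is obtained by repeating the same argument verbatim, now using $D_\alpha(\nu^\prime\|\rho)\le\varepsilon$ and the privacy loss $L^\prime(x)=\log(\nu^\prime(x)/\rho(x))$. Because $d_\alpha$ is the maximum of the two one-sided divergences, both applications of the conversion use the same $\varepsilon$ and therefore the same $\epsilon$ and $\delta$, giving the stated result.

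There is no real obstacle here: the statement is a direct transplant of the RDP$\to$DP conversion to the unlearning setting, and the only subtlety is that one must invoke the argument on both $D_\alpha(\rho\|\nu^\prime)$ and $D_\alpha(\nu^\prime\|\rho)$, which is why the definition of RU via the two-sided R\'enyi difference (as opposed to a one-sided divergence) is essential. If one only controlled a single direction, only one of the two $(\epsilon,\delta)$ inequalities would follow, which, as the footnote near Definition~\ref{def:R_diff} emphasizes, would be insufficient for controlling both Type I and Type II errors of the strongest adversary.
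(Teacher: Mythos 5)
Your proposal is correct and follows essentially the same route as the paper, which simply invokes the proof of Mironov's RDP-to-DP conversion (Proposition 3 of~\cite{mironov2017renyi}) applied to both one-sided divergences controlled by the R\'enyi difference $d_\alpha$. The only cosmetic difference is that you instantiate the conversion via the privacy-loss random variable and a Markov tail bound rather than Mironov's H\"older-inequality argument, but both are standard proofs of the same statement and yield the identical constant $\epsilon = \varepsilon + \log(1/\delta)/(\alpha-1)$.
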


\subsection{Datasets}
\textbf{MNIST}~\cite{deng2012mnist} contains the grey-scale image of number $0$ to number $9$, each with $28 \times 28$ pixels. We follow~\cite{neel2021descent} to take the images with the label $3$ and $8$ as the two classes for logistic regression. The training data contains $11982$ instances in total and the testing data contains $1984$ samples. We spread the image into an $x \in \mathbb{R}^{d}, d = 724$ feature as the input of logistic regression.

\textbf{CIFAR-10}~\cite{krizhevsky2009learning} contains the RGB-scale image of ten classes for image classification, each with $32 \times 32$ pixels. For \textbf{CIFAR-10-binary}, we also select class \#3 (cat) and class \#8 (ship) as the two classes for logistic regression. The training data contains $10000$ instances and the testing data contains $2000$ samples. As to \textbf{CIFAR-10-multi-class}, we include all the classes for multi-class logistic regression. The training data contains $50000$ instances and the testing data contains $10000$ samples. We apply data pre-processing on CIFAR-10 by extracting the compact feature encoding from the last layer before pooling of an off-the-shelf pre-trained ResNet18 model~\cite{he2016deep} from Torch-vision library~\cite{torchvision, PyTorch} as the input of our logistic regression. The compact feature encoding is $x \in \mathbb{R}^{d}, d = 512$.

All the inputs from the datasets are normalized with the $\ell_2$ norm of $1$.

\subsection{Experiment Settings}\label{apx:exp_settings}
\textbf{Hardware and Frameworks}
All the experiments run with PyTorch=2.1.2~\cite{paszke2017automatic} and numpy=1.24.3~\cite{harris2020array}. The codes run on a server with a single NVIDIA RTX 6000 GPU with AMD EPYC 7763 64-Core Processor.

\textbf{Problem Formulation}
Given a \underline{binary} classification task $\mathcal{D} = \{\textbf{x}_i \in \mathbb{R}^d, y_i \in \{-1, +1\}\}_{i=1}^n$, our goal is to obtain a set of parameters $\textbf{w}$ that optimizes the objective below:
\begin{equation}
\label{equ:logistic_regression}
    \mathcal{L}(\textbf{w}; \mathcal{D}) = \frac{1}{n} \sum_{i = 1}^n l(\textbf{w}^\top \textbf{x}_i, y_i) + \frac{\lambda}{2} || \textbf{w} ||_2^2,
\end{equation}where the objective consists of a standard logistic regression loss $l(\textbf{w}^\top x_i, y_i) = - \log \sigma (y_i \textbf{w}^\top \textbf{x}_i)$, where $\sigma(t) = \frac{1}{1 + \exp(-t)}$ is the sigmoid function; and a $\ell_2$ regularization term where $\lambda$ is a hyper-parameter to control the regularization, and we set $\lambda$ as $10^{-6} \times n$ across all the experiments. By simple algebra one can show that~\cite{guo2020certified} 

\begin{align}
    & \nabla l(\textbf{w}^\top \textbf{x}_i, y_i) = (\sigma (y_i \textbf{w}^\top \textbf{x}_i )- 1) y_i \textbf{x}_i + \lambda \textbf{w},\\
    & \nabla^2 l(\textbf{w}^\top \textbf{x}_i, y_i) = \sigma (y_i \textbf{w}^\top \textbf{x}_i ) (1-\sigma (y_i \textbf{w}^\top \textbf{x}_i ))\textbf{x}_i\textbf{x}_i^T + \lambda I_d.
\end{align}
Due to $\sigma (y_i \textbf{w}^\top \textbf{x}_i)\in [0,1]$, it is not hard to see that we have smoothness $L=1/4+\lambda$ and strong convexity $\lambda$. 

The per-sample gradient with clipping w.r.t. the weights $\textbf{w}$ of the logistic regression loss function is given as:
\begin{equation}
    \nabla_{clip} l(\textbf{w}^\top \textbf{x}_i, y_i) = \Pi_{\mathcal{C}_M}\left((\sigma (y_i \textbf{w}^\top \textbf{x}_i )- 1) y_i \textbf{x}_i\right) + \lambda \textbf{w},
\end{equation}
where $\Pi_{\mathcal{C}_M}$ denotes the gradient clipping projection into the Euclidean ball with the radius of $M$, to satisfy the Lipschitz constant bound. According to Proposition 5.2 of~\cite{ye2022differentially}, the per-sampling clipping operation still results in a $L$-smooth, $m$-strongly convex objective. The resulting Langevin learning/unlearning update on the full dataset is as follows:
\begin{equation}
    \frac{1}{n} \sum_{i=1}^n \nabla_{clip} l(\mathbf{w}^T\mathbf{x}_i,y_i),
\end{equation}

Finally, we remark that in our specific case since we have normalized the features of all data points (i.e., $\|x\| = 1$), by the explicit gradient formula we know that $\|(\sigma (y_i \textbf{w}^\top \textbf{x}_i )- 1) y_i \textbf{x}_i\| \leq 1$.

As to \underline{multi-class} classification task $\mathcal{D} = \{\textbf{x}_i \in \mathbb{R}^d, y_i \in \{-1, +1\}^c\}_{i=1}^n$, the loss function is denoted as follows~\cite{ye2022differentially}:

\begin{equation}
\label{equ:logistic_regression_mult}
    \mathcal{L}(\textbf{w}; \mathcal{D}) = \frac{1}{n} \sum_{i = 1}^n l(\textbf{w}^\top \textbf{x}_i, y_i) + \frac{\lambda}{2} || \textbf{w} ||_2^2,
\end{equation}where
\begin{equation}
    l(\textbf{w}^\top \textbf{x}_i, y_i) = -y^1 \log (\frac{e^{w_1^\top x_i}}{e^{w_1^\top x_i}+...+e^{w_c^\top x_i}}) - ... - y^c \log (\frac{e^{w_c^\top x_i}}{e^{w_1^\top x_i}+...+e^{w_c^\top x_i}}).
\end{equation}
After similar derivations, the aforementioned objective function can also yield an explicit expression for the gradient, as well as bounds for the constants.

The constant meta-data of the loss function in equation ~\eqref{equ:logistic_regression} and~\eqref{equ:logistic_regression_mult} above for the datasets is shown in the table below:

\begin{table}[h]
\centering
\caption{The constants for the loss function and other calculation on MNIST and CIFAR-10.}
\label{tab:exp_loss_constants}
\begin{tabular}{@{}ccccc@{}}
\toprule
 & expression & MNIST & CIFAR10-binary & CIFAR10-multi-class \\ \midrule
smoothness constant $L$ & $\frac{1}{4} + \lambda$ & $\frac{1}{4} + \lambda$ & $\frac{1}{4} + \lambda$ & $1 + \lambda$ \\
strongly convex constant $m$ & $\lambda$ & 0.0119 & 0.0100 & 0.0499 \\
Lipschitz constant $M$ & gradient clip & 1 & 1 & 2 \\
RDP constant $\delta$ & $1 / n$ & 8.3458e-5 & 0.0001 & 2e-5 \\
$C_{\text{LSI}}$ & $> \frac{\sigma^2}{m}$ & $\frac{2 \sigma^2}{m}$ & $\frac{2 \sigma^2}{m}$ & $\frac{2\sigma^2}{m}$ \\
\bottomrule
\end{tabular}
\end{table}

\textbf{Learning from scratch set-up} For the baselines and our Langevin unlearning framework, we all sample the initial weight $\textbf{w}$ randomly sampled from i.i.d Gaussian distribution $\mathcal{N}(\mu_0, C_{\text{LSI}})$, where $\mu_0$ is a hyper-parameter denoting the initialization mean and we set as $1000$ to simulate the situation where the initial $w$ has a long distance towards the optimum alike most situations in real-world applications. For the learning methods $\mathcal{M}$, we set $T=10,000$ for all the methods to converge.

\textbf{Unlearning request implementation. }In our experiment, for an unlearning request of removing data point $i$, we replace its feature with random features drawn from $\mathcal{N}(0,I_d)$ and its label with a random label drawn uniformly at random drawn from all possible classes. This is similar to the DP replacement definition defined in~\cite{kairouz2021practical}, where they replace a point with a special \emph{null} point $\perp$.

\textbf{General implementation of baseline D2D~\cite{neel2021descent}}

$\bullet$ Across all of our experiments involved with D2D, we follow the original paper to set the step size as $2 / (L + m)$. 

$\bullet$ For the experiments in Fig.~\ref{fig:exp_fig1}, we calculate the noise to add after gradient descent with the non-private bound as illustrated in Theorem.~\ref{thm:neel_1} (Theorem 9 in~\cite{neel2021descent}); For experiments with sequential unlearning requests in Fig.~\ref{fig:exp_fig5}, we calculate the least step number and corresponding noise with the bound in Theorem.~\ref{thm:neel_2}(Theorem 28 in~\cite{neel2021descent}). 

$\bullet$ The implementation of D2D follows the pseudo code shown in Algorithm 1,2 in ~\cite{neel2021descent} as follows:

\begin{algorithm}
\caption{D2D: learning from scratch}
\label{D2D_learn}
\begin{algorithmic}[1]
\STATE \textbf{Input}: dataset $D$
\STATE \textbf{Initialize} $\textbf{w}_0$
\FOR{$t = 1,2,\dots, 10000$}
    \STATE $\textbf{w}_t = \textbf{w}_{t-1} - \frac{2}{L+m} \times \frac{1}{n} \sum_{i=1}^n (\nabla_{clip} l(\mathbf{w}_{t-1}^T\mathbf{x}_i,y_i))$
\ENDFOR
\STATE \textbf{Output}: $\hat{\textbf{w}} = \textbf{w}_T$
\end{algorithmic}
\end{algorithm}

\begin{algorithm}
\caption{D2D: unlearning}
\label{alg:D2D_unlearn}
\begin{algorithmic}[1]
\STATE \textbf{Input}: dataset \(D_{i-1}\), update \(u_i\); model \(\textbf{w}_i\)
\STATE \textbf{Update dataset} \(D_i = D_{i-1} \circ u_i\)
\STATE \textbf{Initialize} \(\textbf{w}'_0 = \textbf{w}_i\)
\FOR{\(t = 1, \dots,I\)}
    \STATE \(\textbf{w}'_t =  \textbf{w}'_{t-1} - \frac{2}{L+m} \times  \frac{1}{n} \sum_{i=1}^n  \nabla_{clip} l((\mathbf{w}_{t-1}^\prime)^T\mathbf{x}_i,y_i)) \)
\ENDFOR
\STATE \textbf{Calculate} $\gamma = \frac{L-m}{L+m}$
\STATE \textbf{Draw} Z $\sim \mathcal{N}(0, \sigma^2 I_d)$
\STATE \textbf{Output} \(\hat{\textbf{w}}_i = \textbf{w}'_{T_i} + Z \)
\end{algorithmic}
\end{algorithm}

The settings and the calculation of $I, \sigma$ in Algorithm.~\ref{alg:D2D_unlearn} are discussed in the later part of this section and could be found in Section.~\ref{apx:D2D}.

\newpage 
\textbf{General Implementation of Langevin Unlearning} 

$\bullet$ We set the step size $\eta$ for Langevin unlearning framework across all the experiments as $1/L$.

$\bullet$ The pseudo code for Langevin unlearning framework is as follows:

\begin{algorithm}[H]
\caption{Langevin unlearning framework, learning / unlearning}
\begin{algorithmic}[1]
\STATE \textbf{Input}: dataset $D$
\IF{Learn from scratch}
\STATE \textbf{Initialize} $\textbf{w}_0 \in \mathcal{N}(\mu_0, C_{\text{LSI}} I_d)$
\ELSE
\STATE \textbf{Initialize} $\textbf{w}_0$ with the pre-trained parameters
\ENDIF
\FOR{$t = 1,2,\dots, K$}
    \STATE \textbf{Draw} $W \stackrel{\text{iid}}{\sim} \mathcal{N} (0, I_d) $
    \STATE $\textbf{w}_t = \textbf{w}_{t-1} - \frac{1}{L} \times \frac{1}{n} \sum_{i=1}^n (\nabla_{clip} l(\mathbf{w}_{t-1}^T\mathbf{x}_i,y_i)) + \sqrt{2 \frac{\sigma^2}{L}} W$
\ENDFOR
\STATE \textbf{Output}: $\hat{\textbf{w}} = \textbf{w}_K$
\end{algorithmic}
\end{algorithm}

\subsection{Implementation Details for Fig.~\ref{fig:exp_fig1}}
In this experiment, we first train the methods on the original dataset $\mathcal{D}$ from scratch to obtain the initial weights $\textbf{w}_0$. Then we randomly remove a single data point ($S=1$) from the dataset to get the new dataset $\mathcal{D}'$, and unlearn the methods from the initial weights $\hat{\textbf{w}}$ and test the accuracy on the testing set.

we set the target $\hat{\epsilon}$ with $6$ different values as $[0.05, 0.1, 0.5, 1, 2, 5]$. For each target $\hat{\epsilon}$:

$\bullet$ For D2D, we set three different unlearning gradient descent step budgets as $I=1,2,5$, and calculate the corresponding noise to be added to the weight after gradient descent on $\mathcal{D}$ according to Theorem.~\ref{thm:neel_1}, where the detailed noise information is shown in the table below:

\begin{table}[h]
\centering
\caption{Baseline $\sigma$ details in Fig.~\ref{fig:exp_fig1}}
\label{tab:my-table}
\begin{tabular}{@{}cccccccc@{}}
\toprule
 &  & 0.05 & 0.1 & 0.5 & 1 & 2 & 5 \\ \midrule
\multirow{3}{*}{CIFAR-10-binary} & 1 & 59.5184 & 29.7994 & 6.0233 & 3.0504 & 1.5626 & 0.6663 \\
 & 2 & 28.1340 & 14.0859 & 2.8472 & 1.4419 & 0.7386 & 0.3149 \\
 & 5 & 9.4523 & 4.7325 & 0.9565 & 0.4844 & 0.2481 & 0.1058 \\ \midrule
 \multirow{3}{*}{CIFAR-10-multi-class} & 1 & 5.9612 & 2.9840 & 0.6022 & 0.3044 & 0.1554 & 0.0657 \\
 & 2 & 2.8386 & 1.4209 & 0.2867 & 0.1449 & 0.0740 & 0.0313 \\
 & 5 & 0.9764 & 0.4887 & 0.0986 & 0.0498 & 0.0254 & 0.0107 \\ \midrule
\multirow{3}{*}{MNIST} & 1 & 36.8573 & 18.4620 & 3.7310 & 1.8890 & 0.9673 & 0.4120 \\
 & 2 & 17.3030 & 8.6229 & 1.7507 & 0.8864 & 0.4538 & 0.1933 \\
 & 5 & 5.6774 & 2.8424 & 0.5744 & 0.2908 & 0.1489 & 0.0634 \\ \bottomrule
\end{tabular}
\end{table}

$\bullet$ For the Langevin unlearning framework, we set the unlearning fine-tune step budget as $\hat{K}=1$ only, and calculate the smallest $\sigma$ that could satisfy the fine-tune step budget and target $\hat{\epsilon}$ at the same time. The calculation follows the binary search algorithm as follows:

\begin{algorithm}[H]
\caption{Langevin Unlearning: binary search $\sigma$ that satisfy $\hat{K}$ and target $\hat{\epsilon}$ budget}
\begin{algorithmic}[1]
\STATE \textbf{Input}:target $\hat{\epsilon}$, unlearn step budget $K$, lower bound $\sigma_{\text{low}}$, upper bound $\sigma_{\text{high}}$
\WHILE{$\sigma_{\text{low}} \leq \sigma_{\text{high}}$}
\STATE $\sigma_{\text{mid}}$ = $(\sigma_{\text{low}} + \sigma_{\text{high}}) / 2$
\STATE call Alg.~\ref{alg:LU_find_k} to find the least $K$ that satisfies $\hat{\epsilon}$ with $\sigma = \sigma_{\text{mid}}$
\IF{$K == \hat{K}$}
\STATE \textbf{Return} $K$
\ELSIF{$K \leq \hat{K}$}
\STATE $\sigma_{\text{high}} = \sigma_{\text{mid}}$
\ELSE
\STATE $\sigma_{\text{low}} = \sigma_{\text{mid}}$
\ENDIF
\ENDWHILE
\end{algorithmic}
\end{algorithm}

\begin{algorithm}[H]
\caption{Langevin Unlearning: find the least unlearn step $K$ that satisfies the target $\hat{\epsilon}$}
\label{alg:LU_find_k}
\begin{algorithmic}[1]
\STATE \textbf{Input}:target $\hat{\epsilon}$,  $\sigma$
\STATE \textbf{Initialize} $K=1, \epsilon > \hat{\epsilon}$
\WHILE{$\epsilon > \hat{\epsilon}$}
\STATE $\epsilon$ = $\min_{\alpha > 1} [\exp(-\frac{2K \sigma^2 \eta}{\alpha C_{LSI}})\frac{4\alpha S^2 M^2}{m \sigma^2 n^2} + \frac{\log(\frac{1}{\delta})}{\alpha - 1}]$ 
\STATE $K = K + 1$
\ENDWHILE
\STATE \textbf{Return} $K$
\end{algorithmic}
\end{algorithm}

The $\sigma$ found is reported in the table below:

\begin{table}[h]
\caption{The $\sigma$ found with different target $\hat{\epsilon}$}
\centering
\label{tab:LU_sigma}
\begin{tabular}{@{}ccccccc@{}}
\toprule
$\hat{\epsilon}$ & 0.05 & 0.1 & 0.5 & 1 & 2 & 5 \\ \midrule
CIFAR-10-binary & 0.2431 & 0.1220 & 0.0250 & 0.0125 & 0.0064 & 0.0028 \\
CIFAR-10-multi-class & 0.0473 & 0.0238 & 0.0049 & 0.0025 & 0.0012 & 0.0005 \\
MNIST & 0.1872 & 0.094 & 0.0190 & 0.0096 & 0.0049 & 0.0021 \\ \bottomrule
\end{tabular}
\end{table}

\subsection{Implementation Details for Fig.~\ref{fig:exp_fig5}}
In this experiment, we fix the target $\hat{\epsilon} = 1$, we set the total number of data removal as $100$. We show the accumulated unlearning steps w.r.t. the number of data removed. We first train the methods from scratch to get the initial weight $\textbf{w}_0$, and sequentially remove data step by step until all the data points are removed. We count the accumulated unlearning steps $K$ needed in the process.

$\bullet$ For D2D, According to the original paper, only one data point could be removed a time. We calculate the least required steps and the noise to be added according to Theorem.~\ref{thm:neel_2}.

$\bullet$ For Langevin unlearning, we fix the $\sigma = 0.03$, and we let the model unlearn $[5, 10, 20]$ per time thanks to our theory. We obtain the least required unlearning steps for each removal operation $K_{\text{list}}$ following corollary.~\ref{cor:SLU}. The pseudo code is shown in Algorithm.~\ref{alg:LU_find_k_sequential}.

\begin{algorithm}[H]
\caption{Langevin Unlearning: find the least unlearn step $K$ in sequential settings}
\label{alg:LU_find_k_sequential}
\begin{algorithmic}[1]
\STATE \textbf{Input}:target $\hat{\epsilon}$,  $\sigma$, total removal $S$, removal batch size $b$ per time
\STATE $K_{\text{list}} = []$ 
\FOR{i in range($S/b$)}
\STATE \textbf{Initialize} $K_{\text{list}}[i-1] = 1$, $\epsilon > \hat{\epsilon}$
\WHILE{$\epsilon > \hat{\epsilon}$}
\STATE $\epsilon = \min_{\alpha > 1} [\varepsilon(\alpha, \sigma, b, i, K_{\text{list}})+\log(1/\delta)/(\alpha-1)]$
\STATE $K_{\text{list}}[i-1] = K_{\text{list}}[i-1] + 1$
\ENDWHILE
\ENDFOR
\STATE \textbf{Return} $K_{\text{list}}$
\end{algorithmic}
\end{algorithm}

\begin{algorithm}[H]
\caption{$\varepsilon(\alpha, \sigma, b, i, K_{\text{list}})$}
\label{alg:LU_epsilon_sequential}
\begin{algorithmic}[1]
\STATE \textbf{Input}:target $\alpha$,  $\sigma$, removal batch size $b$ per time, $i$-th removal in the sequence
\IF{i==1}
\STATE \textbf{Return}  $\exp(-\frac{\eta m K_{\text{list}}[0]}{\alpha}) \times \varepsilon_0(\alpha, b, \sigma)$
\ELSE
\STATE \textbf{Return} $\exp(-\frac{\eta m K_{\text{list}}[i-1]}{\alpha}) \times \frac{\alpha - 0.5}{\alpha - 1}(\varepsilon_0(2\alpha, b, \sigma) +  \varepsilon(2\alpha, \sigma, b, i-1, K_{\text{list}}))$
\ENDIF
\end{algorithmic}
\end{algorithm}

\begin{algorithm}[H]
\caption{$\epsilon_0(\alpha, S, \sigma)$}
\begin{algorithmic}[1]
\STATE \textbf{Return} $\frac{4\alpha S^2 M^2}{m \sigma^2 n^2}$
\end{algorithmic}
\end{algorithm}

\subsection{Implementation Details for Fig.~\ref{fig:exp_fig4}}
In this study, we set the $\sigma$ of the Langevin unlearning framework as $[0.05, 0.1, 0.2, 0.5, 1]$. For each $\sigma$, we calculate the corresponding $\epsilon_0$. We train the Langevin unlearning framework from scratch to get the initial weight $\textbf{w}_0$. Then we remove $100$ data points from the dataset and unlearn the model. We here also call Algorithm.~\ref{alg:LU_find_k} to obtain the least required unlearning steps $K$.

\subsection{Implementation Details for Fig.~\ref{fig:exp_fig23}}
In this study, we set different target $\hat{\epsilon}$ as $[0.5, 1, 2, 5]$ and set different number of data to remove $S = [1, 50, 100]$. We train the Langevin unlearning framework from scratch to get the initial weight, then remove some data, unlearn the model and report the accuracy. We calculate the least required unlearning steps $K$ by again calling Algorithm.~\ref{alg:LU_find_k}.

\subsection{Additional experiments}
\begin{figure}[h]
     \centering
     \subfloat[][$S$ v.s. Accuracy]{\includegraphics[width=0.5\linewidth]{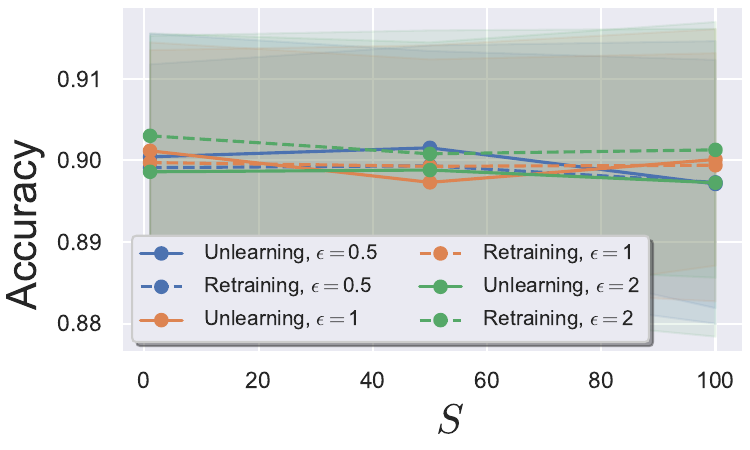}\label{fig:exp_fig3}}
     \subfloat[][Unlearn one point on the Adult dataset]{\includegraphics[width=0.5\linewidth]{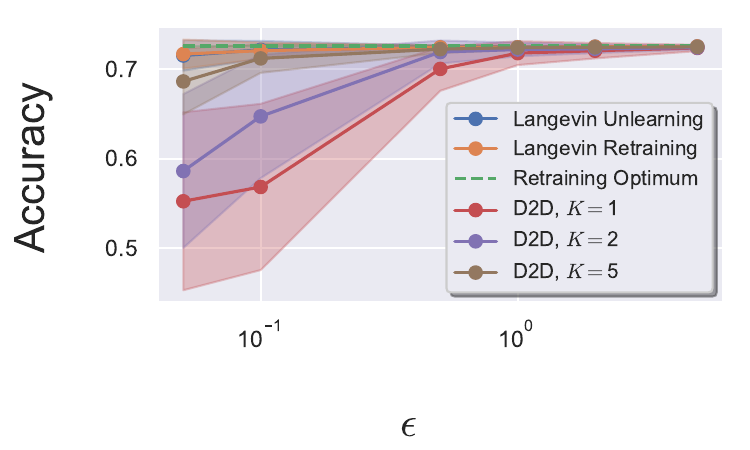}}
     \caption{(a) The utility results that correspond to Figure~\ref{fig:exp_fig23}. Since $\sigma$ is fixed the utility is roughly the same. (b) The privacy-utility tradeoff for unlearning one point restricting to one (or $K$) unlearning update on the Adult dataset.}
\end{figure}

\section{Unlearning Guarantee of Delete-to-Descent~\cite{neel2021descent}}\label{apx:D2D}
\begin{theorem}[Theorem 9 in~\cite{neel2021descent}, with internal non-private state]\label{thm:neel_1}
    Assume for all $\mathbf{d}\in \mathcal{X}$, $f(x;\mathbf{d})$ is $m$-strongly convex, $M$-Lipschitz and $L$-smooth in $x$. Define $\gamma = \frac{L-m}{L+m}$ and $\eta = \frac{2}{L+m}$. Let the learning iteration $T\geq I + \log(\frac{2Rmn}{2M})/\log(1/\gamma)$ for PGD (Algorithm 1 in~\cite{neel2021descent}) and the unlearning algorithm (Algorithm 2 in~\cite{neel2021descent}, PGD fine-tuning on learned parameters \textbf{before} adding Gaussian noise) run with $I$ iterations. Assume $\epsilon=O(\log(1/\delta))$, let the standard deviation of the output perturbation gaussian noise $\sigma$ to be
    \begin{align}
        \sigma = \frac{4\sqrt{2}M\gamma^I}{mn(1-\gamma^I)(\sqrt{\log(1/\delta)+\epsilon}-\sqrt{\log(1/\delta)})}.
    \end{align}
    Then it achieves $(\epsilon,\delta)$-unlearning for add/remove dataset adjacency.
\end{theorem}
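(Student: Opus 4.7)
The plan is to establish an $\ell_2$-sensitivity bound between the unlearning output (before noise) and the output of retraining from scratch, and then apply an analytic Gaussian-mechanism calibration.

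First, I would verify the standard contraction property: for $m$-strongly convex, $L$-smooth objectives with step size $\eta = 2/(L+m)$, the unprojected gradient map $x \mapsto x - \eta \nabla f_\mathcal{D}(x)$ is $\gamma$-Lipschitz (a direct consequence of co-coercivity), and since orthogonal projection onto a convex set is $1$-Lipschitz, the full PGD update $\psi_\mathcal{D}(x) = \Pi_{\mathcal{C}_R}(x - \eta \nabla f_\mathcal{D}(x))$ is a $\gamma$-contraction on $\mathcal{C}_R$ with unique fixed point $x^*(\mathcal{D})$. The same holds for $\psi_{\mathcal{D}'}$ with fixed point $x^*(\mathcal{D}')$.

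Second, I would derive a sensitivity bound by coupling two trajectories. Let $\hat{x}_T = \psi_\mathcal{D}^T(x_0)$ be the learned parameter, $\hat{y}_I = \psi_{\mathcal{D}'}^I(\hat{x}_T)$ be the unlearning output before noise, and $\tilde{x}_T = \psi_{\mathcal{D}'}^T(x_0)$ be the retraining counterfactual. Observing that both $\hat{y}_I$ and $\tilde{x}_T$ arise from applying $\psi_{\mathcal{D}'}^I$ to their respective starting points $\hat{x}_T$ and $\tilde{x}_{T-I}$, contraction yields $\|\hat{y}_I - \tilde{x}_T\| \leq \gamma^I \|\hat{x}_T - \tilde{x}_{T-I}\|$. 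The inner distance is bounded by a triangle inequality through the two optima: $\|\hat{x}_T - x^*(\mathcal{D})\| \leq 2R\gamma^T$ and $\|\tilde{x}_{T-I} - x^*(\mathcal{D}')\| \leq 2R\gamma^{T-I}$ from PGD convergence, plus the standard optimum-sensitivity bound $\|x^*(\mathcal{D}) - x^*(\mathcal{D}')\| \leq 2M/(mn)$ (a consequence of $m$-strong convexity applied to the gradient of the difference between $f_\mathcal{D}$ and $f_{\mathcal{D}'}$, which has norm at most $2M/n$). The hypothesis $T \geq I + \log(2Rmn/(2M))/\log(1/\gamma)$ forces $2R\gamma^{T-I} \leq 2M/(mn)$, so the initialization error is absorbed into an $O(M/(mn))$ term, producing an overall sensitivity $\Delta$ of the order $M\gamma^I/(mn)$, with the precise constant $\Delta = 8M\gamma^I/[mn(1-\gamma^I)]$ obtained by the specific telescoping used in the cited reference.

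Finally, I would invoke the analytic Gaussian mechanism: if the pre-noise outputs on adjacent datasets differ in $\ell_2$ norm by at most $\Delta$, then adding $Z \sim \mathcal{N}(0, \sigma^2 I_d)$ with $\sigma \geq \Delta / [\sqrt{2}(\sqrt{\log(1/\delta)+\epsilon} - \sqrt{\log(1/\delta)})]$ makes the two noised outputs $(\epsilon,\delta)$-indistinguishable. This exactly realizes the $(\epsilon,\delta)$-unlearning condition under add/remove adjacency, and substituting $\Delta = 8M\gamma^I/[mn(1-\gamma^I)]$ recovers the stated $\sigma$. The mechanism constant is obtained from the tail bound $\Pr[Z > t] \leq e^{-t^2/2}$ applied to the privacy-loss random variable $\Delta Z/\sigma + \Delta^2/(2\sigma^2)$, which I would include as a short lemma.

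The hard part will be pinning down the exact $1/(1-\gamma^I)$ factor in the sensitivity. The one-shot triangle inequality I sketched above only yields a constant of order $M\gamma^I/(mn)$ without any $(1-\gamma^I)^{-1}$; matching the theorem's constant requires a refined telescoping across the $I$ unlearning iterations that treats the per-step discrepancy between $\nabla f_\mathcal{D}$ and $\nabla f_{\mathcal{D}'}$ as a geometric series of $\gamma$-contracted perturbations rather than invoking the optimum-sensitivity bound only once. This telescoping is also the natural place to accommodate the ``internal non-private state'' setting, where the adversary is allowed to observe the learned $\hat{x}_T$ and hence the starting point of the unlearning chain.
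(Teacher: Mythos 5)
The first thing to note is that the paper does not prove this statement at all: it is Theorem~9 of Neel et al.\ (the cited D2D paper), restated verbatim in the appendix purely so the baseline's privacy accounting can be used in the experiments (``We include those theorems for D2D \dots for completeness''). So there is no in-paper proof to compare against, and your attempt has to be judged as a reconstruction of the argument in the cited reference. On that score, your overall architecture is the right one and matches what Neel et al.\ actually do: the PGD map with $\eta = 2/(L+m)$ is a $\gamma$-contraction (co-coercivity plus $1$-Lipschitzness of the projection), the pre-noise sensitivity is controlled by coupling the unlearned and retrained trajectories through the two optima using the $2M/(mn)$ optimum-sensitivity bound, and the final step is a Gaussian-mechanism calibration in exactly the $\sqrt{\log(1/\delta)+\epsilon}-\sqrt{\log(1/\delta)}$ form you quote; your arithmetic check that $\Delta = 8M\gamma^I/[mn(1-\gamma^I)]$ divided by $\sqrt{2}(\sqrt{\log(1/\delta)+\epsilon}-\sqrt{\log(1/\delta)})$ reproduces the stated $\sigma$ is also correct.

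The genuine gap is your diagnosis of where the $(1-\gamma^I)^{-1}$ comes from. It is not produced by ``a refined telescoping across the $I$ unlearning iterations'' of a single request: within one request the fine-tuning is exact PGD on the fixed objective $f_{\mathcal{D}'}$, so there is no per-step discrepancy between $\nabla f_{\mathcal{D}}$ and $\nabla f_{\mathcal{D}'}$ to sum into a geometric series, and a single application of the contraction is all that is available --- which is exactly why your one-shot bound stalls at order $M\gamma^I/(mn)$. The factor arises because Theorem~9 of Neel et al.\ is a statement about a \emph{stream} of edit requests: the secret (pre-noise) iterate is carried over from one request to the next, and its distance to the current optimum obeys the recursion $\Delta_i \leq \gamma^I\left(\Delta_{i-1} + \tfrac{4M}{mn}\right)$, whose fixed point is $\tfrac{4M\gamma^I}{mn(1-\gamma^I)}$; doubling this (to compare two adjacent runs rather than one run against its optimum) gives the sensitivity, and the hypothesis $T \geq I + \log(2Rmn/(2M))/\log(1/\gamma)$ is precisely what makes the invariant hold at $i=0$. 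So the geometric series you need is indexed by the sequence of unlearning requests, not by the $I$ gradient steps inside one request; for a genuinely single request your argument would in fact justify a smaller noise level than the theorem states. This also explains the role of the ``internal non-private state'': the recursion is on the clean iterate, which must be retained across requests for the invariant to propagate.
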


\begin{theorem}[Theorem 28 in~\cite{neel2021descent}, without internal non-private state]\label{thm:neel_2}
    Assume for all $\mathbf{d}\in \mathcal{X}$, $f(x;\mathbf{d})$ is $m$-strongly convex, $M$-Lipschitz and $L$-smooth in $x$. Define $\gamma = \frac{L-m}{L+m}$ and $\eta = \frac{2}{L+m}$. Let the learning iteration $T\geq I + \log(\frac{2Rmn}{2M})/\log(1/\gamma)$ for PGD (Algorithm 1 in~\cite{neel2021descent}) and the unlearning algorithm (Algorithm 2 in~\cite{neel2021descent}, PGD fine-tuning on learned parameters \textbf{after} adding Gaussian noise) run with $I + \log(\log(4di/\delta))/\log(1/\gamma)$ iterations for the $i^{th}$ sequential unlearning request, where $I$ satisfies
    \begin{align}
        I \geq \frac{\log\left(\frac{\sqrt{2d}(1-\gamma)^{-1}}{\sqrt{2\log(2/\delta)+\epsilon}-\sqrt{2\log(2/\delta)}}\right)}{\log(1/\gamma)}.
    \end{align}
    Assume $\epsilon=O(\log(1/\delta))$, let the standard deviation of the output perturbation gaussian noise $\sigma$ to be
    \begin{align}
        \sigma = \frac{8 M\gamma^I}{mn(1-\gamma^I)(\sqrt{2\log(2/\delta)+3\epsilon}-\sqrt{2\log(2/\delta)+2\epsilon})}.
    \end{align}
    Then it achieves $(\epsilon,\delta)$-unlearning for add/remove dataset adjacency.
\end{theorem}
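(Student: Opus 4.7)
The plan is to follow the standard Delete-to-Descent coupling template: reduce the unlearning privacy guarantee to a Gaussian mechanism applied to the deterministic PGD trajectory, where the sensitivity is bounded by combining per-step contraction with per-step gradient sensitivity, and then handle the extra complication of accumulated noise across sequential rounds via a high-probability Gaussian tail event absorbed into the failure parameter $\delta$.

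First I would establish the contraction lemma: with $\eta=2/(L+m)$, the PGD operator $T_{\mathcal{D}}(w)=\Pi_{\mathcal{C}_R}\bigl(w-\eta\nabla f_{\mathcal{D}}(w)\bigr)$ is $\gamma$-Lipschitz in $\|\cdot\|_2$ with $\gamma=(L-m)/(L+m)$, using that projection is $1$-Lipschitz and that the gradient step is $\gamma$-contractive under $m$-strong-convexity and $L$-smoothness. Next, the single-step replacement sensitivity: for add/remove adjacent $\mathcal{D},\mathcal{D}'$, $M$-Lipschitzness forces $\|\nabla f_{\mathcal{D}}(w)-\nabla f_{\mathcal{D}'}(w)\|\le 2M/n$, so $\|T_{\mathcal{D}}(w)-T_{\mathcal{D}'}(w)\|\le \eta\cdot 2M/n$. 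Composing these two ingredients along a standard coupling of the unlearning trajectory against an imagined re-training trajectory on $\mathcal{D}'$ starting from the same point, and telescoping a geometric series, yields a deterministic sensitivity of order $\tfrac{2M\gamma^{I}}{mn(1-\gamma^I)}$ between the two pre-noise iterates after $I$ unlearning steps. This is precisely what motivates the $\sigma$ in the theorem.

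For round $i$ in the sequential case (no internal non-private state), the wrinkle is that the unlearner must start from the previously \emph{published} $\hat w^{(i-1)}=w^{(i-1)}+Z_{i-1}$ rather than the clean $w^{(i-1)}$. I would couple the actual unlearning trajectory with an ``idealized'' trajectory that starts from $w^{(i-1)}$ on the same random tape. The difference at initialization is $\|Z_{i-1}\|$, which by standard Gaussian norm concentration is at most $\sigma\sqrt{2d\log(4di/\delta)}$ except on a failure event of total probability $\le \delta/2$ after union bounding over all $i$ rounds and $d$ coordinates. The PGD contraction then shrinks this extra distance by $\gamma$ per step, and the additional $\log(\log(4di/\delta))/\log(1/\gamma)$ unlearning iterations beyond $I$ are chosen exactly so that $\gamma^{\log\log(4di/\delta)/\log(1/\gamma)}\cdot \sigma\sqrt{2d\log(4di/\delta)}$ shrinks to be absorbed into the baseline sensitivity handled by the Gaussian mechanism. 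Conditioned on the good event, the total pre-noise sensitivity is then still of the form $\tfrac{4M\gamma^I}{mn(1-\gamma^I)}$ (up to constants), and the Gaussian mechanism applied with the stated $\sigma$ gives $(\epsilon,\delta/2)$-DP; combined with the $\delta/2$ failure probability this yields $(\epsilon,\delta)$-unlearning by the standard ``bad event'' $\delta$-absorption argument.

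The main obstacle I expect is the careful bookkeeping in the third step: the coupling argument must simultaneously track (i) the geometric sensitivity from the differing data point, (ii) the deterministic contraction of the pre-existing starting-point error from the previous round, (iii) the probabilistic size of the accumulated noise, and (iv) the exact calibration of the Gaussian mechanism, where the analytic Gaussian-mechanism bound uses the refined form $\sigma\ge\Delta/(\sqrt{2\log(2/\delta)+2\epsilon}\,-\,\sqrt{2\log(2/\delta)})$ rather than the loose $\sqrt{2\log(1.25/\delta)}/\epsilon$ version. Matching the precise constants in the theorem — in particular the peculiar $\sqrt{2\log(2/\delta)+3\epsilon}-\sqrt{2\log(2/\delta)+2\epsilon}$ denominator of $\sigma$ — will require splitting $\epsilon$ into ``privacy-for-this-round'' and ``privacy-for-conditioning-on-good-event'' components and invoking the analytic Gaussian mechanism with those pieces separately, which is the only nontrivial arithmetic in the argument.
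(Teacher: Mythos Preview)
This theorem is not proved in the present paper at all: it is simply \emph{quoted} from \cite{neel2021descent} (their Theorem~28) and included in Appendix~\ref{apx:D2D} ``for completeness'' so that the D2D baseline can be used in the experiments. The paper contains no proof, sketch, or argument for this statement --- it is an external result serving as a point of comparison.

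Your proposal, by contrast, is an actual reconstruction of the argument from the original D2D paper. The outline you give --- PGD $\gamma$-contraction, per-step gradient sensitivity $2M/n$, telescoping to a geometric-series sensitivity bound, high-probability control of the accumulated Gaussian noise via the extra $\log(\log(4di/\delta))/\log(1/\gamma)$ iterations, and a bad-event absorption into $\delta$ --- is indeed the template of the proof in \cite{neel2021descent}, and your identification of the constant-matching in the analytic Gaussian mechanism as the fiddly part is accurate. So there is nothing wrong with your plan as a proof of the cited result; it is just that there is no ``paper's own proof'' here to compare it against.
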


Note that the privacy guarantee of D2D~\cite{neel2021descent} is with respect to add/remove dataset adjacency and ours is the replacement dataset adjacency. However, by a slight modification of the proof of Theorem~\ref{thm:neel_1} and~\ref{thm:neel_2}, one can show that a similar (but slightly worse) bound of the theorem above also holds for D2D~\cite{neel2021descent}. For simplicity and fair comparison, we directly use the bound in Theorem~\ref{thm:neel_1} and~\ref{thm:neel_2} in our experiment. Note that~\cite{kairouz2021practical} also compares a special replacement DP with standard add/remove DP, where a data point can only be replaced with a \emph{null} element in their definition. In contrast, our replacement data adjacency allows \emph{arbitrary} replacement which intuitively provides a stronger privacy notion.

\textbf{The non-private internal state of D2D. }There are two different versions of the D2D algorithm depending on whether one allows the server (model holder) to save and leverage the model parameter \emph{before} adding Gaussian noise. The main difference between Theorem~\ref{thm:neel_1} and~\ref{thm:neel_2} is whether their unlearning process starts with the ``clean'' model parameter (Theorem~\ref{thm:neel_1}) or the noisy model parameter (Theorem~\ref{thm:neel_2}). Clearly, allowing the server to keep and leverage the non-private internal state provides a weaker notion of privacy~\cite{neel2021descent}. In contrast, our Langevin unlearning approach by default only keeps the noisy parameter so that we do not save any non-private internal state. As a result, one should compare Langevin unlearning to D2D with Theorem~\ref{thm:neel_2} for a fair comparison. 

%%%%%%%%%%%%%%%%%%%%%%%%%%%%%%%%%%%%%%%%%%%%%%%%%%%%%%%%%%%%%%%%%%%%%%%%%%%%%%%
%%%%%%%%%%%%%%%%%%%%%%%%%%%%%%%%%%%%%%%%%%%%%%%%%%%%%%%%%%%%%%%%%%%%%%%%%%%%%%%

%%%%%%%%%%%%%%%%%%%%%%%%%%%%%%%%%%%%%%%%%%%%%%%%%%%%%%%%%%%%

\newpage
\section*{NeurIPS Paper Checklist}

\begin{enumerate}

\item {\bf Claims}
    \item[] Question: Do the main claims made in the abstract and introduction accurately reflect the paper's contributions and scope?
    \item[] Answer: \answerYes{} % Replace by \answerYes{}, \answerNo{}, or \answerNA{}.
    \item[] Justification: All claims are provided with corresponding theorems (i.e., Theorem~\ref{thm:NGD_unlearning_no_sc} for unlearning guarantees) and we demonstrate superior privacy-utility-complexity tradeoff via experiments in Section~\ref{sec:exp}.
    \item[] Guidelines:
    \begin{itemize}
        \item The answer NA means that the abstract and introduction do not include the claims made in the paper.
        \item The abstract and/or introduction should clearly state the claims made, including the contributions made in the paper and important assumptions and limitations. A No or NA answer to this question will not be perceived well by the reviewers. 
        \item The claims made should match theoretical and experimental results, and reflect how much the results can be expected to generalize to other settings. 
        \item It is fine to include aspirational goals as motivation as long as it is clear that these goals are not attained by the paper. 
    \end{itemize}

\item {\bf Limitations}
    \item[] Question: Does the paper discuss the limitations of the work performed by the authors?
    \item[] Answer: \answerYes{} % Replace by \answerYes{}, \answerNo{}, or \answerNA{}.
    \item[] Justification: We provide a detailed discussion on the limitation of our work in Section~\ref{apx:limit}.
    \item[] Guidelines:
    \begin{itemize}
        \item The answer NA means that the paper has no limitation while the answer No means that the paper has limitations, but those are not discussed in the paper. 
        \item The authors are encouraged to create a separate "Limitations" section in their paper.
        \item The paper should point out any strong assumptions and how robust the results are to violations of these assumptions (e.g., independence assumptions, noiseless settings, model well-specification, asymptotic approximations only holding locally). The authors should reflect on how these assumptions might be violated in practice and what the implications would be.
        \item The authors should reflect on the scope of the claims made, e.g., if the approach was only tested on a few datasets or with a few runs. In general, empirical results often depend on implicit assumptions, which should be articulated.
        \item The authors should reflect on the factors that influence the performance of the approach. For example, a facial recognition algorithm may perform poorly when image resolution is low or images are taken in low lighting. Or a speech-to-text system might not be used reliably to provide closed captions for online lectures because it fails to handle technical jargon.
        \item The authors should discuss the computational efficiency of the proposed algorithms and how they scale with dataset size.
        \item If applicable, the authors should discuss possible limitations of their approach to address problems of privacy and fairness.
        \item While the authors might fear that complete honesty about limitations might be used by reviewers as grounds for rejection, a worse outcome might be that reviewers discover limitations that aren't acknowledged in the paper. The authors should use their best judgment and recognize that individual actions in favor of transparency play an important role in developing norms that preserve the integrity of the community. Reviewers will be specifically instructed to not penalize honesty concerning limitations.
    \end{itemize}

\item {\bf Theory Assumptions and Proofs}
    \item[] Question: For each theoretical result, does the paper provide the full set of assumptions and a complete (and correct) proof?
    \item[] Answer: \answerYes{} % Replace by \answerYes{}, \answerNo{}, or \answerNA{}.
    \item[] Justification: All assumptions are clearly stated in our theorems, see Theorem~\ref{thm:NGD_unlearning_no_sc} and~\ref{thm:NGD_learning_no_sc} for instance. The proofs for every theoretical statements are provided in Appendix~\ref{apx:nu_eta} to~\ref{apx:unbiased_limit_adj}.
    \item[] Guidelines:
    \begin{itemize}
        \item The answer NA means that the paper does not include theoretical results. 
        \item All the theorems, formulas, and proofs in the paper should be numbered and cross-referenced.
        \item All assumptions should be clearly stated or referenced in the statement of any theorems.
        \item The proofs can either appear in the main paper or the supplemental material, but if they appear in the supplemental material, the authors are encouraged to provide a short proof sketch to provide intuition. 
        \item Inversely, any informal proof provided in the core of the paper should be complemented by formal proofs provided in appendix or supplemental material.
        \item Theorems and Lemmas that the proof relies upon should be properly referenced. 
    \end{itemize}

    \item {\bf Experimental Result Reproducibility}
    \item[] Question: Does the paper fully disclose all the information needed to reproduce the main experimental results of the paper to the extent that it affects the main claims and/or conclusions of the paper (regardless of whether the code and data are provided or not)?
    \item[] Answer: \answerYes{} % Replace by \answerYes{}, \answerNo{}, or \answerNA{}.
    \item[] Justification: We provide the experiment code in our supplementary materials and details about our settings in Section~\ref{apx:exp}.
    \item[] Guidelines:
    \begin{itemize}
        \item The answer NA means that the paper does not include experiments.
        \item If the paper includes experiments, a No answer to this question will not be perceived well by the reviewers: Making the paper reproducible is important, regardless of whether the code and data are provided or not.
        \item If the contribution is a dataset and/or model, the authors should describe the steps taken to make their results reproducible or verifiable. 
        \item Depending on the contribution, reproducibility can be accomplished in various ways. For example, if the contribution is a novel architecture, describing the architecture fully might suffice, or if the contribution is a specific model and empirical evaluation, it may be necessary to either make it possible for others to replicate the model with the same dataset, or provide access to the model. In general. releasing code and data is often one good way to accomplish this, but reproducibility can also be provided via detailed instructions for how to replicate the results, access to a hosted model (e.g., in the case of a large language model), releasing of a model checkpoint, or other means that are appropriate to the research performed.
        \item While NeurIPS does not require releasing code, the conference does require all submissions to provide some reasonable avenue for reproducibility, which may depend on the nature of the contribution. For example
        \begin{enumerate}
            \item If the contribution is primarily a new algorithm, the paper should make it clear how to reproduce that algorithm.
            \item If the contribution is primarily a new model architecture, the paper should describe the architecture clearly and fully.
            \item If the contribution is a new model (e.g., a large language model), then there should either be a way to access this model for reproducing the results or a way to reproduce the model (e.g., with an open-source dataset or instructions for how to construct the dataset).
            \item We recognize that reproducibility may be tricky in some cases, in which case authors are welcome to describe the particular way they provide for reproducibility. In the case of closed-source models, it may be that access to the model is limited in some way (e.g., to registered users), but it should be possible for other researchers to have some path to reproducing or verifying the results.
        \end{enumerate}
    \end{itemize}

\item {\bf Open access to data and code}
    \item[] Question: Does the paper provide open access to the data and code, with sufficient instructions to faithfully reproduce the main experimental results, as described in supplemental material?
    \item[] Answer: \answerYes{} % Replace by \answerYes{}, \answerNo{}, or \answerNA{}.
    \item[] Justification: We provide the experiment code in our supplementary materials and all data used in our experiments are cited with public access. 
    \item[] Guidelines:
    \begin{itemize}
        \item The answer NA means that paper does not include experiments requiring code.
        \item Please see the NeurIPS code and data submission guidelines (\url{https://nips.cc/public/guides/CodeSubmissionPolicy}) for more details.
        \item While we encourage the release of code and data, we understand that this might not be possible, so “No” is an acceptable answer. Papers cannot be rejected simply for not including code, unless this is central to the contribution (e.g., for a new open-source benchmark).
        \item The instructions should contain the exact command and environment needed to run to reproduce the results. See the NeurIPS code and data submission guidelines (\url{https://nips.cc/public/guides/CodeSubmissionPolicy}) for more details.
        \item The authors should provide instructions on data access and preparation, including how to access the raw data, preprocessed data, intermediate data, and generated data, etc.
        \item The authors should provide scripts to reproduce all experimental results for the new proposed method and baselines. If only a subset of experiments are reproducible, they should state which ones are omitted from the script and why.
        \item At submission time, to preserve anonymity, the authors should release anonymized versions (if applicable).
        \item Providing as much information as possible in supplemental material (appended to the paper) is recommended, but including URLs to data and code is permitted.
    \end{itemize}

\item {\bf Experimental Setting/Details}
    \item[] Question: Does the paper specify all the training and test details (e.g., data splits, hyperparameters, how they were chosen, type of optimizer, etc.) necessary to understand the results?
    \item[] Answer: \answerYes{} % Replace by \answerYes{}, \answerNo{}, or \answerNA{}.
    \item[] Justification: All experimental details are included in Appendix~\ref{apx:exp}.
    \item[] Guidelines:
    \begin{itemize}
        \item The answer NA means that the paper does not include experiments.
        \item The experimental setting should be presented in the core of the paper to a level of detail that is necessary to appreciate the results and make sense of them.
        \item The full details can be provided either with the code, in appendix, or as supplemental material.
    \end{itemize}

\item {\bf Experiment Statistical Significance}
    \item[] Question: Does the paper report error bars suitably and correctly defined or other appropriate information about the statistical significance of the experiments?
    \item[] Answer: \answerYes{} % Replace by \answerYes{}, \answerNo{}, or \answerNA{}.
    \item[] Justification: All results are provided with 1 standard deviation error bar, which is gathered over 100 independent trials. 
    \item[] Guidelines:
    \begin{itemize}
        \item The answer NA means that the paper does not include experiments.
        \item The authors should answer "Yes" if the results are accompanied by error bars, confidence intervals, or statistical significance tests, at least for the experiments that support the main claims of the paper.
        \item The factors of variability that the error bars are capturing should be clearly stated (for example, train/test split, initialization, random drawing of some parameter, or overall run with given experimental conditions).
        \item The method for calculating the error bars should be explained (closed form formula, call to a library function, bootstrap, etc.)
        \item The assumptions made should be given (e.g., Normally distributed errors).
        \item It should be clear whether the error bar is the standard deviation or the standard error of the mean.
        \item It is OK to report 1-sigma error bars, but one should state it. The authors should preferably report a 2-sigma error bar than state that they have a 96\% CI, if the hypothesis of Normality of errors is not verified.
        \item For asymmetric distributions, the authors should be careful not to show in tables or figures symmetric error bars that would yield results that are out of range (e.g. negative error rates).
        \item If error bars are reported in tables or plots, The authors should explain in the text how they were calculated and reference the corresponding figures or tables in the text.
    \end{itemize}

\item {\bf Experiments Compute Resources}
    \item[] Question: For each experiment, does the paper provide sufficient information on the computer resources (type of compute workers, memory, time of execution) needed to reproduce the experiments?
    \item[] Answer: \answerYes{} % Replace by \answerYes{}, \answerNo{}, or \answerNA{}.
    \item[] Justification: The compute resources used for our experiments are detailed in Appendix~\ref{apx:exp_settings}.
    \item[] Guidelines:
    \begin{itemize}
        \item The answer NA means that the paper does not include experiments.
        \item The paper should indicate the type of compute worker CPU or GPU, internal cluster, or cloud provider, including relevant memory and storage.
        \item The paper should provide the amount of compute required for each of the individual experimental runs as well as estimate the total compute. 
        \item The paper should disclose whether the full research project required more compute than the experiments reported in the paper (e.g., preliminary or failed experiments that didn't make it into the paper). 
    \end{itemize}
    
\item {\bf Code Of Ethics}
    \item[] Question: Does the research conducted in the paper conform, in every respect, with the NeurIPS Code of Ethics \url{https://neurips.cc/public/EthicsGuidelines}?
    \item[] Answer: \answerYes{} % Replace by \answerYes{}, \answerNo{}, or \answerNA{}.
    \item[] Justification: The authors have read the NeurIPS Code of Ethics and affirm that this work conforms with it in every respect.
    \item[] Guidelines:
    \begin{itemize}
        \item The answer NA means that the authors have not reviewed the NeurIPS Code of Ethics.
        \item If the authors answer No, they should explain the special circumstances that require a deviation from the Code of Ethics.
        \item The authors should make sure to preserve anonymity (e.g., if there is a special consideration due to laws or regulations in their jurisdiction).
    \end{itemize}

\item {\bf Broader Impacts}
    \item[] Question: Does the paper discuss both potential positive societal impacts and negative societal impacts of the work performed?
    \item[] Answer: \answerYes{} % Replace by \answerYes{}, \answerNo{}, or \answerNA{}.
    \item[] Justification: We briefly discuss why our work does not have any negative societal impacts in~\ref{apx:broad_impact}.
    \item[] Guidelines:
    \begin{itemize}
        \item The answer NA means that there is no societal impact of the work performed.
        \item If the authors answer NA or No, they should explain why their work has no societal impact or why the paper does not address societal impact.
        \item Examples of negative societal impacts include potential malicious or unintended uses (e.g., disinformation, generating fake profiles, surveillance), fairness considerations (e.g., deployment of technologies that could make decisions that unfairly impact specific groups), privacy considerations, and security considerations.
        \item The conference expects that many papers will be foundational research and not tied to particular applications, let alone deployments. However, if there is a direct path to any negative applications, the authors should point it out. For example, it is legitimate to point out that an improvement in the quality of generative models could be used to generate deepfakes for disinformation. On the other hand, it is not needed to point out that a generic algorithm for optimizing neural networks could enable people to train models that generate Deepfakes faster.
        \item The authors should consider possible harms that could arise when the technology is being used as intended and functioning correctly, harms that could arise when the technology is being used as intended but gives incorrect results, and harms following from (intentional or unintentional) misuse of the technology.
        \item If there are negative societal impacts, the authors could also discuss possible mitigation strategies (e.g., gated release of models, providing defenses in addition to attacks, mechanisms for monitoring misuse, mechanisms to monitor how a system learns from feedback over time, improving the efficiency and accessibility of ML).
    \end{itemize}
    
\item {\bf Safeguards}
    \item[] Question: Does the paper describe safeguards that have been put in place for responsible release of data or models that have a high risk for misuse (e.g., pretrained language models, image generators, or scraped datasets)?
    \item[] Answer: \answerNA{} % Replace by \answerYes{}, \answerNo{}, or \answerNA{}.
    \item[] Justification: \answerNA{}
    \item[] Guidelines:
    \begin{itemize}
        \item The answer NA means that the paper poses no such risks.
        \item Released models that have a high risk for misuse or dual-use should be released with necessary safeguards to allow for controlled use of the model, for example by requiring that users adhere to usage guidelines or restrictions to access the model or implementing safety filters. 
        \item Datasets that have been scraped from the Internet could pose safety risks. The authors should describe how they avoided releasing unsafe images.
        \item We recognize that providing effective safeguards is challenging, and many papers do not require this, but we encourage authors to take this into account and make a best faith effort.
    \end{itemize}

\item {\bf Licenses for existing assets}
    \item[] Question: Are the creators or original owners of assets (e.g., code, data, models), used in the paper, properly credited and are the license and terms of use explicitly mentioned and properly respected?
    \item[] Answer: \answerYes{} % Replace by \answerYes{}, \answerNo{}, or \answerNA{}.
    \item[] Justification: All datasets used in our experiments are properly cited with proper open access license, see Section~\ref{sec:exp}.
    \item[] Guidelines:
    \begin{itemize}
        \item The answer NA means that the paper does not use existing assets.
        \item The authors should cite the original paper that produced the code package or dataset.
        \item The authors should state which version of the asset is used and, if possible, include a URL.
        \item The name of the license (e.g., CC-BY 4.0) should be included for each asset.
        \item For scraped data from a particular source (e.g., website), the copyright and terms of service of that source should be provided.
        \item If assets are released, the license, copyright information, and terms of use in the package should be provided. For popular datasets, \url{paperswithcode.com/datasets} has curated licenses for some datasets. Their licensing guide can help determine the license of a dataset.
        \item For existing datasets that are re-packaged, both the original license and the license of the derived asset (if it has changed) should be provided.
        \item If this information is not available online, the authors are encouraged to reach out to the asset's creators.
    \end{itemize}

\item {\bf New Assets}
    \item[] Question: Are new assets introduced in the paper well documented and is the documentation provided alongside the assets?
    \item[] Answer: \answerNA{} % Replace by \answerYes{}, \answerNo{}, or \answerNA{}.
    \item[] Justification: \answerNA{}
    \item[] Guidelines:
    \begin{itemize}
        \item The answer NA means that the paper does not release new assets.
        \item Researchers should communicate the details of the dataset/code/model as part of their submissions via structured templates. This includes details about training, license, limitations, etc. 
        \item The paper should discuss whether and how consent was obtained from people whose asset is used.
        \item At submission time, remember to anonymize your assets (if applicable). You can either create an anonymized URL or include an anonymized zip file.
    \end{itemize}

\item {\bf Crowdsourcing and Research with Human Subjects}
    \item[] Question: For crowdsourcing experiments and research with human subjects, does the paper include the full text of instructions given to participants and screenshots, if applicable, as well as details about compensation (if any)? 
    \item[] Answer: \answerNA{} % Replace by \answerYes{}, \answerNo{}, or \answerNA{}.
    \item[] Justification: \answerNA{}
    \item[] Guidelines:
    \begin{itemize}
        \item The answer NA means that the paper does not involve crowdsourcing nor research with human subjects.
        \item Including this information in the supplemental material is fine, but if the main contribution of the paper involves human subjects, then as much detail as possible should be included in the main paper. 
        \item According to the NeurIPS Code of Ethics, workers involved in data collection, curation, or other labor should be paid at least the minimum wage in the country of the data collector. 
    \end{itemize}

\item {\bf Institutional Review Board (IRB) Approvals or Equivalent for Research with Human Subjects}
    \item[] Question: Does the paper describe potential risks incurred by study participants, whether such risks were disclosed to the subjects, and whether Institutional Review Board (IRB) approvals (or an equivalent approval/review based on the requirements of your country or institution) were obtained?
    \item[] Answer: \answerNA{} % Replace by \answerYes{}, \answerNo{}, or \answerNA{}.
    \item[] Justification: \answerNA{}
    \item[] Guidelines:
    \begin{itemize}
        \item The answer NA means that the paper does not involve crowdsourcing nor research with human subjects.
        \item Depending on the country in which research is conducted, IRB approval (or equivalent) may be required for any human subjects research. If you obtained IRB approval, you should clearly state this in the paper. 
        \item We recognize that the procedures for this may vary significantly between institutions and locations, and we expect authors to adhere to the NeurIPS Code of Ethics and the guidelines for their institution. 
        \item For initial submissions, do not include any information that would break anonymity (if applicable), such as the institution conducting the review.
    \end{itemize}

\end{enumerate}

\end{document}